\newtheorem{myDef}{Definition}
\newtheorem{remark}{Remark}
\newtheorem{corollary}{Corollary}
\def\1{mathbb{1}}
\def\0{{\bf 0}}
\def\1{{\bf 1}}
\definecolor{purple}{rgb}{0.56,0.27,0.68}
\definecolor{red}{rgb}{0.95,0.4,0.4}
\definecolor{purered}{rgb}{1,0,0}
\definecolor{blue}{rgb}{0.4,0.4,0.95}
\definecolor{darkblue}{rgb}{0,0,0.8}
\definecolor{grey}{rgb}{0.6,0.6,0.6}
\definecolor{col1}{RGB}{232, 161, 148}
\definecolor{col2}{RGB}{148, 187, 232}
\definecolor{col3}{RGB}{206, 239, 255}
\definecolor{lightgrey}{rgb}{0.85,0.85,0.85}
\definecolor{lightlightgrey}{rgb}{0.9,0.9,0.9}
\definecolor{verylightBG}{rgb}{0.9,0.99,0.99}
\definecolor{darkgreen}{rgb}{0.3, 0.75, 0.3}
\DeclareRobustCommand\onedot{\futurelet\@let@token\@onedot}
\def\@onedot{\ifx\@let@token.\else.\null\fi\xspace}
\def\eg{\emph{e.g}\onedot} 
\def\ie{\emph{i.e}\onedot}
\newtheorem{theorem}{Theorem}
\newtheorem{lemma}{Lemma}
\definecolor{remark}{rgb}{1,.5,0} 
\definecolor{citecolor}{rgb}{0,0.443,0.737} 
\definecolor{linkcolor}{rgb}{0.956,0.298,0.235} 
\title{FedHPL: Efficient Heterogeneous Federated Learning with Prompt Tuning and Logit Distillation}
\author[1]{Yuting Ma}
\author[2$^{\dagger}$]{Lechao Cheng}
\author[2]{Yaxiong Wang}
\author[3]{Zhun Zhong}
\author[1$^{\dagger}$]{Xiaohua Xu}
\author[2]{Meng Wang}
\affil[1]{University of Science and Technology of China}
\affil[2]{Hefei University of Technology}
\affil[3]{University of Nottingham}
\begin{document}

\maketitle

\begin{abstract}
  Federated learning (FL) is a popular privacy-preserving paradigm that enables distributed clients to collaboratively train models with a central server while keeping raw data locally.
  In practice, distinct model architectures, varying data distributions, and limited resources across local clients inevitably cause model performance degradation and a slowdown in convergence speed.
  However, existing FL methods can only solve some of the above heterogeneous challenges and have obvious performance limitations.
  Notably, a unified framework has not yet been explored to overcome these challenges.
  Accordingly, we propose FedHPL, a parameter-efficient unified $\textbf{Fed}$erated learning framework for $\textbf{H}$eterogeneous settings based on $\textbf{P}$rompt tuning and $\textbf{L}$ogit distillation.
  Specifically, we employ a local prompt tuning scheme that leverages a few learnable visual prompts to efficiently fine-tune the frozen pre-trained foundation model for downstream tasks, thereby accelerating training and improving model performance under limited local resources and data heterogeneity.
  Moreover, we design a global logit distillation scheme to handle the model heterogeneity and guide the local training.
  In detail, we leverage logits to implicitly capture local knowledge and design a weighted knowledge aggregation mechanism to generate global client-specific logits. 
  We provide a theoretical guarantee on the generalization error bound for FedHPL.
  The experiments on various benchmark datasets under diverse settings of models and data demonstrate that our framework outperforms state-of-the-art FL approaches, with less computation overhead and training rounds.

\end{abstract}
\section{Introduction}
\label{sec:intro}

Federated learning (FL)~\cite{FedAvg} is a privacy-preserving machine learning paradigm that enables decentralized parties to collaboratively train models in a distributed manner. 
This is achieved by sharing local model updates without exposing the underlying private data to the central server. 
Although FL has gained significant traction in homogeneous settings, it encounters challenges~\cite{fl_chall, hetepro1, DFRD, FedFed} in heterogeneous settings over real-world clients.
\textit{Data heterogeneity} is a major challenge in FL, primarily caused by the imbalanced data distribution among clients.
Most previous works~\cite{FedDC, SCAFFOLD, MOON, FedProx} have focused on addressing this challenge by optimizing the loss objective and aggregation process to reduce the discrepancy between the global distribution and clients' distributions.
However, these algorithms usually converge slowly and require more computing resources because they need to train the entire model from scratch.

A feasible mechanism~\cite{SGPT, FedPR, pFedPG} to speed up training and reduce trainable parameters over limited local resources is to select a pre-trained foundation model as the backbone of the local model and fine-tune it with visual prompt tuning (VPT)~\cite{VPT}.
Specifically, clients freeze the large pre-trained backbone and only need to employ a few learnable task-specific prompts and a simple classification head for fine-tuning, which accelerates the convergence speed and reduces the computational burden with fewer trainable parameters.
However, these studies only consider homogeneous clients and cannot generalize algorithms to clients with distinct local resources and different pre-trained models.

Recently, personalized federated learning (pFL)~\cite{FedRep, PerFedAVG, PGFed, pFedME}, which considers the varying edge resource, has attracted research interests.
In pFL, clients adopt personalized models for local training to maximize the utilization of local resources and adapt to local data distributions.
In particular, if clients select different model structures, it constitutes another challenge in FL: \textit{model heterogeneity}.
The varying structures and embedding dimensions in model heterogeneity, leading to difficulties of global parameter aggregation in FL.
To handle the problem, existing efforts generally fall into two categories:
1) exploiting knowledge distillation~\cite{FedGEMS, FedHKT, FedMD} to implicitly capture and transfer local distributions among clients instead of model updates;
2) aligning model architectures (\eg adding projection layer or additional model structure)~\cite{FedProto, FedGH, FedTGP, FedGEN} for model parameter aggregation.
However, the above methods usually need more public resources and only consider model heterogeneity of small models, causing limited representation ability and performance improvement.
But directly applying the large foundation models usually requires more computing resources and training rounds.
Hence, we raise a question:
\textit{How to implement federated learning in the setting of model heterogeneity and data heterogeneity among clients, while utilizing large foundation models in limited local resources and training rounds to improve the local model performance?}

To answer this question, we propose a parameter-efficient unified FL framework named FedHPL in various settings of model architecture and data distribution as shown in Figure~\ref{fig:tesaer}.
Then we introduce our framework from two perspectives: \textit{local prompt tuning} and \textit{global logit distillation}.
From the local perspective, we leverage the large pre-trained foundation model as the backbone of the local model and use its strong representations to alleviate performance degradation and adapt data heterogeneity.
Moreover, we freeze the backbone and use a few learnable task-specific prompts along with a linear layer to fine-tune the local model over the limited local resources, further reducing the computing overhead and accelerating training speed.
From the global perspective, we employ logit distillation~\cite{FedHE}, which solely depends on the number of labels, to handle model heterogeneity in FL.
Specifically:
1) Clients only upload correctly predicted logits which implicitly represent local empirical knowledge distribution.
2) For generating global per-class logit for each client on the server side, we design a weighted knowledge aggregation mechanism based on the proportion of latent dimensions among local models.
3) The global client-specific logits can transfer knowledge and guide local training. 
Furthermore, clients can average the uploading logits by class to reduce the communication overhead.
Then, we provide a generalization error bound for FedHPL from prompt tuning and logit distillation, further demonstrating the impact of components in FedHPL on model performance, and accordingly investigate these components with experiments.

\begin{figure}
    \centering 
    \includegraphics[width=0.98\textwidth]{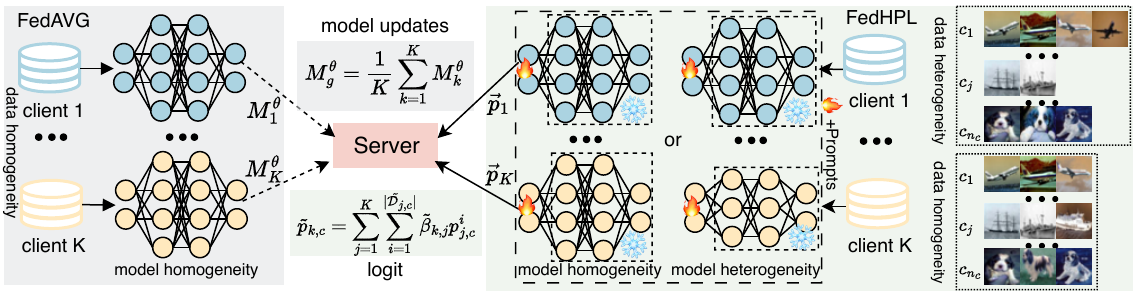}
    \caption{FedAVG only considers the data homogeneity and model homogeneity based on parameter aggregation of models and trains the entire model parameters. Compared to it, FedHPL further considers the data heterogeneity and model heterogeneity based on logit distillation and only trains a few parameters with the frozen backbone by prompt tuning.}
    \label{fig:tesaer}
\vspace{-2em}
\end{figure}

\textbf{Contributions.}
In conclusion, our main contributions are:
1) We propose a novel parameter-efficient unified FL framework named FedHPL to address the challenge of model heterogeneity and data heterogeneity with limited local resources. We leverage local prompt tuning with pre-trained backbones and design a global logit distillation scheme with a weighted knowledge aggregation mechanism.
2) We present a theoretical guarantee on the generalization error bound to show the influence of each component in FedHPL.
3) Experiments on three benchmark datasets in various settings show that FedHPL outperforms SOTA methods, with fewer trainable parameters and communication rounds.

\section{Related work}

\textbf{Federated learning.}
FedAVG~\cite{FedAvg} is a distributed machine learning paradigm in which edge devices train models locally and upload model updates to a central server for parameter aggregation in collaborative learning. 
However, in real-world scenarios, data and models are usually heterogeneous, leading to challenges such as performance degradation. 
Some approaches~\cite{FedDyn, SCAFFOLD, MOON, FedProx} add penalty terms to the loss function or align representations to address heterogeneous challenges, but they require that all clients and the central server share the same model structure. 
Alternatively, other studies~\cite{FedPer, PGFed, Flow} address the model heterogeneity challenge by designing customized models on the client side, known as pFL. 
However, it requires more convergence epochs to adapt simple model heterogeneity and often results in limited performance improvement. 
In contrast to them, our method selects appropriate pre-trained models as the backbone of local models and fine-tunes them with local prompt tuning and global logit distillation instead of designing customized models and aggregating model parameters in pFL, further handling both model heterogeneity and data heterogeneity. 

\textbf{Visual prompt tuning in federated learning.}
With the advancement of available public pre-trained foundation models~\cite{ViT, LM1, LM2, LM3}, VPT~\cite{VPT} and its variants~\cite{wang2024revisiting, yoo2023improving} have gained increasing popularity in federated learning. 
VPT exploits a large pre-trained model, a few task-specific learnable prompts, and a linear head for adapting downstream tasks.
Recently, studies~\cite{FedPR, pFedPT, pFedPG} have applied VPT to alleviate the model degradation due to data heterogeneity and reduce training time. 
For instance, pFedPG~\cite{pFedPG} trains personalized visual prompts on local devices with frozen pre-trained backbones and observes the local optimization direction to generate client-specific visual prompts through a prompt generator under data heterogeneity. 
However, these methods do not address the challenge of model heterogeneity, nor do they provide a theoretical bound for efficient fine-tuning in FL. 
This is a critical aspect where our work markedly differs from these approaches. 

\textbf{Knowledge distillation in federated learning.}
Knowledge distillation (KD)~\cite{KD, wang2023improving} involves transferring knowledge from a larger pre-trained ``teacher'' model to a smaller ``student'' model, where the student learns to emulate the teacher's behavior by producing similar outputs on a shared dataset. 
This technique leverages knowledge (\eg feature embeddings or logits) transfer across clients rather than model parameter aggregation, making it a viable FL approach~\cite{EKD4, FedHKT, FedMD, FedDF, DFRD, FedGEN} over model heterogeneity. 
Traditional methods (\eg FedHKT~\cite{FedHKT}, FedMD~\cite{FedMD}, and FedDF~\cite{FedDF}) need to exploit public data to guide knowledge transfer between a central server and local clients.
Recently, a data-free manner in knowledge distillation~\cite{FedHE, DFRD, FedGEN} has emerged in FL.
For example, FedGen~\cite{FedGEN} introduces a lightweight global generator over model heterogeneity among clients and produces synthetic data to capture the global distribution, thereby eliminating the need for public datasets. 
The above methods tend to introduce shared datasets or additional model architecture to adapt to model heterogeneity, whereas we only select qualified logits without public data or extra models to facilitate knowledge transfer over model heterogeneity.
\section{Proposed method}

\subsection{Overview}

As shown in Figure~\ref{fig:framework}, we consider $K$ clients in FL, each client $k$ has a local private dataset $\mathcal{D}_k=\{(x_{k}^{i}, y_{k}^{i})\}_{i=1}^{|\mathcal{D}_k|}$, where $|\mathcal{D}_k|$ is the number of local samples, $x$ is the sample and $y$ is the corresponding label. 
A local model can be divided into a backbone $F(\cdot;\omega)$ and a classification head $H(\cdot;\theta)$, parameterized as $\omega$ and $\theta$ respectively. 
$L_{E,k}(\cdot)$ is used to segment and embed each image $x_k^i$ into latent space, then obtain the collection of patch embeddings.
The local models in FedHPL are trained based on \textit{local prompt tuning} and \textit{global logit distillation} with a weighted knowledge aggregation mechanism.
During the phase of \textit{local prompt tuning}, considering limited local resources (\eg computational resources), clients load different scales of the large pre-trained foundation model from the central server to the local backbone $F$ and perform downstream tasks.
To further reduce trainable parameters, clients freeze the parameter of the backbone (\ie $\omega$), which is denoted as $\omega^*$, and perform classification tasks with trainable prompts and the head $H$.
The loss function of client $k$ for local prompt tuning is based on the cross-entropy loss function $\ell^{ce}$:
\begin{equation}
  \mathcal{L}_k^{pt} = \frac{1}{|\mathcal{D}_k|}\sum_{i=1}^{|\mathcal{D}_k|} \ell^{ce}(H_k(F_k([[cls]_k^i,\mathsf{P}_k,L_{E,k}(x_k^i)];\omega_k^*);\theta_k),y_{k}^{i}),
\label{eq:local_objective}
\end{equation}
where $[cls]_{k}^{i}$ is the token for classification tasks and $\mathsf{P}_k$ is the learnable prompts.
During the phase of \textit{global logit distillation}, clients upload qualified local logits and the server employ a weighted knowledge aggregation mechanism to generate global logits over model heterogeneity.
Finally, with the local logit $p_k^i$ and global client-specific logit $\tilde{p}_{k,c}$ corresponding to each class $c$ which can refer to Eq. (\ref{eq:ClS_Head}) and Eq. (\ref{eq:sum_logit1}), we formally design an objective function for client $k$ in various FL settings:
\begin{equation}
  \underset{\mathsf{P}_k, \theta_k} {arg\min} [\mathcal{L}_k := \mathcal{L}_{k}^{pt} + \gamma \mathcal{L}_{k}^{kd} = \mathcal{L}_{k}^{pt} + \frac{\gamma}{|\mathcal{D}_k|} \sum_{c=1}^{n_c} \sum_{\forall(p_{k}^{i},y_{k}^{i}) \in \mathcal{D}_k, y_{k}^{i}=c} \ell^{kd}(\tilde{p}_{k,c},p_{k}^{i})],
\label{eq:global_objective}
\end{equation}
where $n_{c}$ is the number of labels and $\gamma$ controls the trade-off between $\mathcal{L}_{k}^{pt}$ and $\mathcal{L}_{k}^{kd}$ ($\ell^{kd}$ is shown in Eq. (\ref{eq:KD_loss})).
As a result, $\mathsf{P}_k$ and $H_k$ can be optimized by gradient descent with learning rate $\eta$.

\begin{figure}
    \centering
    \includegraphics[width=0.98\textwidth]{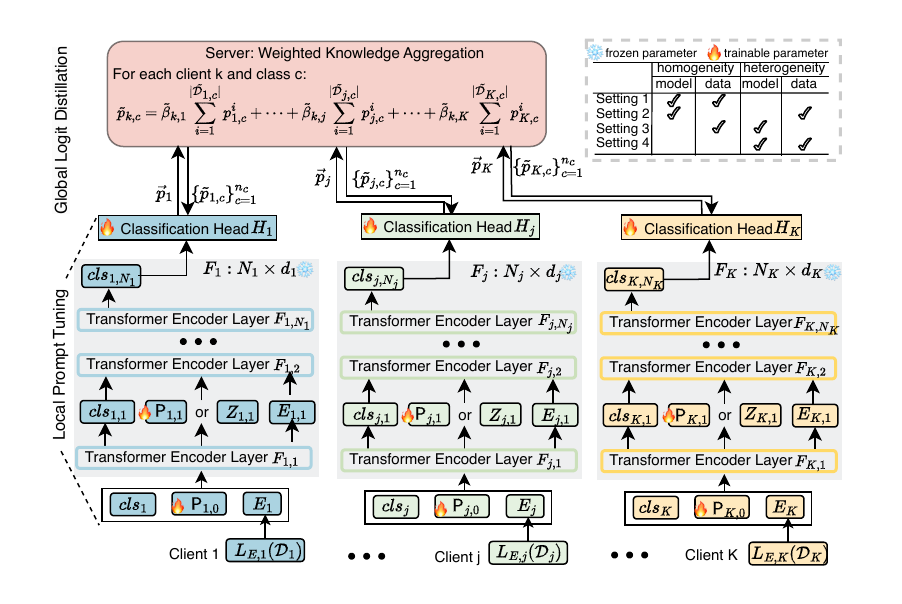}
    \caption{The FedHPL framework over various heterogeneous FL settings including homogeneity. The backbone among clients has distinct layer numbers ($N_k$) and embedding dimensions ($d_k$). FedHPL consists of local prompt tuning and global logit distillation with a weighted knowledge aggregation mechanism. Clients upload all correctly predicted logits which only related to the number of labels $n_c$ to a server, and the server generates global per-class knowledge for each client.}
    \label{fig:framework}
\vspace{-1em}
\end{figure}

\subsection{Local prompt tuning}
Before performing classification tasks, each client downloads the appropriate pre-trained parameters of the foundation model from the server to $F_k$ and keeps the backbone parameters frozen throughout the entire training phase. 
Then, each client injects a small number of learnable continuous visual parameters, denoted as prompts $\mathsf{P}_k$, into the input space of the backbone (typically using Transformer structures).
These prompts can encode client-specific data distribution and guide the local training with the frozen backbone to adapt downstream tasks.
After local prompt tuning based on VPT~\cite{VPT}, the head output (\ie logit) can implicitly capture local distribution knowledge.

Specifically, an image $x_k^i$ is divided into $M$ patches and embedded into a $d_k$-dimensional latent space with position encoding, then gets $E_{k}^i=L_{E,k}(x_k^i), E_{k}^i \in \mathbb{R}^{M \times d_k}$.
Subsequently, these patch embeddings $E_{k}^i$ are stacked and concatenated together with an initial classification token $[$cls$]_{k}^i \in \mathbb{R}^{d_k}$ and $\mathsf{P}_{k,0} \in \mathbb{R}^{n \times d_k}$, where $n$ is the number of prompts of a backbone layer.
Then feed them into the $1$-th backbone layer:
\begin{equation}
     [[cls]_{k,1}^i, Z_{k,1}^i, E_{k,1}^i] = F_{k,1}([[cls]_{k}^i, \mathsf{P}_{k,0}, E_{k}^i];\omega_{k,1}^*).
\label{eq:Transformer}
\end{equation}
We use $a$ to represent the layer index of the backbone $F_k$ (\ie the $a$-th backbone layer $F_{k,a}$ with the frozen parameter $\omega_{k,a}^*$), where $a \in [1, N_k]$ and $N_k$ is the number of backbone layers.
Thus, $[cls]_{k,a}^i$ and $E_{k,a}^i$ represent the subsequent generated classification token and patch embeddings computed by $F_{k,a}$.
$Z_{k,a}^i \in \mathbb{R}^{n \times d_k}$ is the latent features.
Due to different VPT variants, there have two different insertion positions for prompts in subsequent layers ($a>1$):
\begin{equation}
     [[cls]_{k,a}^i, Z_{k,a}^i, E_{k,a}^i] \overset{\text{shallow}}{=} F_{k,a}([[cls]_{k,a-1}^i, Z_{k,a-1}^i, E_{k,a-1}^i];\omega_{k,a}^*); \\
\label{eq:Transformer1}
\end{equation}
\begin{equation}
     [[cls]_{k,a}^i, Z_{k,a}^i, E_{k,a}^i] \overset{\text{deep}}{=} F_{k,a}([[cls]_{k,a-1}^i, \mathsf{P}_{k,a-1}, E_{k,a-1}^i];\omega_{k,a}^*). \\
\label{eq:Transformer2}
\end{equation}
If FedHPL adopts VPT-shallow, prompts $\mathsf{P}_k$ (\ie $\mathsf{P}_{k,0}$) are only inserted into the first backbone layer, whereas in VPT-deep, prompts are inserted into each layer (\ie $\mathsf{P}_k=\{\mathsf{P}_{k,a}\}_{a=0}^{N_k-1}, \mathsf{P}_{k,a} \in \mathbb{R}^{n \times d_k}$).
Then, the final token $[cls]_{k,N_k}^{i}$ is then fed into the classification head to generate the predicted logit:
\begin{equation}
  p_k^i = H_k([cls]_{k,N_k}^i;\theta_{k}).
\label{eq:ClS_Head}
\end{equation}
As only $\mathsf{P}_k$ and $H_k$ need to be updated, clients reduce training burdens.
Meanwhile, the well-trained backbone helps clients generate strong feature embeddings over limited data samples and data heterogeneity, thereby mitigating performance degradation and shortening the training time.

\subsection{Global logit distillation}
Since clients have different latent dimensions in model heterogeneity, traditional parameter aggregation on model updates or prompts is not suitable for collaborative learning.
To solve this problem, we exploit knowledge distillation based on logits, which only related to the number of classes, and propose a weighted knowledge aggregation mechanism based on qualified logits.
We now describe the global logit distillation from \textit{client uploading}, \textit{server aggregation}, and \textit{logit distillation}.

\textbf{Client uploading.} 
Client $k$ transfers the correctly predicted logits with corresponding labels as the local knowledge $\vec{p}_{k}:=\{p_{k}^{i}, y_{k}^{i}\}_{i=1}^{|\tilde{\mathcal{D}}_{k}|}$ to the central server $S$ after local training, where $|\tilde{\mathcal{D}}_{k}|$ is the number of correct logits.
For those logits that are misclassified, \ie $y_{k}^{i}\ne arg\max_{c\in[0,n_c-1]}[p_{k}^{i}]_c$, we consider them as untrustworthy and do not upload them. 

\textbf{Server aggregation.} 
After collecting all local logits from clients, the server generates the global logits with a weighted knowledge aggregation mechanism.
Generally, if a local model predicts more accurately in a certain label $c$, then the corresponding uploading count of correct logits in this class $|\tilde{\mathcal{D}}_{k,c}|$ will relatively increases, thus performing a higher influence on this class of global aggregation~\cite{confidence}.
Therefore, we do not need to specially design a weight for the quantity and quality of uploading logits in the global aggregation.
However, it is necessary to focus on the impact of the model heterogeneity on global aggregation.
Inspired by~\cite{FedHKT, kornblith2019similarity}, we consider that models, that have similar architectures (\eg latent dimensions), tend to learn comparable feature representations, and consequently capture similar knowledge distribution within logits for the same label.
Accordingly, we intend to design the weight coefficient of model heterogeneity for the global logit aggregation from the latent dimension.
Formally, given the latent embedding dimension $d_j$ of the model in client $j$, the weight $\beta_{k,j}$, which determines the contribution of any client $j$ to the global client-specific logits in client $k$, is denoted as:
\begin{equation}
    \beta_{k,j} = \min ({d_{k}}/{d_{j}},{d_{j}}/{d_{k}}), \forall j \in \{1, \cdots, K\}.
\label{eq:alpha_compute}
\end{equation}
A higher $\beta_{k,j}$ implies the closer structures of two models, thereby leading to a more efficient aggregation.
Then, $S$ generates the global per-class logit for client $k$ with the weight coefficient:
\begin{equation}
\tilde{p}_{k,c}=\frac{\sum_{j=1}^{K}\beta_{k,j}\sum_{\forall (p_{j}^{i},y_{j}^{i})\in \vec{p}_j, y_{j}^{i}=c}p_{j}^{i}}{1+\sum_{j=1}^{K}\beta_{k,j}|{\mathcal{D}}_{j,c}|}=\sum_{j=1}^{K}\tilde{\beta}_{k,j}\sum_{i=1}^{|\mathcal{\tilde{D}}_{j,c}|}p_{j,c}^{i},
\label{eq:sum_logit1}
\end{equation}
where $\tilde{\beta}_{k,j} = \beta_{k,j}/({1+\sum_{j=1}^{K}\beta_{k,j}|{\mathcal{D}}_{j,c}|})$ is a constant and $|\mathcal{D}_{j,c}|$ represents the number of samples ($p_{j,c}^{i}$) of client $j$ in class $c$.
We add one to the denominator to avoid the extreme case where all $|{\mathcal{D}}_{j,c}|$ equals $0$.
Moreover, the global per-class logit $\tilde{p}_{k,c}$ mixes the local distribution of the certain class $c$ among clients rather than aggregating parameters trained on all labels, further alleviating the data heterogeneity caused by imbalanced class distribution.
Particularly, the weighted aggregation mechanism is also suitable in homogeneous settings. 

\paragraph{Logit distillation.}
The global client-specific logits fuse the local knowledge of clients for each class and thus guide the local training without uploading private data.
The local optimization for client $k$ is based on global logits under corresponding label $c$ (\ie $y_k^i$) and the distillation loss:
\begin{equation}
    \ell^{kd} = {KL}(\frac{exp(\tilde{p}_{k,c}/\mathcal{T})}{{\textstyle \sum_{c'=1}^{n_c}}exp([\tilde{p}_{k,c}]_{c'}/\mathcal{T})}\left|\right|\frac{exp(p_{k}^i/\mathcal{T})}{{\textstyle \sum_{c'=1}^{n_c}}exp([p_{k}^i]_{c'}/\mathcal{T})}),
\label{eq:KD_loss}
\end{equation}
where $\mathcal{T}$ is a temperature coefficient in KD and ${KL}$ is the Kullback-Leibler divergence.
Furthermore, clients can average local correctly predicted logits by category and only upload the per-class logits $\{\bar{p}_{k,c}\}_{c=1}^{n_c}$ with the count of logits $|\tilde{\mathcal{D}}_{k,c}|$ for each label $c$ to reduce the communication cost, where
\begin{equation}
    \bar{p}_{j,c} = \frac{1}{|\tilde{\mathcal{D}}_{j,c}|} \textstyle \sum_{\forall (p_{j}^{i},y_{j}^{i})\in \vec{p}_j, y_{j}^{i}=c} p_{j}^{i}.
\label{eq:average_logit}
\end{equation}
Theoretically speaking, $\bar{p}_{j,c}$ and $p_{j,c}$ are equivalent in global aggregation which we shown in Eq. (\ref{eq:appendix_logit}) in Appendix.
In summary, the training detail of FedHPL is shown in Algorithm~\ref{alg:algorithm_name}. 

\begin{algorithm}
\caption{The training procedure of FedHPL}
\label{alg:algorithm_name}
\KwIn{Global rounds $T$; Local epochs $T_c$; Batch size $bs$.}
\KwOut{Optimal parameters $\{\mathsf{P}_k, \theta_k\}_{k=1}^{K}$ for all clients.}
\text{Initialization: load the pre-trained parameter from the server $S$ to $F$ and freeze it as $\omega^*$.}

\For{$t=1, 2, \cdots, T$}{
    \ForEach{client k}{
        Send $\vec{p}_k \gets$ \texttt{LocalTrain($k$,$\{\tilde{p}_{k,c}\}_{c=1}^{n_c}$)} to the server. \\
    }
    $S$ generates the global client-specific logits $\{\tilde{p}_{k,c}\}_{c=1}^{n_c}$ for each class $c$; \hfill{$\rhd$ in Eq. $(\ref{eq:sum_logit1})$} \\
    Return $\{\tilde{p}_{k,c}\}_{c=1}^{n_c}$ to each client k.
}

\SetKwFunction{myFunction}{LocalTrain}
\SetKwProg{myProc}{Function}{}{}
\SetFuncSty{texttt}
\myProc{\myFunction{$k$, $\{\tilde{p}_{k,c}\}_{c=1}^{n_c}$}}{
    \For{$t=1, 2, \cdots, T_c$}{
        \For{batch $b=\{(x_k^i, y_k^i)\}_{i=1}^{bs}$ of $\mathcal{D}_k$}{
            \hbox{Compute $\{[cls]_{k, N_k}^i\}_{i=1}^{bs}$ with $F_k$ and $\{[cls]_{k}^i, \mathsf{P}_{k}, L_{E,k}(x_{k}^i)\}_{i=1}^{bs}$;\hfill{$\rhd$ in Eq. $(\ref{eq:Transformer})$$\sim$$(\ref{eq:Transformer2})$}}
            $\{p_k^i\}_{i=1}^{bs} \gets H_k(\{[cls]_{k,N_k}^i\}_{i=1}^{bs};\theta_{k})$; \\
            $\mathcal{L}_k = \frac{1}{bs}[\sum_{i=1}^{bs}l^{ce}(p_k^i,y_k^i) + \gamma \sum_{c=1}^{n_c} \sum_{\forall(p_{k}^{i},y_{k}^{i}) \in b \wedge (y_{k}^{i}=c)} \ell^{kd}(\tilde{p}_{k,c},p_{k}^{i})]$; \\
            $\mathsf{P}_k  \gets \mathsf{P}_k-\eta\frac{\partial \mathcal{L}_k}{\partial \mathsf{P}_k}$, $\theta_k \gets \theta_k-\eta\frac{\partial \mathcal{L}_k}{\partial \theta_k}$; \\
        }
    }
    \For{batch $b=\{(x_k^i, y_k^i)\}_{i=1}^{bs}$ of $\mathcal{D}_k$}{
            $\forall$ $i$ in $b$, \If{$y_{k}^{i} == arg\max_{c} H_k(F_k([[cls]_k^i,\mathsf{P}_k,L_{E,k}(x_k^i)];\omega_k^*);\theta_{k})_c$}{
                $\vec{p}_k = \vec{p}_k \cup \{p_{k}^{i}, y_{k}^{i}\}$, $|\tilde{\mathcal{D}}_{k,c}|$++, $|\tilde{\mathcal{D}}_{k}|$++;\\
        }
    }
    \Return{$\vec{p}_k: \{p_k^i, y_k^i\}_{i=1}^{|\tilde{\mathcal{D}}_k|}$}\\
}
\end{algorithm}
\vspace{-1em}

\subsection{Generalization error bound}
Here, we investigate the bound of generalization error $\mathbf{R}_{\mathbb{D}_T}(h_k)$ in FedHPL for each model $h_k$ of client $k$ over the test dataset $\mathcal{D}_T$ with its distribution $\mathbb{D}_T$ in arbitrary model and data settings.
With an input space $\mathcal{X}$ and label space $\mathcal{Y}$, a hypothesis $h: \mathcal{X} \to \mathcal{Y}$ is the local model and $\mathcal{H}$ is a hypotheses space on $\mathcal{X}$.
Suppose the local true and local empirical distribution for client $k$ over $\mathcal{X} \times \mathcal{Y}$ as $\mathbb{D}_k$ and $\hat{\mathbb{D}}_k$.
Then we connect the error bound with the local training of client $k$ over the private dataset $\mathcal{D}_k=(X_k,Y_k)$ with $|\mathcal{D}_k|$ samples.
The local training error is from the local prompt tuning error based on the cross-entropy loss and the logit distillation error between the weighted global logits and local logits.
Accordingly, we define the local prompt tuning error as $\mathbf{R}_{\hat{\mathbb{D}}_k}^{ce}(h_k)$ and the difference between the initial model $h_0$ with the frozen pre-trained parameters and the fine-tuned model $h_k$ as $kl(h_k||h_0)$. 
With the distillation loss (restricted by the bound $C_k$), consisting of cross-entropy loss $\ell^{ce}(\cdot,\cdot)$ and information entropy $I(\cdot)$, we present the multi-class generalization error bound for $h_k$ over $\mathcal{D}_T$ in Theorem~\ref{thm:generalization bound} and the proofs can deferred to Appendix~\ref{Appendix1}.
\begin{theorem}
\label{thm:generalization bound}
Suppose that K clients in FedHPL, let $d_{\mathcal{C}_h}(\cdot,\cdot)$ represents the distribution discrepancy between two data distributions.
Given any data distribution $\mathbb{D}_k$ and $\hat{\mathbb{D}}_k$ over client $k$ and the local model $h_k \in \mathcal{H}$ which fine-tuned from $h_0$ (the distribution is $\pi_k$).
Taking any $t>0$ and $\lambda_k=-\log \pi_k$, the generalization error bound for $h_k$ over $\mathcal{D}_T$ holds with the probability of at least $1-\epsilon$:
\begin{equation*}
\begin{aligned}
    &\mathbf{R}_{\mathbb{D}_T}(h_k)  \le \mathbf{R}_{\hat{\mathbb{D}}_k}^{ce}(h_k) + \sqrt{\frac{kl(h_k||h_0)+\ln\sqrt{4|{\mathcal{D}}_k|}-\ln \epsilon}{2|{\mathcal{D}}_k|}} + \lambda_{\mathbb{D}_k, \mathbb{D}_T}(h_0)+ d_{\mathcal{C}_h}((\mathbb{D}_k)_X,(\mathbb{D}_T)_X)\\
    &+\ell^{ce}(\phi_{\mathcal{T}}(\sum_{j=1}^{K}\tilde{\beta}_{k,j} h_j(X_{j})), \phi_{\mathcal{T}}(h_k(X_{k}))) - I(\phi_{\mathcal{T}}(\sum_{j=1}^{K}\tilde{\beta}_{k,j} h_j(X_j)))+\frac{\lambda_k-\log \epsilon}{t|\mathcal{D}_k|} + \frac{tC_k^2}{8},
\end{aligned}
\end{equation*}
where $\phi_{\mathcal{T}}$ is the softmax function with a temperature factor $\mathcal{T}$ and $\mathcal{C}_h=h\Delta\mathcal{H}$.
$\lambda_{\mathbb{D}_k,\mathbb{D}_T}(h_0)=\mathbf{R}_{\mathbb{D}_k}(h_0)+\mathbf{R}_{\mathbb{D}_T}(h_0)$ measures the adaptation error of $h_0$.
\end{theorem}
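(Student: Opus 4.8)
The plan is to decompose $\mathbf{R}_{\mathbb{D}_T}(h_k)$ into three conceptually distinct pieces and bound each separately: (i) a domain-adaptation term that transports the bound from the test distribution $\mathbb{D}_T$ to the local true distribution $\mathbb{D}_k$, (ii) a generalization term that passes from $\mathbb{D}_k$ to the empirical distribution $\hat{\mathbb{D}}_k$ using a PAC-Bayes argument tailored to the prompt-tuning setup, and (iii) a distillation term coming from the knowledge-distillation loss between the aggregated global logits $\sum_j \tilde\beta_{k,j} h_j(X_j)$ and the local logits $h_k(X_k)$. First I would invoke a standard $\mathcal{H}\Delta\mathcal{H}$-divergence bound (in the style of Ben-David et al.) to write $\mathbf{R}_{\mathbb{D}_T}(h_k) \le \mathbf{R}_{\mathbb{D}_k}(h_k) + d_{\mathcal{C}_h}((\mathbb{D}_k)_X,(\mathbb{D}_T)_X) + \lambda_{\mathbb{D}_k,\mathbb{D}_T}(h_0)$, where the adaptation constant is realized by the frozen pre-trained hypothesis $h_0$; this is exactly where the $\lambda_{\mathbb{D}_k,\mathbb{D}_T}(h_0) = \mathbf{R}_{\mathbb{D}_k}(h_0)+\mathbf{R}_{\mathbb{D}_T}(h_0)$ term enters.

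Next I would bound the local true risk $\mathbf{R}_{\mathbb{D}_k}(h_k)$ by its empirical counterpart. Since the fine-tuned model $h_k$ is obtained from the prior $h_0$ (with prior distribution $\pi_k$) by training only the prompts and head, the natural tool is a PAC-Bayes / Occam-type bound with complexity measured by $kl(h_k\|h_0)$ — the KL between posterior and prior — yielding the term $\sqrt{\tfrac{kl(h_k\|h_0)+\ln\sqrt{4|\mathcal{D}_k|}-\ln\epsilon}{2|\mathcal{D}_k|}}$. The $\ln\sqrt{4|\mathcal{D}_k|}$ factor is the usual artifact of integrating the McAllester bound over confidence levels, and the cross-entropy loss here plays the role of the bounded loss after the standard clipping/surrogate argument, giving $\mathbf{R}^{ce}_{\hat{\mathbb{D}}_k}(h_k)$ as the empirical term. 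I would then separately handle the distillation contribution: the KD loss $\ell^{kd}$ in Eq.~(\ref{eq:KD_loss}) is a KL divergence between softmax-temperatured logits, which I would rewrite as $\ell^{ce}(\phi_{\mathcal{T}}(\tilde p_{k,c}),\phi_{\mathcal{T}}(p_k^i)) - I(\phi_{\mathcal{T}}(\tilde p_{k,c}))$ using the identity $\mathrm{KL}(q\|p) = H(q,p) - H(q)$ (cross-entropy minus entropy), and then substitute the explicit aggregation $\tilde p_{k,c} = \sum_j \tilde\beta_{k,j}\sum_i p_{j,c}^i$ from Eq.~(\ref{eq:sum_logit1}) to get the $\ell^{ce}(\phi_{\mathcal{T}}(\sum_j\tilde\beta_{k,j}h_j(X_j)),\phi_{\mathcal{T}}(h_k(X_k))) - I(\phi_{\mathcal{T}}(\sum_j\tilde\beta_{k,j}h_j(X_j)))$ terms. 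The remaining concentration of the distillation loss around its expectation, using the bound $C_k$ on $\ell^{kd}$, gives the $\tfrac{\lambda_k - \log\epsilon}{t|\mathcal{D}_k|} + \tfrac{tC_k^2}{8}$ pair via a Hoeffding-type moment bound optimized over $t$ (the $\lambda_k = -\log\pi_k$ again encoding the prior complexity), with the free parameter $t>0$ left unoptimized in the statement.

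Finally I would assemble the pieces by a union bound over the two probabilistic events (PAC-Bayes generalization and distillation concentration), splitting the failure probability so the total is at most $\epsilon$, and collect terms. The main obstacle I anticipate is the second and third steps taken together: making the PAC-Bayes instantiation rigorous for a hypothesis space that mixes a frozen deterministic backbone with trainable prompts — one must be careful about what "distribution $\pi_k$" over $h_0$ actually means and how $kl(h_k\|h_0)$ is defined for near-deterministic posteriors — and simultaneously justifying that the KD term can be treated as an additive loss on the same sample without double-counting the randomness already used for the cross-entropy bound. Handling the temperature-scaled softmax inside $\ell^{ce}$ and $I$ cleanly (boundedness, measurability, and the entropy-subtraction identity holding pointwise) is routine but must be done with care so that $C_k$ genuinely bounds $\ell^{kd}$ uniformly over the sample.
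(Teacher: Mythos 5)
Your proposal follows essentially the same route as the paper: the $h\Delta\mathcal{H}$-divergence/triangle-inequality step to pass from $\mathbb{D}_T$ to $\mathbb{D}_k$ with $\lambda_{\mathbb{D}_k,\mathbb{D}_T}(h_0)$, a McAllester-style PAC-Bayes bound with complexity $kl(h_k\|h_0)$ for the cross-entropy part, and for the distillation part the identity $\mathrm{KL}=\ell^{ce}-I$ applied to the temperature-scaled aggregated logits together with a Hoeffding/Donsker--Varadhan (Catoni-type) concentration argument yielding the $\frac{\lambda_k-\log\epsilon}{t|\mathcal{D}_k|}+\frac{tC_k^2}{8}$ terms, then additivity of the two risks. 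The only (minor) divergence is that you propose an explicit union bound splitting the failure probability, which the paper glosses over when adding the two high-probability bounds; your version is, if anything, the more careful bookkeeping, and the obstacles you flag (the meaning of $\pi_k$ and $kl(h_k\|h_0)$ for near-deterministic posteriors, and avoiding double-counting the sample randomness) are precisely the points the paper also leaves informal.
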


\textbf{Discussion.}
Based on this, we deduce the following:
(1) If the initial model $h_0$ (with frozen pre-trained parameters of $F_k$ and initial $\mathsf{P}_k$ and $H_k$) has a small error $\lambda_{\mathbb{D}_k,\mathbb{D}_T}(h_0)$, it is beneficial for improving generalization ability and model performance.
(2) More samples $|\mathcal{D}_k|$ can reduce generalization error and enhance the model utility.
(3) A local distribution that is more similar to the global distribution, along with more precise global logits can reduce the error and promote local training, as analyzed in the Remark~\ref{KDremark} of Appendix.
(4) A higher distillation loss bound $C_k$ and local tuning error $\mathbf{R}_{\mathbb{\hat{D}}_k}^{ce}(h_k)$ undermine the generalization ability and model effectiveness.
We can exploit the weight coefficient $\tilde{\beta}_{k,j}$ and study the global knowledge to reduce them which we later show in Figure~\ref{fig:ablation_theorem}.
(5) Obviously, less distribution discrepancy $d_{\mathcal{C}_h}$ can reduce the estimation error on $\mathcal{D}_T$.

\section{Experiments}

\subsection{Experimental settings}

\textbf{Datasets and models.}
We evaluate our method on CIFAR10, CIFAR100, and SVHN datasets under four settings shown in Figure~\ref{fig:tesaer}.
For \textit{IID} data homogeneity setting, we randomly shuffle and partition data samples into clients, while we employ Dirichlet distribution Dir($\alpha$) with random $\alpha$ to perform \textit{Dir} and \textit{Non-IID} data heterogeneity settings, where the former has overlapping samples and the latter does not.
For the model setting, we employ ViT-B/16~\cite{ViT} in the homogeneous model experiments while using different ViT~\cite{ViT} or ResNet~\cite{ResNet} as client backbones in the heterogeneous model experiments.
These foundation models are both pre-trained on ImageNet1k~\cite{ImageNet}.
The classification head is a linear fully connected layer.
For more details, see Appendix~\ref{appendix_model_dataset}.
Furthermore, we resize image pixels to 224 $\times$ 224 for aligning the experiment setting of pre-trained backbones.

\textbf{Implementation details.}
We train models by using the SGD optimizer with a learning rate of 0.01, a weight decay rate of 1e-4, and a momentum of 0.9.
We perform 10 global rounds in CIFAR10 and 15 rounds in CIFAR100 and SVHN over 5 clients, while the local epoch is 1.
The default style is VPT-deep with $n=3$ prompts for each backbone layer, and we fix $\mathcal{T}=4.5$ and $\gamma=1$.
For fair comparison, we run 100 global rounds on all baselines under the same model and data settings.

\textbf{Baselines.}
We compare FedHPL against advanced FL approaches over two model settings with various data settings.
For the \textit{homogeneous model} setting, we compare our framework with loss-based FL (FedAVG~\cite{FedAvg}, FedProx~\cite{FedProx}, SCAFFOLD~\cite{SCAFFOLD}), pFL (FedBABU~\cite{FedBABU}, FedRep~\cite{FedRep}), and VPT-based FL (pFedPT~\cite{pFedPT}, pFedPG~\cite{pFedPG}).
For the \textit{heterogeneous model} setting, we compare with model-based FL (FedGen~\cite{FedGEN}, FedGH~\cite{FedGH}), prototype-based FL (FedProto~\cite{FedProto}, FedTGP~\cite{FedTGP}), and distillation-based FL (FedMD~\cite{FedMD}, FedHE~\cite{FedHE}).
More details are in Appendix~\ref{Appendix2}.

\subsection{Performance comparison with SOTA approaches}

\textbf{FedHPL achieves excellent results in model homogeneity.}
We compare FedHPL with existing FL approaches in the \textit{homogeneous model} setting and the results are shown in Table~\ref{tab:homo_model}. 
It can be seen that FedHPL outperforms other algorithms across all settings with the least trainable parameters.
For example, the average test accuracy is basically 30\% and 45\% higher than other methods (except pFedPG) on CIFAR10 and CIFAR100.
We attribute such consistent outperformance to the strong representations of pre-trained backbones and effective guidance of global knowledge.
\begin{table}
\caption{The average test accuracy (\%) and the average trainable parameters over CIFAR10 dataset in the homogeneous model setting (use ViT-B/16 refer to Table~\ref{tab:model_hete_settings}). More details are in Appendix~\ref{Appendixparameterhomo}.}
\label{tab:homo_model}
\centering
\resizebox{\textwidth}{!}{
    \begin{tabular}{c|c|ccccccccc} 
    \hline
    \multirow{2}{*}{Method}  & \multirow{2}{*}{Param (M)}& \multicolumn{3}{c}{CIFAR10}    & \multicolumn{3}{c}{CIFAR100}    & \multicolumn{3}{c}{SVHN}  \\ 
    \cline{3-11}
      &                                               & IID& Dir    & Non-IID& IID& Dir  & Non-IID& IID& Dir  & Non-IID\\ 
    \hline
    \multicolumn{11}{l}{{\cellcolor[rgb]{0.902,0.902,0.902}}\textit{\textbf{Loss-based Federated Learning}}}  \\
    \multicolumn{1}{l|}{FedAVG~\cite{FedAvg}}         & 81.83   & 66.03  & 60.90  & 64.72  & 39.35  & 36.38 & 39.97 & 86.36  & 86.87  & 88.61  \\
    \multicolumn{1}{l|}{FedProx~\cite{FedProx}}       & 81.83   & 65.13  & 59.63  & 63.02  & 39.50  & 35.00 & 37.94 & 88.91  & 86.52  &   87.17\\
    \multicolumn{1}{l|}{SCAFFOLD~\cite{SCAFFOLD}}    & 81.83   & 67.79  & 62.23  & 67.87  & 40.65  & 35.89 & 40.05 & 89.61  & 87.10  & 88.34 \\ 
    \hline
    \multicolumn{11}{l}{{\cellcolor[rgb]{0.902,0.902,0.902}}\textit{\textbf{Personalized Federated Learning}}} \\
    \multicolumn{1}{l|}{FedBABU~\cite{FedBABU}}       & 81.82  & 63.27& 57.12& 60.01& 39.21& 35.28& 36.97&   87.45&   85.61&   85.16\\
    \multicolumn{1}{l|}{FedRep~\cite{FedRep}}         & 81.83  & 64.32& 58.93& 59.98& 39.92& 33.30& 36.82&   88.59&   86.80&   86.11\\ 
    \hline
    \multicolumn{11}{l}{{\cellcolor[rgb]{0.902,0.902,0.902}}\textit{\textbf{VPT-based Federated Learning}}} \\
    \multicolumn{1}{l|}{pFedPT~\cite{pFedPT}}         & 1.028& 58.60  & 52.42  & 54.10  & 28.54  & 26.40  & 27.15 & 83.28  &   79.95&   80.49\\
    \multicolumn{1}{l|}{pFedPG~\cite{pFedPG}}         & 1.526& 97.20  & 96.07  & 96.47  & 85.30  & 82.08  & 81.70 & 93.04 & 91.66& 90.21  \\
    \multicolumn{1}{l|}{\textbf{FedHPL}}                 & 0.034 & \textbf{97.89} & \textbf{96.37} & \textbf{96.60}  &   \textbf{89.68}&   \textbf{86.82}&   \textbf{86.23}&   \textbf{94.57}& \textbf{91.71}& \textbf{90.46}\\
    \hline
    \end{tabular}
}
\vspace{-1em}
\end{table}

\textbf{FedHPL can adapt model heterogeneity.}
We next compare the model performance with other FL methods in the \textit{heterogeneous model} setting. 
Notably, due to the prevalent implementation of CNN in these approaches, FedHPL adopts a mechanism of padding learnable height and width pixels as prompts to the image space and performs local training over CNN backbones instead of Transformer for fair comparison.
Table~\ref{tab:hete_model} illustrates the comparison results and it can be seen that FedHPL achieves the highest accuracy across CIFAR10 and CIFAR100 datasets.
For instance, in the \textit{Dir} data setting, the test accuracy in FedHPL is 35.99\% and 42.85\% higher than the lowest accuracy of baselines on CIFAR10 and CIFAR100, while in the \textit{IID} data setting, it is 44.21\% and 52.70\% higher, respectively.
It proves the feasibility and effectiveness of FedHPL in model heterogeneity. 

\begin{table}
\caption{Comparison of average test accuracy (\%) and the sum of trainable parameters over CIFAR10 dataset in the heterogeneous model setting (all methods, except for FedHPL (ViT), use the heterogeneous ResNet setting refer to Table~\ref{tab:model_hete_settings}). More details can refer to Appendix~\ref{Appendixparameterhete}. }
\label{tab:hete_model}
\centering
\resizebox{\textwidth}{!}{
    \begin{tabular}{c|c|ccccccccc} 
    \hline
    \multirow{2}{*}{Method} & \multirow{2}{*}{Param (M)} & \multicolumn{3}{c}{CIFAR10}    & \multicolumn{3}{c}{CIFAR100}    & \multicolumn{3}{c}{SVHN}  \\ 
    \cline{3-11}
    &      & IID& Dir   & Non-IID& IID& Dir   & Non-IID& IID& Dir   & Non-IID\\ 
    \hline
    \multicolumn{11}{l}{{\cellcolor[rgb]{0.902,0.902,0.902}}\textit{\textbf{Model-based Federated Learning}}}  \\
    \multicolumn{1}{l|}{FedGen~\cite{FedGEN}}     & 84.867& 39.45 & 37.39 & 38.74   & 12.20  & 12.91 & 13.15    & 73.45  & 69.30  & 60.84 \\
    \multicolumn{1}{l|}{FedGH~\cite{FedGH}}       & 86.325& 68.05 & 54.49 & 58.08   & 29.86  & 25.34 & 25.60    & 92.33  & 89.10  & 87.28 \\
    \hline
    \multicolumn{11}{l}{{\cellcolor[rgb]{0.902,0.902,0.902}}\textit{\textbf{Prototype-based Federated Learning}}}  \\
    \multicolumn{1}{l|}{FedProto~\cite{FedProto}} & 89.360 & 40.12 & 37.66 & 39.49   & 14.46  & 13.73 & 13.74    & 79.80  & 78.16 & 66.04 \\
    \multicolumn{1}{l|}{FedTGP~\cite{FedTGP}}     & 84.866& 38.84& 34.46 & 39.87   & 10.03  & 11.83 & 11.75    & 76.05  & 67.79 & 62.55 \\ 
    \hline
    \multicolumn{11}{l}{{\cellcolor[rgb]{0.902,0.902,0.902}}\textit{\textbf{Distillation-based Federated Learning}}}    \\
    \multicolumn{1}{l|}{FedMD~\cite{FedMD}}       & 84.375& 67.39 & 66.53 & 66.62  & 29.71  & 32.22 & 30.71    & 90.05  & 89.96 & \textbf{90.44}\\
    \multicolumn{1}{l|}{FedHE~\cite{FedHE}}       & 84.335& 67.06 & 55.32 & 56.48  & 29.87  & 26.84& 26.47    & 92.59  & 88.36 & 88.17 \\
    \multicolumn{1}{l|}{\textbf{FedHPL (CNN)}}              & 0.078& 83.05& 70.45& 76.53& 62.73& 54.68& 52.92&  72.43 & 63.28 & 59.78  \\
    \multicolumn{1}{l|}{\textbf{FedHPL (ViT)}}              & 0.181& \textbf{97.06} & \textbf{96.10} & \textbf{95.75} &  \textbf{88.84}&  \textbf{85.85}  & \textbf{85.57}&   \textbf{94.19}&  \textbf{90.75} & 90.02\\
    \hline
    \end{tabular}
}
\vspace{-1em}
\end{table}

\textbf{Select ViT as the pre-trained backbone.}
We also notice that FedHPL with ViT achieves competitive performance over most situations while the performance with CNN is not ideal on SVHN dataset.
For example, the test accuracy improves 21.76\%, 27.47\%, and 30.24\% in FedHPL from ResNet to ViT on SVHN dataset. 
Refer to~\cite{VPT}, the advantage of VPT can be better fulfilled with Transformers and diminish with smaller CNN.
We think $\lambda_{\mathbb{D}_k, \mathbb{D}_T}(h_0)$ in ViT is smaller than CNN and learnable parameters can be better trained over ViT backbones.
Therefore, it is preferred to choose ViT structures as pre-trained backbones.
See Appendix~\ref{Appendix_table_details} for more details about model performance.

\subsection{Ablation study and analysis}
According to Theorem~\ref{thm:generalization bound}, the model performance is affected by many factors.
To investigate their influence, we conduct the ablation study over the CIFAR10 dataset with ViT backbones.

\textbf{Effect of prompt length and insertion position.}
Figure~\ref{fig:ablation_theorem}(a) exhibits performance comparison results with VPT-shallow and VPT-deep over the varying prompt length for a backbone layer (\ie $n$) in the \textit{Dir} data and \textit{heterogeneous model} setting.
With tolerant trainable parameters (shown in Table~\ref{tab:parameter_vit2}), VPT-deep has better performance (especially for the lowest client accuracy with a maximum performance improvement of 6.6\%) by promoting the generalization ability of $h_0$ with efficient $\mathsf{P}_k$ and reducing $\lambda_{\mathbb{D}_k,\mathbb{D}_T}(h_0)$. 
Thus, we use VPT-deep as the default insertion style in FedHPL.

\begin{figure}[t]
\setlength{\belowcaptionskip}{-0.2cm} 
    \centering
    \subfloat[prompt length]{\includegraphics[width=.41\linewidth]{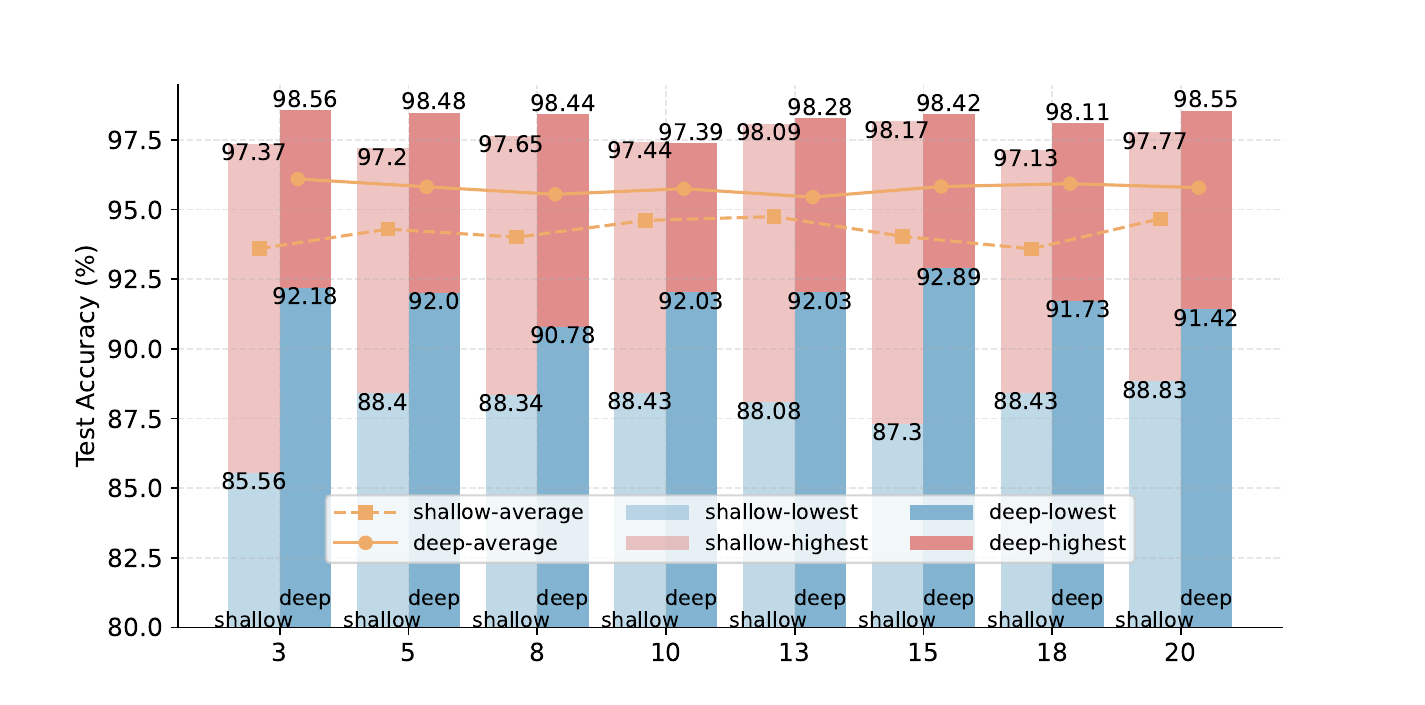}}
    \subfloat[sample percentage]{\includegraphics[width=.29\linewidth]{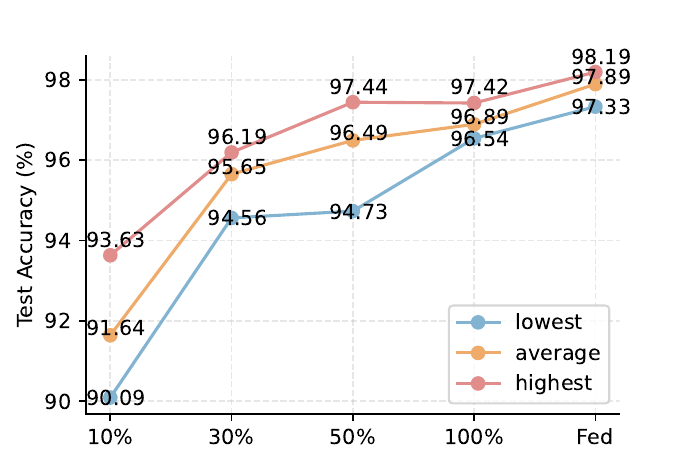}}
    \subfloat[component]{\includegraphics[width=.3\linewidth]{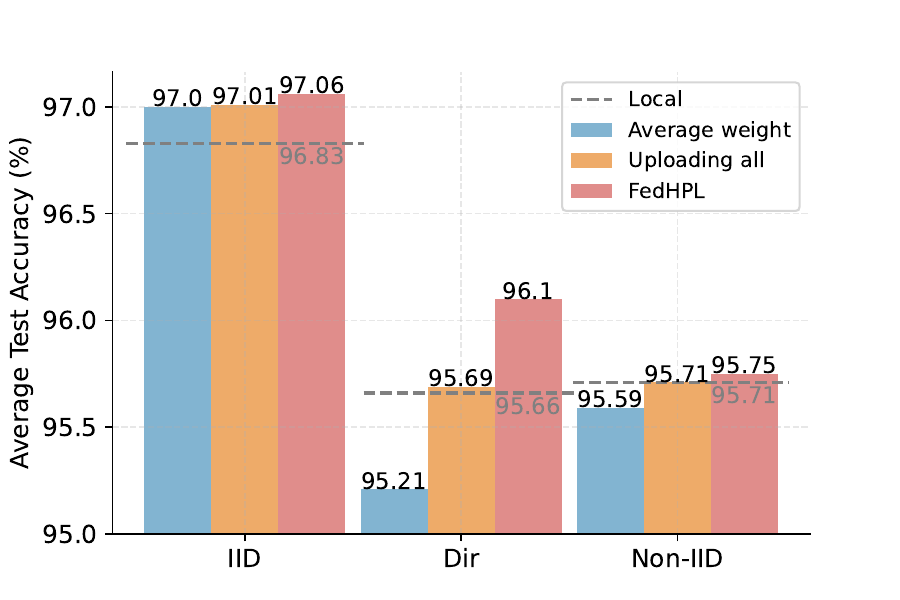}}
\caption{The ablation study on FedHPL over CIFAR10 dataset. Entire experimental settings, more comparison results, and more hyper-parameter studies are shown in Appendix~\ref{appendix_analysis_details}.}
\label{fig:ablation_theorem}
\end{figure}

\begin{table}[t]
\caption{Test accuracy (\%) on CIFAR10. We control `$\alpha$' in the Non-IID setting. `+$\mathsf{P}$' and `+${H}$' represent FedHPL with the aggregation of prompts and head parameters on the same dimension.}
\label{tab:analysis_component}
\centering
\subfloat{
\resizebox{0.49\textwidth}{!}{
    \begin{tabular}{c|ccc|ccc} 
    \hline
    \multirow{2}{*}{Setting} &\multicolumn{3}{c|}{Homogeneous Model}   & \multicolumn{3}{c}{Heterogeneous Model}  \\
    \cline{2-7}
                  & Lowest & Average & Highest  & Lowest & Average & Highest \\ 
    \hline         
    $\alpha$=0.1  &        84.94& 91.43  &           95.15&        90.24& 92.48  &   97.53\\
    $\alpha$=0.5  &        93.96& 96.13  &           97.48&        91.67& 95.51  &  97.30\\
    $\alpha$=1.0  &        96.65& 96.90  &           97.61&        94.25& 96.23  &  98.26\\
    IID           &        97.33& 97.89  &           98.19&        95.63& 97.06  &  98.40\\
    \hline
    \end{tabular}
}}
\subfloat{
\resizebox{0.49\textwidth}{!}{
    \begin{tabular}{c|ccc|ccc} 
    \hline
    \multirow{2}{*}{Policy} &\multicolumn{3}{c|}{Homogeneous Model}   & \multicolumn{3}{c}{Heterogeneous Model}         \\
    \cline{2-7}
     &IID      & Dir   & Non-IID    & IID   & Dir  & Non-IID   \\ 
    \hline
    Local &96.89   & 95.77  & 96.20      & 96.83  & 95.66  & 95.71 \\
    \rowcolor[rgb]{0.902,0.902,0.902}FedHPL &97.89\textcolor{cyan}{$\uparrow$} & 96.37\textcolor{cyan}{$\uparrow$} & 96.60{\textcolor{cyan}{$\uparrow$}} & 97.06\textcolor{cyan}{$\uparrow$} & 96.10\textcolor{cyan}{$\uparrow$} & 95.75\textcolor{cyan}{$\uparrow$}  \\
    +$\mathsf{P}$ &98.11\textcolor{cyan}{$\uparrow$} & 96.60\textcolor{cyan}{$\uparrow$} & 96.51\textcolor{cyan}{$\uparrow$} & 97.14\textcolor{cyan}{$\uparrow$} & 96.64\textcolor{cyan}{$\uparrow$} & 95.93\textcolor{cyan}{$\uparrow$}  \\
    +$H$   &97.84\textcolor{cyan}{$\uparrow$}& 96.67\textcolor{cyan}{$\uparrow$} &                                               95.81\textcolor{red}{$\downarrow$}& 97.10\textcolor{cyan}{$\uparrow$} & 96.25\textcolor{cyan}{$\uparrow$} & 95.84\textcolor{cyan}{$\uparrow$}  \\
    \hline
    \end{tabular}
}}
\vspace{-2em}
\end{table}

\textbf{Sensitivity to the number of involved training samples.}
Figure~\ref{fig:ablation_theorem}(b) shows the exploration results among clients over the \textit{IID} and \textit{homogeneous model} setting.
It provides compelling evidence that a higher number percentage of data involved in local training (\ie more samples $|\mathcal{D}_k|$) can improve the model performance, indicating a reduction in the error bound $\mathbf{R}_{\mathbb{D}_T}(h_k)$.
Meanwhile, the error can further decrease by collaborative learning to strengthen the local model $h_k$ and further alleviate the negative impact caused by insufficient local samples.
Furthermore, Table~\ref{tab:analysis_component}(left) shows that imbalanced data (the smaller $\alpha$, the higher data heterogeneity) only cause a slight performance degradation and variance, revealing the capacity of FedHPL to handle data heterogeneity. 

\textbf{Necessity of weighted aggregation.}
In addition to local prompt tuning, global logit distillation also influences the model utility.
We selectively remove specific components from FedHPL (\ie weighted aggregation $\to$ average aggregation, uploading correct logits $\to$ uploading all logits) in the \textit{heterogeneous model} setting to investigate their contributions.
From Figure~\ref{fig:ablation_theorem}(c), we can observe that the average aggregation mechanism causes an accuracy decline, highlighting the effectiveness of the weighted aggregation mechanism.
The weight factor can increase the logit proportion of similar models, further decreasing the distillation error.
In addition, uploading all logits yields slightly lower accuracy than uploading correct predictions.
We believe that incorrect predictions have a negative impact on global logits and detailed analyze at Remark~\ref{KDremark}.
Interestingly, the performance of \textbf{Local} (\ie only exploit local prompt tuning) is good enough.
This also explains why uploading incorrect predictions only has a slight negative effect, as the number of incorrect predictions is small.

\textbf{Exploiting homogeneous parameters.}
We next give insights on facilitating performance in FedHPL from the same latent dimension. 
Specifically, clients additionally upload prompts or head parameters, which the server then aggregates by the same dimension and transmits the aggregated knowledge to corresponding clients.
Table~\ref{tab:analysis_component}(right) reports that aggregating prompts or head parameters with FedHPL can further improve the model performance in most cases.
Additional global prompts and head parameters can help clients to learn more knowledge and decrease the distribution difference to reduce the error bound.
Furthermore, FedHPL can save communication costs by uploading per-class average local logits while maintaining comparable performance, as we show in Appendix~\ref{appendix_communication}.

\textbf{Broader impact and limitation.}
FedHPL uploads logits instead of private data and provides a privacy-preserving paradigm in machine learning.
Furthermore, the generalization ability to unseen datasets that have large domain shifts to clients has not been explored. 
We leave it for future work.
\section{Conclusion}
In this work, we propose a novel unified framework named FedHPL, designed to tackle heterogeneity issues in federated learning.
To handle data heterogeneity and accelerate training, we leverage pre-trained foundation models and local prompt tuning over limited local resources to fine-tune local models.
For collaborative training among heterogeneous models, we design a global logit distillation scheme with a weighted knowledge aggregation mechanism.
Furthermore, we derived a generalization error bound for FedHPL to show how prompt tuning and logit distillation affect the model performance.
Extensive experiments demonstrate the effectiveness of FedHPL across various settings compared with other baselines and validate our theoretical guarantee.

\bibliographystyle{plain}
\bibliography{reference}
\newpage
\appendix
\section{Proofs of Theorem 1}
\label{Appendix1}
\subsection{Generalization error bound over local data distribution}
In this subsection, we show how the generalization error $\mathbf{R}_{\mathbb{D}_T}(h_k)$ over the local model $h_k$ of client $k$ in Theorem~\ref{thm:generalization bound} is connected from the test dataset $\mathcal{D}_T$ to the local private dataset $\mathcal{D}_k$.
Then, we get an error bound over the local private dataset $\mathcal{D}_k$.

\subsubsection{Preliminaries}
We first introduce several definitions and existing lemmas. 
\begin{myDef}[Risk]
\label{risk_def}
Inspired by~\cite{PAC-bayesian}, given the input space $\mathcal{X}$ with finite label space $\mathcal{Y}$ more than two classes and a hypothesis (\ie model) $h \in \mathcal{H}$.
The generalization error (or risk) $\mathbf{R}_{\mathbb{D}}$: $\mathcal{Y}^{\mathcal{X}} \to [0,1]$ is defined on the distribution $\mathbb{D}$ over $\mathcal{X} \times \mathcal{Y}$:
\begin{equation}
    \mathbf{R}_{\mathbb{D}}(h) \overset{def}{=} \mathbf{Pr}_{\left(X, Y\right) \sim \mathbb{D}}\left(M(h(X)) \ne Y\right),
\end{equation}
where $M(h(X))$=$arg\max_{c \in Y} h(X)_{c}$.
According to~\cite{PerADA}, given the indicator function $\mathbf{1}[\cdot]$, we have:
\begin{equation}
    \mathbf{Pr}_{(X,Y) \sim \mathbb{D}}(M(h(X)) \ne Y) = \mathbb{E}_{(X,Y) \sim \mathbb{D}} \mathbf{1} \left[M(h(X)) \ne Y\right].
\end{equation}
 
\end{myDef}

\begin{myDef}[$h\Delta\mathcal{H}$-divergence~\cite{PAC-bayesian}]
    Given a class of hypothesis $\mathcal{H} \subseteq \mathcal{Y}^{\mathcal{X}}$, for $h \in \mathcal{H}$,
    \begin{equation}
        h\Delta\mathcal{H} \overset{def}{=} \{x \mapsto 1-\mathbf{1}_{\{h(x)\}}\{h'(x)\}|h' \in \mathcal{H}\},
    \end{equation}
    which is a model-dependent extension of $\mathcal{H}\Delta\mathcal{H} \overset{def}{=} \{x \mapsto 1-\mathbf{1}_{\{h(x)\}}\{h'(x)\}|(h,h') \in \mathcal{H}^2\}$.
\end{myDef}

\begin{myDef}[Supremum inequality about $\mathcal{C}_h$~\cite{PAC-bayesian}] 
\label{supremum_def}
Given any distributions $\mathbb{D}_1$ and $\mathbb{D}_2$ over $\mathcal{X} \times \mathcal{Y}$, for any choice of $\mathcal{C}_h=h\Delta\mathcal{H}$,
\begin{equation}
    \begin{split}
    \xi & = \left| \mathbb{E}_{X_1 \sim (\mathbb{D}_1)_X} \mathbf{1}[M(h(X_1)) \ne M(h'(X_1))] - \mathbb{E}_{X_2 \sim (\mathbb{D}_2)_X} \mathbf{1}[M(h(X_2)) \ne M(h'(X_2))] \right| \\
    & \le d_{\mathcal{C}_h}((\mathbb{D}_2)_X, (\mathbb{D}_1)_X),
    \end{split}
  \end{equation}
where $\mathbb{D}_{\mathcal{X}}$ is the $\mathcal{X}$-marginal of $\mathbb{D}$.
More details about $\mathcal{C}_h$ can refer to the Theorem 5 in~\cite{PAC-bayesian}.
\end{myDef}

\begin{lemma}[PAC-Bayesian Theorem~\cite{PAC-domain}]
\label{True-Empi}
    Let $\ell$ be a [0,1]-bounded loss function, given the risk $\mathbf{R}_{\mathbb{D}}(h)$ over the distribution $\mathbb{D}$, the empirical risk $\mathbf{R}_{\mathbb{\hat{D}}}(h)$ trained with the loss function $\ell$ over the empirical distribution $\mathbb{\hat{D}}$, and $h_0$ (\eg the initial model with pre-trained parameters) be a probability distribution over $\mathcal{H}$. For $\epsilon \in (0,1)$, 
    \begin{equation}
        \mathbf{R}_{\mathbb{D}}(h) \le \mathbf{R}_{\mathbb{\hat{D}}}(h) + \sqrt{\frac{kl(h||h_0)+\ln\sqrt{4m}-\ln \epsilon}{2m}},
    \end{equation}
    where $kl$ is the KL-divergence and $m$ is the number of samples.
\end{lemma}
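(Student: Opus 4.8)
The plan is to recover the stated inequality as the classical McAllester--Maurer PAC-Bayesian bound, reading $h$ and $h_0$ as the posterior and prior distributions over $\mathcal{H}$ (the statement already specifies that $h_0$ is a distribution) and reading $\mathbf{R}_{\mathbb{D}}(h)$, $\mathbf{R}_{\hat{\mathbb{D}}}(h)$ as the associated Gibbs risks $\mathbb{E}_{h'\sim h}\mathbf{R}_{\mathbb{D}}(h')$ and $\mathbb{E}_{h'\sim h}\mathbf{R}_{\hat{\mathbb{D}}}(h')$. To avoid clashing with the measure divergence $kl(h\|h_0)$ appearing in the statement, I would reserve the symbol $\mathrm{kl}(a\|b)=a\ln\frac{a}{b}+(1-a)\ln\frac{1-a}{1-b}$ for the \emph{binary} relative entropy between two Bernoulli laws, $a,b\in[0,1]$.

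First I would fix a single hypothesis $h'$ drawn independently of the sample $S$ of size $m$, so that $m\,\mathbf{R}_{\hat{\mathbb{D}}}(h')$ is a sum of $m$ i.i.d.\ $[0,1]$-valued losses with mean $\mathbf{R}_{\mathbb{D}}(h')$, and invoke Maurer's moment-generating estimate $\mathbb{E}_S\big[\exp(m\,\mathrm{kl}(\mathbf{R}_{\hat{\mathbb{D}}}(h')\|\mathbf{R}_{\mathbb{D}}(h')))\big]\le 2\sqrt m$. Averaging over the data-independent prior $h_0$ and exchanging expectations by Fubini yields $\mathbb{E}_S\,\mathbb{E}_{h'\sim h_0}\big[\exp(m\,\mathrm{kl}(\cdots))\big]\le 2\sqrt m$, whence Markov's inequality gives, with probability at least $1-\epsilon$ over $S$, the bound $\mathbb{E}_{h'\sim h_0}[\exp(m\,\mathrm{kl}(\cdots))]\le 2\sqrt m/\epsilon$. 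On this event I would apply the Donsker--Varadhan change-of-measure inequality with the test function $g(h')=m\,\mathrm{kl}(\mathbf{R}_{\hat{\mathbb{D}}}(h')\|\mathbf{R}_{\mathbb{D}}(h'))$ and the (possibly data-dependent) posterior $h$, obtaining
\begin{equation*}
m\,\mathbb{E}_{h'\sim h}\,\mathrm{kl}(\mathbf{R}_{\hat{\mathbb{D}}}(h')\|\mathbf{R}_{\mathbb{D}}(h')) \le kl(h\|h_0)+\ln\frac{2\sqrt m}{\epsilon}.
\end{equation*}

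The remaining two steps are soft. Joint convexity of $\mathrm{kl}$ together with Jensen's inequality collapses the posterior average onto the Gibbs risks, $\mathrm{kl}(\mathbf{R}_{\hat{\mathbb{D}}}(h)\|\mathbf{R}_{\mathbb{D}}(h))\le \mathbb{E}_{h'\sim h}\mathrm{kl}(\cdots)$, so that $\mathrm{kl}(\mathbf{R}_{\hat{\mathbb{D}}}(h)\|\mathbf{R}_{\mathbb{D}}(h))\le \frac{1}{m}(kl(h\|h_0)+\ln\frac{2\sqrt m}{\epsilon})$; then Pinsker's inequality $\mathrm{kl}(a\|b)\ge 2(a-b)^2$ and the identity $\ln(2\sqrt m)=\ln\sqrt{4m}$ give, after taking square roots, $\mathbf{R}_{\mathbb{D}}(h)-\mathbf{R}_{\hat{\mathbb{D}}}(h)\le \sqrt{(kl(h\|h_0)+\ln\sqrt{4m}-\ln\epsilon)/(2m)}$, which is exactly the claim.

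I expect the single technical obstacle to be the first step, Maurer's estimate $\mathbb{E}_S[\exp(m\,\mathrm{kl}(\hat{R}\|R))]\le 2\sqrt m$: this is precisely the sharp sub-binomial tail bound that produces the $\ln\sqrt{4m}$ constant, and it cannot be replaced by a crude Hoeffding argument without weakening the result. Everything downstream (Fubini, Markov, Donsker--Varadhan, Jensen, Pinsker) is a routine assembly of standard inequalities. Since the lemma is quoted from~\cite{PAC-domain}, an acceptable alternative is to invoke that reference directly for the moment-generating estimate and present only the change-of-measure assembly.
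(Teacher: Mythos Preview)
Your proposal is correct and follows the same route as the paper: obtain the Maurer-type $\mathrm{kl}$-form PAC-Bayes bound $\mathrm{kl}(\mathbf{R}_{\hat{\mathbb{D}}}(h)\|\mathbf{R}_{\mathbb{D}}(h))\le \frac{1}{m}(kl(h\|h_0)+\ln\sqrt{4m}-\ln\epsilon)$ and then loosen it via Pinsker's inequality. The only difference is granularity: the paper simply cites this $\mathrm{kl}$-form bound from the literature and applies Pinsker, whereas you unpack its proof (Maurer's moment estimate, Fubini, Markov, Donsker--Varadhan, Jensen) before doing the same final step---your closing remark that one could ``invoke that reference directly'' is exactly what the paper does.
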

\begin{proof}
    For a probability distribution $\mathbb{P}$ (\ie $h_0$) over $\mathcal{H}$ and $\delta > 0$, the above lemma is the loosen result of~\cite{PAC-note}, which is given below:
    \begin{equation*}
        \mathbf{Pr}(\forall \mathbb{Q} \subseteq \mathcal{H}: kl(\mathbf{R}_{\mathbb{\hat{D}}}(\mathbb{Q})||\mathbf{R}_{\mathbb{D}}(\mathbb{Q})) \le \frac{kl(\mathbb{Q}||\mathbb{P})+\ln\sqrt{4m}-\ln \delta}{m}) \ge 1-\delta,
    \end{equation*}
    by applying Pinsker's inequality. 
    Inspired by~\cite{PAC-VPT, PAC-domain}, we replace $\mathbb{Q}$ with $h$.
\end{proof}

\begin{lemma}[Adaptation of the triangle-inequality~\cite{PAC_ben, PAC-bayesian}]
\label{tri}
    For any $(h, h') \in \mathcal{H}^2$, we have the below inequalities by using the triangle inequality:
    \begin{equation}
    \label{tri1}
        \mathbf{1}[M(h(X)) \ne Y] \le \mathbf{1}[M(h(X)) \ne M(h'(X))] + \mathbf{1}[M(h'(X)) \ne Y],
    \end{equation}
    \begin{equation}
    \label{tri2}
        \mathbf{1}[M(h(X)) \ne M(h'(X))] \le \mathbf{1}[M(h(X)) \ne Y] + \mathbf{1}[Y \ne M(h'(X))].
    \end{equation}
\end{lemma}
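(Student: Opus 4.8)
\section*{Proof proposal}

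The plan is to prove both inequalities of Lemma~\ref{tri} pointwise, that is, for each fixed triple $(X, Y, h, h')$, the claimed relation between indicator functions holds. Since indicators take values in $\{0, 1\}$, it suffices to verify the inequality by case analysis on whether the left-hand indicator evaluates to $0$ or $1$; whenever the left-hand side is $0$ the inequality is trivial because the right-hand side is a sum of nonnegative indicators, so the only work is the case where the left-hand side equals $1$.

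For inequality~(\ref{tri1}), suppose $\mathbf{1}[M(h(X)) \ne Y] = 1$, i.e. $M(h(X)) \ne Y$. I would argue by contradiction that at least one term on the right must then be $1$: if both right-hand indicators were $0$, we would have $M(h(X)) = M(h'(X))$ and $M(h'(X)) = Y$, and chaining these equalities gives $M(h(X)) = Y$, contradicting the assumption. Hence the sum on the right is at least $1$, which matches the left-hand value, establishing~(\ref{tri1}). This is exactly the triangle inequality for the discrete $0$--$1$ metric $\rho(u,v) = \mathbf{1}[u \ne v]$ applied to the three points $M(h(X))$, $M(h'(X))$, and $Y$, so one may alternatively phrase it as $\rho(M(h(X)), Y) \le \rho(M(h(X)), M(h'(X))) + \rho(M(h'(X)), Y)$.

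Inequality~(\ref{tri2}) follows by the same template with the roles of the points permuted: assuming $M(h(X)) \ne M(h'(X))$, if both $M(h(X)) = Y$ and $Y = M(h'(X))$ held, transitivity would force $M(h(X)) = M(h'(X))$, a contradiction, so at least one right-hand indicator is $1$ and the bound holds. Equivalently this is $\rho(M(h(X)), M(h'(X))) \le \rho(M(h(X)), Y) + \rho(Y, M(h'(X)))$, the triangle inequality for $\rho$ through the intermediate point $Y$.

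I do not anticipate a genuine obstacle here, as the statement is a direct consequence of the triangle inequality for the discrete metric; the only care needed is to keep the argument purely pointwise (the inequalities are asserted for the random variables before any expectation is taken) and to state clearly that $\mathbf{1}[\cdot]$ is nonnegative and $\{0,1\}$-valued so that the trivial case is dispatched immediately. The main value of the lemma is downstream, where taking expectations over the appropriate distributions converts these pointwise bounds into the risk-level triangle inequalities used in Definition~\ref{supremum_def} and the subsequent domain-adaptation decomposition.
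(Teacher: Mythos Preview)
Your proposal is correct and matches the paper's approach: the paper does not actually supply a proof for this lemma, it merely states it with the hint ``by using the triangle inequality'' and cites external references, so your pointwise case analysis via the discrete $0$--$1$ metric is exactly the intended argument. There is nothing to add.
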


\subsubsection{Bound with local data distribution}
Based on the above definitions and lemmas, we have the generalization error over $\mathbb{D}_T$ bounded with the local data distribution $\mathbb{D}_k$.

\begin{theorem}
\label{theorem1}
    With K clients and a server in FL, $\mathbb{\hat{D}}_k$ is the empirical distribution of client $k$ from the private training dataset which has $|\mathcal{D}_k|$ samples and $\mathbb{D}_k$ is the true distribution of client $k$.
    $\mathbb{D}_T$ is the data distribution from the test dataset and it is usually used to estimate the model performance.
    $h_0$ is the initial model with a pre-trained backbone, trainable prompts, and a classification head. $h_k$ is the fine-tuned model from $h_0$ after training.
    Given any $h_k \in \mathcal{H}$ for client $k$ and $\forall \delta \in (0,1)$, with probability at least $1-\delta$, the generalization error bound over the test dataset is:
    \begin{equation*}
        \mathbf{R}_{\mathbb{D}_T}(h_k) \le \mathbf{R}_{\mathbb{{D}}_k}(h_k) + \lambda_{\mathbb{D}_k, \mathbb{D}_T}(h') + d_{\mathcal{C}_h}((\mathbb{D}_k)_X,(\mathbb{D}_T)_X),
    \end{equation*}
    where $\lambda_{\mathbb{D}_k,\mathbb{D}_T}(h')=\mathbf{R}_{\mathbb{D}_k}(h')+\mathbf{R}_{\mathbb{D}_T}(h')$ for any $h' \in \mathcal{H}$ and $\mathcal{C}_h=h\Delta\mathcal{H}$.
\end{theorem}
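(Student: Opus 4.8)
The plan is to relate the risk on the test distribution $\mathbb{D}_T$ to the risk on the client distribution $\mathbb{D}_k$ by inserting an auxiliary hypothesis $h'$ and a domain-discrepancy term, following the classical domain-adaptation bound adapted to the multi-class setting via the definitions of $h\Delta\mathcal{H}$-divergence and the supremum inequality (Definitions~2--3). First I would unfold $\mathbf{R}_{\mathbb{D}_T}(h_k)$ using Definition~\ref{risk_def} as an expectation of the indicator $\mathbf{1}[M(h_k(X))\ne Y]$ over $(X,Y)\sim\mathbb{D}_T$. Then apply the triangle inequality~\eqref{tri1} from Lemma~\ref{tri} with the pair $(h_k,h')$ to split this into
\[
\mathbf{R}_{\mathbb{D}_T}(h_k)\le \mathbb{E}_{(X,Y)\sim\mathbb{D}_T}\mathbf{1}[M(h_k(X))\ne M(h'(X))] + \mathbf{R}_{\mathbb{D}_T}(h').
\]

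Next I would convert the first (disagreement) term, which depends only on the $\mathcal{X}$-marginals, from $\mathbb{D}_T$ to $\mathbb{D}_k$: by Definition~\ref{supremum_def} applied with $\mathbb{D}_1=\mathbb{D}_k$, $\mathbb{D}_2=\mathbb{D}_T$ and the observation that $h'\in\mathcal{H}$ is one admissible choice in the supremum defining $d_{\mathcal{C}_h}$, we get
\[
\mathbb{E}_{X\sim(\mathbb{D}_T)_X}\mathbf{1}[M(h_k(X))\ne M(h'(X))]\le \mathbb{E}_{X\sim(\mathbb{D}_k)_X}\mathbf{1}[M(h_k(X))\ne M(h'(X))] + d_{\mathcal{C}_h}((\mathbb{D}_k)_X,(\mathbb{D}_T)_X).
\]
Then I would bound the remaining disagreement term over $\mathbb{D}_k$ using the reverse triangle inequality~\eqref{tri2} with the pair $(h_k,h')$, which yields $\mathbf{R}_{\mathbb{D}_k}(h_k)+\mathbf{R}_{\mathbb{D}_k}(h')$. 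Collecting the three pieces and grouping $\mathbf{R}_{\mathbb{D}_k}(h')+\mathbf{R}_{\mathbb{D}_T}(h')=\lambda_{\mathbb{D}_k,\mathbb{D}_T}(h')$ gives exactly the claimed bound.

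The high-probability qualifier ``$1-\delta$'' does not actually enter this particular step: the inequality above is deterministic once the distributions are fixed, so the $\delta$ is carried only so that this theorem composes cleanly with the subsequent PAC-Bayes step (Lemma~\ref{True-Empi}) that replaces $\mathbf{R}_{\mathbb{D}_k}(h_k)$ by its empirical counterpart $\mathbf{R}_{\hat{\mathbb{D}}_k}(h_k)$ plus the complexity term. I would state this explicitly to avoid confusion. The main obstacle I anticipate is being careful about which marginal each expectation is taken over and verifying that the disagreement function $x\mapsto\mathbf{1}[M(h_k(x))\ne M(h'(x))]$ indeed lies in the class $\mathcal{C}_h=h_k\Delta\mathcal{H}$ so that Definition~\ref{supremum_def} is applicable with the constant $d_{\mathcal{C}_h}$ as stated; the multi-class $\arg\max$ operator $M(\cdot)$ makes the indicator manipulations slightly less transparent than in the binary case, but the cited adaptation of the triangle inequality (Lemma~\ref{tri}) is precisely what handles this.
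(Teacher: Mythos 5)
Your proposal follows essentially the same route as the paper's proof: apply the triangle inequality of Lemma~\ref{tri} (Eq.~(\ref{tri1})) with the auxiliary $h'$, shift the disagreement term from $(\mathbb{D}_T)_X$ to $(\mathbb{D}_k)_X$ at the cost of $d_{\mathcal{C}_h}((\mathbb{D}_k)_X,(\mathbb{D}_T)_X)$ via Definition~\ref{supremum_def}, bound the remaining disagreement by $\mathbf{R}_{\mathbb{D}_k}(h_k)+\mathbf{R}_{\mathbb{D}_k}(h')$ using Eq.~(\ref{tri2}), and collect terms into $\lambda_{\mathbb{D}_k,\mathbb{D}_T}(h')$. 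Your side remark that the bound is deterministic and the $1-\delta$ qualifier only matters for composition with the later PAC-Bayes step is also accurate.
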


\begin{proof}
Recall Eq. (\ref{tri1}) in Lemma~\ref{tri}, for any $h'$, we have the below inequality by monotonicity and linearity of expectation
    \begin{equation*}
    \begin{aligned}
      \mathbb{E}_{(X, Y) \sim \mathbb{D}_T} \mathbf{1}[M(h(X)) \ne Y] & \le \mathbb{E}_{X \sim (\mathbb{D}_T)_X} \mathbf{1}[M(h(X)) \ne M(h'(X))] \\
      & + \mathbb{E}_{(X, Y) \sim \mathbb{D}_T} \mathbf{1}[M(h'(X)) \ne Y].
    \end{aligned}
    \end{equation*}
  Then, applying Definition~\ref{risk_def} and Definition~\ref{supremum_def}, we have:
    \begin{equation}
  \label{risk1}
        \begin{split}
            \mathbf{R}_{\mathbb{D}_T}(h) &= \mathbb{E}_{(X,Y) \sim \mathbb{D}_T} \mathbf{1}[M(h(X)) \ne Y] \\
            & \le \mathbb{E}_{X \sim (\mathbb{D}_T)_X} \mathbf{1}[M(h(X)) \ne M(h'(X))] + \mathbb{E}_{(X, Y) \sim \mathbb{D}_T} \mathbf{1}[M(h'(X)) \ne Y] \\
            & = \mathbb{E}_{X \sim (\mathbb{D}_T)_X} \mathbf{1}[M(h(X)) \ne M(h'(X))] + \mathbf{R}_{\mathbb{D}_T}(h') \\
            & \le \mathbb{E}_{X_k \sim (\mathbb{D}_k)_X} \mathbf{1}[M(h(X_k)) \ne M(h'(X_k))] + \mathbf{R}_{\mathbb{D}_T}(h') \\
            & + |\mathbb{E}_{X \sim (\mathbb{D}_T)_X} \mathbf{1}[M(h(X)) \ne M(h'(X))] - \mathbb{E}_{X_k \sim (\mathbb{D}_k)_X} \mathbf{1}[M(h(X_k)) \ne M(h'(X_k))]| \\
            & = \mathbb{E}_{X_k \sim (\mathbb{D}_k)_X} \mathbf{1}[M(h(X_k)) \ne M(h'(X_k))] + \mathbf{R}_{\mathbb{D}_T}(h') + d_{\mathcal{C}_h}((\mathbb{D}_k)_X,(\mathbb{D}_T)_X),
        \end{split}
    \end{equation}
    where $d_{\mathcal{C}_h}[\cdot,\cdot]$ represents the distribution discrepancy between two distributions.

Similarly, recall Eq. (\ref{tri2}) in Lemma~\ref{tri}, we have the below inequality with $(X_k, Y_k) \sim \mathbb{D}_k$:
\begin{equation}
\label{risk2}
\begin{split}
    \mathbb{E}_{X_k \sim (\mathbb{D}_k)_X} \mathbf{1}[M(h(X_k)) \ne M(h'(X_k))] & \le \mathbb{E}_{X_k \sim (\mathbb{D}_k)_X} \mathbf{1}[M(h(X_k)) \ne Y_k] \\
    & + \mathbb{E}_{X_k \sim (\mathbb{D}_k)_X} \mathbf{1}[Y_k \ne M(h'(X_k))] \\  
    &= \mathbf{R}_{\mathbb{D}_k}(h) + \mathbf{R}_{\mathbb{D}_k}(h').\\
\end{split}
\end{equation}
By combining Eq. (\ref{risk1}) and Eq. (\ref{risk2}), for any $h \in \mathcal{H}$,
\begin{equation*}
\begin{split}
    \mathbf{R}_{\mathbb{D}_T}(h) & \le \mathbf{R}_{\mathbb{D}_k}(h) + \mathbf{R}_{\mathbb{D}_k}(h') + \mathbf{R}_{\mathbb{D}_T}(h') + d_{\mathcal{C}_h}((\mathbb{D}_k)_X,(\mathbb{D}_T)_X).
\end{split}
\end{equation*}
Moreover, $h$ in FedHPL is the local fine-tuned model $h_k$, so we have:
\begin{equation}
\label{risk3}
    \mathbf{R}_{\mathbb{D}_T}(h_k) \le \mathbf{R}_{\mathbb{D}_k}(h_k) + \mathbf{R}_{\mathbb{D}_k}(h') + \mathbf{R}_{\mathbb{D}_T}(h') + d_{\mathcal{C}_h}((\mathbb{D}_k)_X,(\mathbb{D}_T)_X).
\end{equation}
\end{proof}
Furthermore, because $\lambda_{\mathbb{D}_k, \mathbb{D}_T}(h')$ is suitable for any model $h' \in \mathcal{H}$, Eq. (\ref{risk3}) can further derived when using $h_0$ to replace $h'$:
\begin{equation}
\label{risk4}
\begin{split}
    \mathbf{R}_{\mathbb{D}_T}(h_k) & \le \mathbf{R}_{\mathbb{D}_k}(h_k) + \mathbf{R}_{\mathbb{D}_k}(h_0) + \mathbf{R}_{\mathbb{D}_T}(h_0)+ d_{\mathcal{C}_h}((\mathbb{D}_k)_X,(\mathbb{D}_T)_X).
\end{split}
\end{equation}

\begin{remark}
    $h_k$ is fine-tuned from $h_0$ by the local prompt tuning and global logit distillation with a weighted knowledge aggregation mechanism.
    $\mathbf{R}_{\mathbb{D}_k}(h_0) + \mathbf{R}_{\mathbb{D}_T}(h_0)$ represents the model adaptation error of $h_0$ after transferring it from the source domain (\eg ImageNet) to the new domain (\eg CIFAR10, CIFAR100, SVHN).
    We consider $\lambda_{\mathbb{D}_k, \mathbb{D}_T}(h_0)$ reflects the generalization ability of $h_0$ (the initial model with pre-trained backbone and initial trainable parameters).
    The more robust the pre-trained backbone, the stronger the generalization ability in $h_0$ will be and the corresponding error $\lambda_{\mathbb{D}_k, \mathbb{D}_T}(h_0)$ will be reduced.
    Additionally, the trainable parameters (\ie visual prompts and the classification head) also have an impact on model generalization on $h_0$.
    Moreover, the distribution discrepancy $d_{\mathcal{C}_h}(\cdot,\cdot)$ between two data distributions also influences the generalization error.
\end{remark}

In FedHPL, $\mathbf{R}_{\mathbb{D}_k}(h_k)$ consists of the error $\mathbf{R}_{\mathbb{D}_k}^{ce}(h_k)$ in local prompt tuning and distillation error $\mathbf{R}_{\mathbb{D}_k}^{kd}(h_k)$ in global logit distillation with a weighted knowledge aggregation mechanism. 
According to Lemma~\ref{True-Empi}, we can simply use $\mathbf{R}_{\mathbb{\hat{D}}_k}^{ce}(h_k) + \sqrt{\frac{kl(h_k||h_0)+\ln\sqrt{4|\mathcal{D}_k|}-\ln \epsilon}{2|\mathcal{D}_k|}}$ to represent $\mathbf{R}_{\mathbb{D}_k}^{ce}(h_k)$ because the true data distribution is hard to estimate and we usually use the empirical data to estimate.
The trainable parameters can affect the error bound from the local training and $kl(h_k||h_0)$.
According to~\cite{bound_loss}, we use a cross-entropy loss function to train $h_k$ in the stage of local prompt tuning instead of $M(h_k(X_k))=arg\max_{c \in Y_k} h_k(X_k)_{c}$ since it is hard to optimize.
However, the distillation error with a KD loss $\ell^{kd}$ is more complicated and related to logits.
Furthermore, the KL loss is not mentioned in~\cite{bound_loss}, so we decided to analyze it from another perspective.

\subsection{Generalization error bound over global logit distillation}

In this subsection, we apply a simple PAC-Bayes bound for estimating the effectiveness of global logit distillation with a weighted knowledge distillation mechanism in federated learning and observe its influence on the generalization error.
Due to the fact that KL divergence does not necessarily satisfy the Lipschitz condition, traditional bound analysis cannot be suitable in our setting. 
Inspired by~\cite{PAC_loss_sum}, this paper defines a generalized bound based on the loss function.

\subsubsection{Preliminaries}
Here, we show some definitions and lemmas before demonstrating our theorem.
\begin{myDef}
\label{KTdef1}
    Given a measurable function $\ell$: $\mathcal{Y}^2 \to [0,\infty)$ with $\ell(y,y)=0$. Assume that $0 \le \ell \le C$, the generalization error of a hypothesis $h$ is:
    \begin{equation}
        \mathbf{R}_{\mathbb{D}}(h) = \mathbb{E}_{(X,Y) \sim \mathbb{D}}[\ell(h(X),Y)],
    \end{equation}
    and the empirical risk is:
    \begin{equation}
        \mathbf{R}_{\mathbb{\hat{D}}}(h) = \frac{1}{m} \sum_{i=1}^{m} \ell(h(x^i),y^i),
    \end{equation}
    which satisfies $\mathbb{E}_{\mathcal{D}} [\mathbf{R}_{\mathbb{\hat{D}}}(h)]$$=$$\mathbf{R}_{\mathbb{D}}(h)$, where the training dataset $\mathcal{D}=\{(x^i,y^i)\}_{i=1}^{m}$ and ${(x^i,y^i) \sim \mathbb{\hat{D}}}$.
\end{myDef}

\begin{myDef}
\label{KTdef2}
    A hypothesis $h$ is a function (\eg model), that associates the parameters $\theta$. Let $\mathcal{P}(\Theta)$ be the set of all probability distributions on the parameter set $\Theta$.  The probability measure ${\theta}$ depends on data samples with any possible dataset whose size is $m$ and $\theta \sim {\rho}$, where
   \begin{equation*}
    {\rho}: \bigcup_{i=1}^{n} (\mathcal{X}\times\mathcal{Y})^i \to \mathcal{P}(\Theta).
  \end{equation*}
\end{myDef}
 
\begin{lemma}[Hoeffding's Lemma]
\label{hoeffding}
        Suppose $U$ is a random independent variable valuing from an interval $[a, b]$, for any $t\in \mathbb{R}^{+}$,
        \begin{equation}
        \mathbb{E}\left[e^{t(U-\mathbb{E}[U])}\right] \le e^{\frac{(b-a)^2t^2}{8}}.
    \end{equation}
\end{lemma}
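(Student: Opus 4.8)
The plan is to prove this classical moment-generating-function bound by the standard convexity-plus-Taylor argument. First I would reduce to a centered variable: set $V = U - \mathbb{E}[U]$, so that $\mathbb{E}[V] = 0$ and $V$ takes values in $[a', b']$ with $a' = a - \mathbb{E}[U]$ and $b' = b - \mathbb{E}[U]$. Crucially $b' - a' = b - a$, and since $\mathbb{E}[V] = 0$ forces the range of $V$ to straddle the origin, we have $a' \le 0 \le b'$. The goal then becomes showing $\mathbb{E}[e^{tV}] \le e^{t^2(b-a)^2/8}$, which is exactly the claimed bound restated for the centered variable.

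The second step exploits the convexity of $x \mapsto e^{tx}$. For any $x \in [a', b']$, writing $x$ as the convex combination $x = \frac{b'-x}{b'-a'}\, a' + \frac{x-a'}{b'-a'}\, b'$ yields the pointwise inequality $e^{tx} \le \frac{b'-x}{b'-a'} e^{ta'} + \frac{x-a'}{b'-a'} e^{tb'}$. Taking expectations over $V$ and using $\mathbb{E}[V] = 0$ to eliminate the term linear in $V$, the random quantity collapses to the deterministic expression $\frac{b'}{b'-a'} e^{ta'} - \frac{a'}{b'-a'} e^{tb'}$, so the probabilistic content of the lemma is now fully discharged and only a one-variable analytic estimate remains.

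The third step is that analytic estimate. I would set $p = -a'/(b'-a') \in [0,1]$ and $u = t(b'-a') = t(b-a)$, so that the bound rewrites as $e^{-pu}\bigl(1 - p + p e^{u}\bigr)$; it then suffices to show that the exponent $\varphi(u) := -pu + \log\bigl(1 - p + p e^{u}\bigr)$ satisfies $\varphi(u) \le u^2/8$. A direct computation gives $\varphi(0) = 0$ and $\varphi'(0) = 0$, together with $\varphi''(u) = g(u)\bigl(1 - g(u)\bigr)$, where $g(u) = p e^{u}/(1 - p + p e^{u}) \in [0,1]$. A second-order Taylor expansion with Lagrange remainder then gives $\varphi(u) = \tfrac{1}{2}\varphi''(\xi)\, u^2$ for some $\xi$ between $0$ and $u$, and combining this with the forthcoming bound on $\varphi''$ finishes the argument, since $\mathbb{E}[e^{tV}] \le e^{\varphi(u)} \le e^{u^2/8} = e^{t^2(b-a)^2/8}$.

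The one place requiring a genuine observation rather than routine calculus is the uniform bound $\varphi''(u) \le 1/4$. This follows because $g(1-g) = \tfrac14 - (g - \tfrac12)^2 \le \tfrac14$ (equivalently, by AM--GM the product is maximized at $g = 1/2$), so $\varphi''(u) \le 1/4$ for every $u$; everything else is bookkeeping around the centering step and the convexity inequality. I would flag only the harmless technicality that the reduction assumes $a < b$ (the degenerate case $a = b$ makes $U$ almost surely constant, for which the bound holds trivially).
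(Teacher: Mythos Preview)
Your argument is correct and is the standard textbook proof of Hoeffding's lemma. Note, however, that the paper does not actually prove this statement: it is listed among the preliminaries as a classical result and simply invoked later (in deriving Corollary~\ref{PAC_Hoeffding}), so there is no paper proof to compare against. Your write-up supplies exactly the kind of self-contained justification one would cite for it.
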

\begin{lemma}[Cramer-Chernoff Basis~\cite{PAC_oxford}]
\label{CCbasis}
    For any $t \in \mathbb{R}^{+}$ and any independent random variable $U^i$,
    \begin{equation}
      \mathbb{E}\left[ e^{t\sum_{i=1}^{m}(U^i-\mathbb{E}[U^i])}\right] = \prod_{i=1}^{m} \mathbb{E}\left[e^{t(U^i-\mathbb{E}[U^i])}\right].
   \end{equation}
\end{lemma}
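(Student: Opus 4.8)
The plan is to reduce the claim to two elementary facts: that the exponential map turns sums into products, and that the expectation of a product of \emph{independent} random variables factorizes into the product of their expectations. No concentration inequality or deeper machinery is needed here, so the role of this lemma is purely to set up the moment-generating-function computation that feeds into the Cramer--Chernoff argument used later. Accordingly I would keep the proof short and self-contained.

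First I would introduce the centered variables $V^i := U^i - \mathbb{E}[U^i]$ for $i = 1, \dots, m$. Since each $V^i$ is obtained from $U^i$ by subtracting a deterministic constant, the collection $\{V^i\}_{i=1}^m$ inherits independence from $\{U^i\}_{i=1}^m$. Next I would rewrite the exponent of a sum as a product via $\exp\left(t\sum_{i=1}^m V^i\right) = \prod_{i=1}^m \exp(t V^i)$, which is just the functional equation of the exponential. Taking expectations of both sides gives
\begin{equation*}
\mathbb{E}\left[e^{t\sum_{i=1}^{m}(U^i-\mathbb{E}[U^i])}\right] = \mathbb{E}\left[\prod_{i=1}^{m} e^{t V^i}\right].
\end{equation*}

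Finally, each factor $e^{t V^i}$ is a measurable function of $V^i$ alone, and measurable functions of independent random variables remain mutually independent; hence the random variables $\{e^{t V^i}\}_{i=1}^m$ are independent. Applying the multiplicativity of expectation over independent factors then yields
\begin{equation*}
\mathbb{E}\left[\prod_{i=1}^{m} e^{t V^i}\right] = \prod_{i=1}^{m} \mathbb{E}\left[e^{t V^i}\right] = \prod_{i=1}^{m} \mathbb{E}\left[e^{t(U^i-\mathbb{E}[U^i])}\right],
\end{equation*}
which is exactly the stated identity.

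The only point requiring care, and the one I would flag as the genuine (if mild) obstacle, is integrability: the product expectation is meaningful only once each individual factor $\mathbb{E}[e^{t V^i}]$ is finite, so that the interchange of expectation and product is justified by Fubini/Tonelli. In the PAC-Bayes context where this lemma is invoked the underlying variables are bounded (the loss takes values in a finite interval and $t \in \mathbb{R}^{+}$), so every moment-generating factor is automatically finite and the factorization is fully rigorous. I would therefore state the boundedness assumption explicitly and note that it makes all the expectations above well defined.
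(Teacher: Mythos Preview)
Your proof is correct and is the standard argument: rewrite the exponential of a sum as a product, then factor the expectation using independence, with the integrability caveat handled by the boundedness of the loss in the surrounding context. The paper does not actually supply a proof of this lemma; it is stated as a cited result (Cramer--Chernoff Basis from \cite{PAC_oxford}) and used directly inside the proof of Corollary~\ref{PAC_Hoeffding}, so there is nothing to compare against beyond noting that your derivation fills in exactly the elementary step the paper takes for granted.
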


\begin{lemma}[Donsker-Varadhan variational formula]
\label{DVvformula}
    For any measurable, bounded function $h:\Theta \to \mathbb{R}$, fix the prior probability measure $\pi \in \mathcal{P}(\Theta)$ with the parameter $\theta$ of $h$, 
    \begin{equation}
      \log \mathbb{E}_{\theta \sim \pi}[e^{h(\theta)}] = \underset{\rho \in \mathcal{P}(\Theta)}{\sup}\{\mathbb{E}_{\theta \sim \rho}[h(\theta)]-kl(\rho||\pi)\}.   
   \end{equation}
\end{lemma}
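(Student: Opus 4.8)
The plan is to exhibit the maximizer explicitly and thereby reduce the identity to the nonnegativity of relative entropy. First I would set $Z := \mathbb{E}_{\theta \sim \pi}[e^{h(\theta)}]$ and note that, because $h$ is bounded, $e^{\inf h} \le e^{h(\theta)} \le e^{\sup h}$, so $Z \in (0,\infty)$ and $\log Z$ is well-defined. Then I would introduce the Gibbs (tilted) measure $\rho^\star \in \mathcal{P}(\Theta)$ defined through its Radon--Nikodym derivative $\frac{d\rho^\star}{d\pi}(\theta) = e^{h(\theta)}/Z$; this is a genuine probability measure, since the density is strictly positive $\pi$-a.e. and integrates to $1$ by the choice of $Z$.

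The core of the argument is a single change-of-measure identity. For any $\rho \in \mathcal{P}(\Theta)$ with $\rho \ll \pi$, the chain rule for Radon--Nikodym derivatives gives $\log\frac{d\rho}{d\pi} = \log\frac{d\rho}{d\rho^\star} + h - \log Z$ pointwise $\rho$-a.e. Taking expectations under $\rho$ and rearranging yields
\begin{equation*}
  \mathbb{E}_{\theta \sim \rho}[h(\theta)] - kl(\rho\|\pi) = \log Z - kl(\rho\|\rho^\star).
\end{equation*}
Since $kl(\rho\|\rho^\star) \ge 0$ with equality if and only if $\rho = \rho^\star$ (Gibbs' inequality / nonnegativity of relative entropy), the left-hand side is at most $\log Z$, and the value $\log Z$ is attained exactly at $\rho = \rho^\star$. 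This gives $\sup_{\rho}\{\mathbb{E}_{\theta\sim\rho}[h(\theta)] - kl(\rho\|\pi)\} = \log Z = \log \mathbb{E}_{\theta\sim\pi}[e^{h(\theta)}]$, which is the claimed identity, and as a bonus identifies the unique optimizer.

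It remains only to dispose of measures that are not absolutely continuous with respect to $\pi$: for such $\rho$ one has $kl(\rho\|\pi) = +\infty$ by convention, so the objective equals $-\infty$ and cannot influence the supremum; hence restricting to $\rho \ll \pi$ loses nothing. The only point requiring care is the measure-theoretic bookkeeping behind the pointwise chain-rule identity and the splitting of the expectation into three finite terms. Here boundedness of $h$ is exactly what is needed: it guarantees $\mathbb{E}_\rho[|h(\theta)|] < \infty$, so whenever $kl(\rho\|\pi) < \infty$ every term above is finite and the rearrangement is legitimate. I expect this bookkeeping, rather than any inequality, to be the only delicate step, and the boundedness hypothesis renders it routine.
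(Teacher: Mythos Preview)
Your proof is correct and follows the standard Gibbs-measure argument: define the tilted measure $\rho^\star$ with $d\rho^\star/d\pi = e^{h}/Z$, rewrite the objective as $\log Z - kl(\rho\|\rho^\star)$, and conclude via nonnegativity of relative entropy. The bookkeeping you flag (mutual absolute continuity of $\rho^\star$ and $\pi$ from strict positivity of $e^h$, finiteness of all terms from boundedness of $h$) is handled correctly.

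There is nothing to compare against, however: the paper states this lemma as a known preliminary result and provides no proof of its own, so your argument supplies what the paper simply cites.
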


\begin{lemma}[Chernoff bound]
\label{chernoff}
    Given a random variable $U$ and $s \in \mathbb{R}$, for any $a>0$,
    \begin{equation*}
  \begin{split}
        \mathbf{Pr}(U>s) & = \mathbf{Pr}(e^{aU}>e^{as}) \\
        & \le \frac{\mathbb{E}(e^{aU})}{e^{as}},
  \end{split}
  \end{equation*}
  which is the exponential version of Markov inequality.
\end{lemma}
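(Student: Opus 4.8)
The plan is to prove the stated inequality in two elementary steps: first establish the claimed \emph{equality} of probabilities by exploiting strict monotonicity of the exponential, then obtain the \emph{inequality} by applying Markov's inequality to a nonnegative random variable.

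For the equality, I would observe that since $a>0$ the function $x \mapsto e^{ax}$ is strictly increasing on $\mathbb{R}$, so for every outcome the relation $U>s$ holds if and only if $e^{aU}>e^{as}$. Hence the two events coincide, $\{U>s\}=\{e^{aU}>e^{as}\}$, and therefore $\mathbf{Pr}(U>s)=\mathbf{Pr}(e^{aU}>e^{as})$, which is exactly the first line of the display.

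For the inequality, I would set $Z:=e^{aU}$, which is nonnegative, and note that the threshold $c:=e^{as}$ is strictly positive. Markov's inequality then gives $\mathbf{Pr}(Z>c)\le \mathbb{E}[Z]/c$. If a self-contained argument is wanted, Markov's inequality itself follows from $\mathbb{E}[Z]\ge \mathbb{E}[Z\,\mathbf{1}[Z>c]]\ge c\,\mathbf{Pr}(Z>c)$, where the first step uses $Z\ge 0$ to drop the complementary event and the second uses $Z>c$ on the event $\{Z>c\}$; dividing by $c>0$ yields the bound. Substituting back $Z=e^{aU}$ and $c=e^{as}$ produces $\mathbf{Pr}(e^{aU}>e^{as})\le \mathbb{E}(e^{aU})/e^{as}$, the second line of the display. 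Chaining the equality with this inequality completes the proof.

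There is no genuine obstacle here, since this is the textbook ``exponential Markov'' bound. The only points deserving care are that strict positivity of $a$ is precisely what makes $x\mapsto e^{ax}$ strictly monotone, so that the event reformulation is an equality rather than a mere inclusion, and that $e^{as}>0$ is what legitimizes dividing by the threshold when invoking Markov's inequality. Both hold for every real $s$ and every $a>0$, so the statement is valid with no further assumptions on the law of $U$ beyond existence of $\mathbb{E}(e^{aU})$.
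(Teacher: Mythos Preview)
Your proposal is correct and matches the paper's approach: the paper does not give a separate proof but simply presents the two-line display and remarks that it is ``the exponential version of Markov inequality,'' which is exactly the monotonicity-plus-Markov argument you spell out in detail.
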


\begin{lemma}[Logit Inequality~\cite{KDbound}]
\label{softmax_bound}
   Given $(x^i, y^i)$ in the dataset $\mathcal{D}, (x^i, y^i) \sim \mathbb{\hat{D}}$, and $ h(x^i) \in \mathbb{R}^{n_c}, y^i \in \{1, \cdots, n_c\}$, 
   \begin{equation}
    \mathbf{1}\left[arg\max_{c'}h(x^i)_{c'} \ne y^i \right] \le 2(1-\phi_{\mathcal{T}}(h(x^i))_{y^i}),
  \end{equation}
 where $\phi$ is a softmax function with its temperature factor $\mathcal{T}$ and $(\cdot)_{c'}$ is the component value of $h(x^i)$ at the $c'$-th label.
According to~\cite{PerADA}, we further have:
  \begin{equation*}
 \begin{split}
      \mathbf{Pr}_{\mathbb{\hat{D}}}\left(\mathbf{1}\left[arg\max_{c'}h(x^i)_{c'} \ne y^i \right]\right) & \le \mathbb{E}_{\mathbb{\hat{D}}} \left[2(1-\phi_{\mathcal{T}}(h(x^i))_{y^i})\right] \\
      & = 2 - 2\mathbb{E}_{\mathbb{\hat{D}}} \left[\phi_{\mathcal{T}}(h(x^i))_{y^i} \right].
 \end{split}
 \end{equation*}
\end{lemma}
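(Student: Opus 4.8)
The plan is to establish the pointwise inequality
$\mathbf{1}[arg\max_{c'}h(x^i)_{c'}\ne y^i]\le 2(1-\phi_{\mathcal{T}}(h(x^i))_{y^i})$
by a direct case analysis on whether the hard prediction on the fixed sample $x^i$ is correct, and only afterwards lift it to the probability statement by monotonicity and linearity of expectation. The key observation is that nothing probabilistic is needed for the first inequality: it is purely a statement about the softmax vector $\phi_{\mathcal{T}}(h(x^i))\in[0,1]^{n_c}$, whose entries are nonnegative and sum to one, so I treat $(x^i,y^i)$ as fixed throughout the first step.

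First I would dispose of the correct-prediction case: if $arg\max_{c'}h(x^i)_{c'}=y^i$, the left-hand indicator is $0$, while the right-hand side $2(1-\phi_{\mathcal{T}}(h(x^i))_{y^i})$ is nonnegative because $\phi_{\mathcal{T}}(h(x^i))_{y^i}\in[0,1]$, so the inequality holds trivially. The substantive case is misclassification, where the indicator equals $1$ and I must show $\phi_{\mathcal{T}}(h(x^i))_{y^i}\le 1/2$. Let $c^\star=arg\max_{c'}h(x^i)_{c'}\ne y^i$, so that $h(x^i)_{c^\star}\ge h(x^i)_{y^i}$; since for any positive temperature $\mathcal{T}$ the softmax preserves the ordering of the logits (the numerators share a common denominator and $\exp$ is increasing), this yields $\phi_{\mathcal{T}}(h(x^i))_{c^\star}\ge\phi_{\mathcal{T}}(h(x^i))_{y^i}$. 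Because the softmax entries form a probability vector,
\[
1=\sum_{c'=1}^{n_c}\phi_{\mathcal{T}}(h(x^i))_{c'}\ge \phi_{\mathcal{T}}(h(x^i))_{c^\star}+\phi_{\mathcal{T}}(h(x^i))_{y^i}\ge 2\,\phi_{\mathcal{T}}(h(x^i))_{y^i},
\]
so $\phi_{\mathcal{T}}(h(x^i))_{y^i}\le 1/2$ and therefore $1\le 2(1-\phi_{\mathcal{T}}(h(x^i))_{y^i})$, matching the indicator value. Combining the two cases establishes the pointwise inequality; note that the argument is insensitive to how ties in the logits are broken, since $c^\star$ merely needs $h(x^i)_{c^\star}\ge h(x^i)_{y^i}$.

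Finally, I would obtain the probability bound by taking expectations under $\mathbb{\hat{D}}$: the quantity $\mathbf{Pr}_{\mathbb{\hat{D}}}(\mathbf{1}[arg\max_{c'}h(x^i)_{c'}\ne y^i])$ is the expectation of the indicator, so monotonicity of expectation applied to the pointwise inequality gives $\mathbf{Pr}_{\mathbb{\hat{D}}}(\cdot)\le\mathbb{E}_{\mathbb{\hat{D}}}[2(1-\phi_{\mathcal{T}}(h(x^i))_{y^i})]$, and linearity of expectation rewrites the right-hand side as $2-2\,\mathbb{E}_{\mathbb{\hat{D}}}[\phi_{\mathcal{T}}(h(x^i))_{y^i}]$. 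There is no genuine obstacle in this lemma; the only step requiring care is the normalization-plus-ordering argument forcing the true-label probability below $1/2$ upon misclassification, as it is precisely this argument—rather than any feature specific to $\mathcal{T}$—that pins down the constant $2$.
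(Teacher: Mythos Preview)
Your proposal is correct and takes essentially the same approach as the paper: a two-case analysis on whether the prediction is correct, with the misclassification case handled by showing $\phi_{\mathcal{T}}(h(x^i))_{y^i}\le 1/2$ via the ordering of logits and normalization. The only cosmetic difference is that the paper manipulates the explicit exponential fractions to reach $1/2$, whereas you argue directly from the probability-vector properties $\phi_{c^\star}\ge\phi_{y^i}$ and $\phi_{c^\star}+\phi_{y^i}\le 1$; these are equivalent presentations of the same argument.
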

\begin{proof}
    If $arg\max_{c'}h(x^i)_{c'}=y^i$,  which means $\mathbf{1}\left[arg\max_{c'}h(x^i)_{c'} \ne y^i \right]=0$.  Due to $\phi_{\mathcal{T}}(h(x^i))_{c'} \in [0,1]$, we have $2(1-\phi_{\mathcal{T}}(h(x^i))_{y^i} \ge 0$. So, the lemma holds in this case.
    If $arg\max_{c'}h(x^i)_{c'} \ne y^i$, which means $\mathbf{1}\left[arg\max_{c'}h(x^i)_{c'} \ne y^i \right]=1$. 
    Suppose $arg\max_{c'}h(x^i)_{c'}=\hat{c}$, we have:
    \begin{align*}
    \phi_{\mathcal{T}}(h(x^i))_{y^i} & = \frac{exp({h(x^i)_{y^i}}/\mathcal{T})}{\sum_{a=1}^{n_c}exp({h(x^i)_{a}}/\mathcal{T})} \\
    & \le \frac{exp({h(x^i)_{y^i}}/\mathcal{T})}{exp({h(x^i)_{y^i}}/\mathcal{T}) + exp({h(x^i)_{\hat{c}}}/\mathcal{T})} \\
    & \le \frac{exp({h(x^i)_{y^i}}/\mathcal{T})}{exp({h(x^i)_{y^i}}/\mathcal{T}) + exp({h(x^i)_{y^i}}/\mathcal{T})} \\
    & = \frac{1}{2},
  \end{align*}
  because the smaller the denominator, the greater the value.
  Thus, $2(1-\phi_{\mathcal{T}}(h(x^i)_{y^i}) \ge 1 = \mathbf{1}\left[arg\max_{c'}h(x^i)_{c'} \ne y^i \right]$. 
  In conclusion, the logit inequality is holds.
\end{proof}

\subsubsection{Bound with local risks}
Before we prove the generalization error bound over logit distillation loss, we first derive the bound with risks over the local dataset and model.
Here, we modify Lemma~\ref{hoeffding} for easier proving.
\begin{corollary}[Hoeffding’s Inequality over Local Risks]
\label{PAC_Hoeffding}
    Given the empirical distribution $\mathbb{\hat{D}}$ from samples $\mathcal{D} = [(x^1, y^1), \cdots, (x^m, y^m)]$ and true distribution $\mathbb{D}$, for any $t \in \mathbb{R}^{+}$,
    \begin{equation}
     \mathbb{E}_{\mathcal{D}}[e^{tm(\mathbf{R}_{\mathbb{D}}(h)-\mathbf{R}_{\mathbb{\hat{D}}}(h))}] \le e^{\frac{mC^2t^2}{8}}.
  \end{equation}
\end{corollary}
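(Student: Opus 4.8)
The plan is to read the left-hand side as the moment generating function of a sum of independent, bounded, mean-zero random variables, and then to string together exactly the three ingredients already set up in the preliminaries: the multiplicativity of Cramer--Chernoff exponents under independence (Lemma~\ref{CCbasis}), Hoeffding's Lemma (Lemma~\ref{hoeffding}), and the unbiasedness identity $\mathbb{E}_{\mathcal{D}}[\mathbf{R}_{\mathbb{\hat{D}}}(h)]=\mathbf{R}_{\mathbb{D}}(h)$ from Definition~\ref{KTdef1}.

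First I would fix $h$ and set $U^i := \ell(h(x^i),y^i)$ for $i=1,\dots,m$, where $(x^1,y^1),\dots,(x^m,y^m)$ are the i.i.d. samples constituting $\mathcal{D}$, with $(x^i,y^i)\sim\mathbb{\hat{D}}$. By the hypothesis $0\le\ell\le C$, each $U^i$ is valued in $[0,C]$, and independence of the samples makes the $U^i$ independent. Definition~\ref{KTdef1} gives $m\,\mathbf{R}_{\mathbb{\hat{D}}}(h)=\sum_{i=1}^m U^i$ and $m\,\mathbf{R}_{\mathbb{D}}(h)=\sum_{i=1}^m \mathbb{E}[U^i]$, hence
\[
  tm\bigl(\mathbf{R}_{\mathbb{D}}(h)-\mathbf{R}_{\mathbb{\hat{D}}}(h)\bigr)=t\sum_{i=1}^m\bigl(\mathbb{E}[U^i]-U^i\bigr).
\]

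Next I would apply Lemma~\ref{CCbasis} to the independent centered variables $\mathbb{E}[U^i]-U^i$ (equivalently, to $V^i:=-U^i$, since $\mathbb{E}[U^i]-U^i=V^i-\mathbb{E}[V^i]$), obtaining
\[
  \mathbb{E}_{\mathcal{D}}\Bigl[e^{t\sum_{i=1}^m(\mathbb{E}[U^i]-U^i)}\Bigr]=\prod_{i=1}^m\mathbb{E}\Bigl[e^{t(V^i-\mathbb{E}[V^i])}\Bigr].
\]
Then each factor is bounded by Hoeffding's Lemma (Lemma~\ref{hoeffding}): since $V^i$ ranges over the interval $[-C,0]$, whose length is $C$, we have $\mathbb{E}\bigl[e^{t(V^i-\mathbb{E}[V^i])}\bigr]\le e^{C^2t^2/8}$ for every $t\in\mathbb{R}^{+}$. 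Taking the product over the $m$ identical factors yields $e^{mC^2t^2/8}$, which is precisely the asserted bound.

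I do not expect a real obstacle; the statement is essentially a repackaging of Hoeffding's Lemma for an average of bounded losses. The only points that need a touch of care are (i) feeding $V^i=-U^i$ rather than $U^i$ into Hoeffding's Lemma so that the sign in the exponent matches $\mathbf{R}_{\mathbb{D}}(h)-\mathbf{R}_{\mathbb{\hat{D}}}(h)$ in the corollary, and (ii) recognizing that the effective interval length is $C$ (because $\ell\in[0,C]$), so the exponent is $mC^2t^2/8$ and not some larger constant. This corollary will then combine with the Chernoff bound (Lemma~\ref{chernoff}) and the Donsker--Varadhan formula (Lemma~\ref{DVvformula}) to deliver the PAC-Bayes bound for global logit distillation in the remainder of the subsection.
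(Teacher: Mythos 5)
Your proposal is correct and follows essentially the same route as the paper: the paper likewise combines the Cramer--Chernoff factorization (Lemma~\ref{CCbasis}) with Hoeffding's Lemma (Lemma~\ref{hoeffding}) using interval length $C$, the only cosmetic difference being that it takes $U^i=\mathbb{E}_{\mathcal{D}}[\ell(h(x^i),y^i)]-\ell(h(x^i),y^i)$ directly, which is the same sign bookkeeping you handle via $V^i=-U^i$.
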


\begin{proof}
    Combine Lemma~\ref{CCbasis} with Lemma~\ref{hoeffding}, suppose $\forall U^i\in [a^i,b^i]$, we have:
    \begin{equation*}
  \begin{split}
    \mathbb{E}\left[ e^{t\sum_{i=1}^{m}(U^i-\mathbb{E}[U^i])}\right] & = \prod_{i=1}^{m} \mathbb{E}\left[e^{t(U^i-\mathbb{E}[U^i])}\right] \\
    & \le \prod_{i=1}^{m} e^{\frac{(b^i-a^i)^2t^2}{8}}.
  \end{split}
  \end{equation*}
  Suppose  $\forall a^i, b^i$ satisfies $b^i-a^i \le b-a$, we get:
  \begin{equation*}
  \begin{split}
    \mathbb{E}\left[ e^{t\sum_{i=1}^{m}(U^i-\mathbb{E}[U^i])}\right] & \le \prod_{i=1}^{m} e^{\frac{(b^i-a^i)^2t^2}{8}} \\
    & = e^{\sum_{i=1}^{m}\frac{(b^i-a^i)^2t^2}{8}} \\
    & \le e^{\frac{m(b-a)^2t^2}{8}}.
  \end{split}
  \end{equation*}
 
 With $U^i=\mathbb{E}_{\mathcal{D}}[\ell(h(x^i),y^i)]-\ell(h(x^i),y^i), \mathcal{D}=\{(x^i, y^i)\}_{i=1}^{m} \sim \mathbb{\hat{D}}$ and Definition~\ref{KTdef1}, we derive that:
  \begin{align*}
    \sum_{i=1}^m(U^i-\mathbb{E}[U^i]) & = \sum_{i=1}^m (\mathbb{E}_{\mathcal{D}}[\ell(h(x^i),y^i)]-\ell(h(x^i),y^i) - \mathbb{E}[\mathbb{E}_{\mathcal{D}}[\ell(h(x^i),y^i)] -\ell(h(x^i),y^i)] ) \\
    & = \sum_{i=1}^m (\mathbb{E}_{\mathcal{D}}[\ell(h(x^i),y^i)]-\ell(h(x^i),y^i) - \mathbb{E}_{\mathcal{D}}[\ell(h(x^i),y^i)] + \mathbb{E}[\ell(h(x^i),y^i)]) \\
    & = \sum_{i=1}^m \mathbb{E}[\ell(h(x^i),y^i)]-\sum_{i=1}^m \ell(h(x^i),y^i) \\
    & = m[\mathbf{R}_{\mathbb{D}}(h)-\mathbf{R}_{\mathbb{\hat{D}}}(h)].
 \end{align*}
 According Definition~\ref{KTdef1}, $\ell \in [0,C]$, we consider $b-a=C$. 
 Apply the above equality to $\mathbb{E}\left[ e^{t\sum_{i=1}^{m}(U^i-\mathbb{E}[U^i])}\right]$ and consider the dataset ${\mathcal{D}}$, the proof ends.
\end{proof}

Then, following Corollary~\ref{PAC_Hoeffding} and Lemma~\ref{DVvformula}, we have the below theorem to connect the generalization bound with the risks over the local dataset and model.
\begin{theorem}
    For any $t \in \mathbb{R}^{+}$ and $\epsilon \in (0,1)$, 
    \begin{equation}
    \mathbf{Pr}\left[\forall \rho \in \mathcal{P}(\Theta), \mathbb{E}_{\theta \sim \rho}[\mathbf{R}_{\mathbb{D}}(h)] \le \mathbb{E}_{\theta \sim\rho}[\mathbf{R}_{\mathbb{\hat{D}}}(h)]+\frac{kl(\rho||\pi)-\log \epsilon}{tm} +\frac{t C^2}{8}\right] \ge 1-\epsilon.
  \end{equation}
\end{theorem}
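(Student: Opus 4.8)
The plan is to obtain a high-probability bound on the gap $\mathbb{E}_{\theta\sim\rho}[\mathbf{R}_{\mathbb{D}}(h)] - \mathbb{E}_{\theta\sim\rho}[\mathbf{R}_{\mathbb{\hat{D}}}(h)]$ uniformly over all posteriors $\rho\in\mathcal{P}(\Theta)$ by the standard PAC-Bayes change-of-measure argument, using Corollary~\ref{PAC_Hoeffding} as the sub-Gaussian control on the per-hypothesis deviation. First I would fix $t\in\mathbb{R}^{+}$ and consider the random variable $\exp\big(tm(\mathbf{R}_{\mathbb{D}}(h)-\mathbf{R}_{\mathbb{\hat{D}}}(h))\big)$ as a function of the parameter $\theta$. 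Taking expectation over the prior $\pi$ and then over the sample $\mathcal{D}$, and swapping the two expectations by Fubini (both integrands are nonnegative), Corollary~\ref{PAC_Hoeffding} gives $\mathbb{E}_{\mathcal{D}}\,\mathbb{E}_{\theta\sim\pi}\big[e^{tm(\mathbf{R}_{\mathbb{D}}(h)-\mathbf{R}_{\mathbb{\hat{D}}}(h))}\big]\le e^{mC^2t^2/8}$.

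Next I would apply the Donsker--Varadhan variational formula (Lemma~\ref{DVvformula}) with $h(\theta):=tm(\mathbf{R}_{\mathbb{D}}(h)-\mathbf{R}_{\mathbb{\hat{D}}}(h))$, which yields, for every $\rho\in\mathcal{P}(\Theta)$,
\begin{equation*}
\mathbb{E}_{\theta\sim\rho}\big[tm(\mathbf{R}_{\mathbb{D}}(h)-\mathbf{R}_{\mathbb{\hat{D}}}(h))\big] - kl(\rho\|\pi) \le \log \mathbb{E}_{\theta\sim\pi}\big[e^{tm(\mathbf{R}_{\mathbb{D}}(h)-\mathbf{R}_{\mathbb{\hat{D}}}(h))}\big] =: \log Z(\mathcal{D}).
\end{equation*}
This holds simultaneously for all $\rho$, so the supremum over $\rho$ of the left side is bounded by $\log Z(\mathcal{D})$. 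Then I would apply Chernoff/Markov (Lemma~\ref{chernoff}) in the form $\mathbf{Pr}_{\mathcal{D}}(Z(\mathcal{D}) > e^{mC^2t^2/8}/\epsilon) \le \epsilon\, \mathbb{E}_{\mathcal{D}}[Z(\mathcal{D})]\cdot e^{-mC^2t^2/8} \le \epsilon$, using the moment bound from the previous step. Hence with probability at least $1-\epsilon$ over $\mathcal{D}$ we have $\log Z(\mathcal{D}) \le mC^2t^2/8 - \log\epsilon$.

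On that event, for every $\rho$,
\begin{equation*}
tm\,\mathbb{E}_{\theta\sim\rho}[\mathbf{R}_{\mathbb{D}}(h)] - tm\,\mathbb{E}_{\theta\sim\rho}[\mathbf{R}_{\mathbb{\hat{D}}}(h)] \le kl(\rho\|\pi) + \frac{mC^2t^2}{8} - \log\epsilon,
\end{equation*}
and dividing by $tm$ gives exactly the claimed inequality $\mathbb{E}_{\theta\sim\rho}[\mathbf{R}_{\mathbb{D}}(h)] \le \mathbb{E}_{\theta\sim\rho}[\mathbf{R}_{\mathbb{\hat{D}}}(h)] + \tfrac{kl(\rho\|\pi)-\log\epsilon}{tm} + \tfrac{tC^2}{8}$. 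The main obstacle — or rather the one point requiring care — is the order of quantifiers: the exponential moment bound and the Chernoff step must be carried out \emph{before} invoking Donsker--Varadhan, so that the single high-probability event (depending only on $\mathcal{D}$, not on $\rho$) makes the bound uniform over all posteriors; doing it in the wrong order would only give a per-$\rho$ statement. Everything else is a direct substitution of the lemmas already established.
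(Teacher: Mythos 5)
Your proposal is correct and follows essentially the same route as the paper: the exponential moment bound from Corollary~\ref{PAC_Hoeffding} combined with Fubini over the prior, the Donsker--Varadhan formula (Lemma~\ref{DVvformula}), and a Chernoff/Markov step (Lemma~\ref{chernoff}) with $\epsilon=e^{-s}$, yielding a single event depending only on $\mathcal{D}$ and hence uniformity over $\rho$. The only cosmetic difference is that you apply Markov to the partition function $Z(\mathcal{D})=\mathbb{E}_{\theta\sim\pi}[e^{tm(\mathbf{R}_{\mathbb{D}}(h)-\mathbf{R}_{\hat{\mathbb{D}}}(h))}]$ before unpacking it via Donsker--Varadhan, whereas the paper applies Chernoff to the exponentiated supremum; since Donsker--Varadhan is an exact identity, these are the same random variable and the two presentations coincide.
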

\begin{remark}
  The parameter $\theta$ of $h$ is the full parameter, instead of the parameter of a classification head in the main text.
\end{remark}

\begin{proof}
    According to~\cite{PAC_loss_sum}, given the hypothesis $h$ (\ie the fine-tuned model from $h_0$) with its parameter $\theta \in \Theta$. Considering the influence of the prior model $h_0$ with its fixed distribution $\pi$, we integrate it into Corollary~\ref{PAC_Hoeffding} and exchange the integration and sample expectation by Fubini:
\begin{equation*}
   \mathbb{E}_{\mathcal{D}}\mathbb{E}_{\theta \sim \pi}[e^{tm[\mathbf{R}_{\mathbb{D}}(h)-\mathbf{R}_{\mathbb{\hat{D}}}(h)]}] \le e^{\frac{mC^2t^2}{8}}.   
\end{equation*}
 Thanks to Lemma~\ref{DVvformula}, the above bound can measure from prior model distribution $\pi$ to any model distribution $\rho$:
 \begin{equation*}
     \mathbb{E}_{\mathcal{D}}\left[e^{\underset{\rho \sim \mathcal{P}(\Theta)}{\sup} \{tm\mathbb{E}_{\theta \sim \rho}[\mathbf{R}_{\mathbb{D}}(h)-\mathbf{R}_{\mathbb{\hat{D}}}(h)]-kl(\rho||\pi)\}}\right] \le e^{\frac{mt^2C^2}{8}}.
\end{equation*}
 Then, divide both sides of the equation by $e^{\frac{mt^2C^2}{8}}$, it gets:
\begin{equation*}
     \mathbb{E}_{\mathcal{D}} \left[e^{\underset{\rho \sim \mathcal{P}(\Theta)}{\sup} \{tm\mathbb{E}_{\theta \sim \rho}[\mathbf{R}_{\mathbb{D}}(h)-\mathbf{R}_{\mathbb{\hat{D}}}(h)]-kl(\rho||\pi)\}-\frac{mt^2C^2}{8}}\right] \le 1.
\end{equation*}
Using Lemma~\ref{chernoff} with its $a=1$, fix $s>0$ and $\rho \in \mathcal{P}(\Theta)$,
\begin{equation*}
\begin{split}
    \mathbf{Pr} &\left[ \underset{\rho \sim \mathcal{P}(\Theta)}{\sup} \{ tm\mathbb{E}_{\theta \sim \rho} [\mathbf{R}_{\mathbb{D}}(h)-\mathbf{R}_{\mathbb{\hat{D}}}(h)]-kl(\rho||\pi)\}-\frac{mt^2C^2}{8} > s\right] \\
    & \le \mathbb{E}_{\mathcal{D}} \left[ e^{\underset{\rho \sim \mathcal{P}(\Theta)}{\sup} \{tm\mathbb{E}_{\theta \sim \rho}[\mathbf{R}_{\mathbb{D}}(h)-\mathbf{R}_{\mathbb{\hat{D}}}(h)]-kl(\rho||\pi)\}-\frac{mt^2C^2}{8}}\right]e^{-s} \\
    & \le e^{-s}.
\end{split}
\end{equation*}
So, $\exists \rho \in \mathcal{P}(\Theta)$,  let $\epsilon=e^{-s}$ to get:
\begin{equation*}
    \mathbf{Pr}\left[ \mathbb{E}_{\theta \sim \rho}[\mathbf{R}_{\mathbb{D}}(h)] > \mathbb{E}_{\theta \sim \rho} [\mathbf{R}_{\mathbb{\hat{D}}}(h)]+ \frac{kl(\rho||\pi)-\log \epsilon}{tm} + \frac{tC^2}{8}\right] \le \epsilon.
\end{equation*}
Thus, with probability at least $1-\epsilon$, for $\forall \rho \in \mathcal{P}(\Theta)$ and any hypothesis $h$ with parameter $\theta$,
\begin{equation*}
    \mathbb{E}_{\theta \sim \rho}[\mathbf{R}_{\mathbb{D}}(h)] \le \mathbb{E}_{\theta \sim \rho}[\mathbf{R}_{\mathbb{\hat{D}}}(h)] + \frac{kl(\rho||\pi)-\log \epsilon}{tm} + \frac{tC^2}{8}.
\end{equation*}
\end{proof}

Then, we consider the specific probability $\rho$ in the set of $\mathcal{P}(\Theta)$ according to~\cite{PAC_loss_sum}, 
\begin{equation}
    \forall \theta \in \Theta, \mathbf{R}_{\mathbb{D}}(h) \le \mathbf{R}_{\mathbb{\hat{D}}}(h) + \frac{-{\log \pi}-\log\epsilon}{tm} + \frac{tC^2}{8}.
\label{eq:boundwithrisks}
\end{equation}

\subsubsection{Bound with logit distillation loss}
After integrating the generalization error bound with the local risk, we further extend it to logit distillation loss.
Fix the prior probability $\pi$ of Eq. (\ref{eq:boundwithrisks}) and apply Definition~\ref{KTdef1}, it gives:
\begin{equation}
    \mathbf{R}_{\mathbb{D}}(h) \le \frac{1}{m}\sum_{i=1}^{m}\ell (h(x^i),y^i)+\frac{\lambda-\log \epsilon}{tm} + \frac{tC^2}{8},
\label{eq:lossbound_kd1}
\end{equation}
which $\lambda={-\log \pi}$. 
Notably, the above analysis are based on the independent variables. 
In FedHPL, the private samples among clients is independent in the \textit{IID} and \textit{Non-IID} data settings. 
In \textit{Dir} data setting, all local data are independent while not necessarily independent across clients.
However, the above analysis only needs variables in $\mathbf{R}_{\mathbb{\hat{D}}}(h)$ to be independent, that is to require samples in the local dataset $\mathcal{D}$ are independent.
So, the bound holds in all settings of FedHPL.
At the same time, the knowledge among clients is transmitted in the form of logits which are different even using the same image as the input because of different model parameters.

Now, we consider the generalization error over the KD loss $\ell^{kd}$ for client $k$ in Eq. (\ref{eq:global_objective}).
$\ell$ can be seen as the KD loss with its bound $C_k$ and mensurability.
Since clients only upload the correctly predicted logits, the weighted global client-specific logits for each class can be seen as the label space $\mathcal{Y}$.
For representing the global logits with $h_k$, we integrate the global per-class logit for client $k$ with local models and a weighted knowledge aggregation mechanism, then denoted as:
\begin{equation}
  \tilde{p}_{k,c}=\sum_{j=1}^{K}\tilde{\beta}_{k,j}\sum_{i=1}^{|\mathcal{\tilde{D}}_{j,c}|}p_{j,c}^{i}
  =\sum_{j=1}^{K} \tilde{\beta}_{k,j}\sum_{i=1}^{|\mathcal{\tilde{D}}_{j,c}|}h_j(x_{j,c}^i),
\label{eq:global_logits_kd}
\end{equation}
for all client $j \in \{1, \cdots, K\}$, where $h(x_{j,c}^i)$ represent the logit of label $c$ in client $j$.
Specially, we use $h_j(X_{j,c})$ to replace $\sum_{i=1}^{|\mathcal{\tilde{D}}_{j,c}|}h_j(x_{j,c}^i)$ for writing conciseness.

Then, we turn our attention to the loss function $\ell$ over KD loss.
With the definition of cross-entropy loss $\ell^{ce}(p,q)=-\sum_{c'=1}^{n_c}p_{c'}\log q_{c'}$, information entropy $I(p)=-\sum_{c'=1}^{n_c}p_{c'}\log p_{c'}$, and $KL(p||q) = \sum_{c'=1}^{n_c}p_{c'}\log \frac{p_{c'}}{q_{c'}}$, client $k$ computes the distillation distance (\ie KD loss $\ell^{kd}$) for each local sample $x_k^i$ with its local model $h_k$ (\ie a backbone $F_k$ with a classification model $H_k$ and trainable prompts $\mathsf{P}_k$) by:
\begin{equation}
\begin{split}
    \ell^{kd}(p||q) & = KL(\phi_\mathcal{T}(\sum_{j=1}^{K}\tilde{\beta}_{k,j} h_j(X_{j,c}))||\phi_\mathcal{T}(h_k(x_k^i))) \\
& = \sum_{c'=1}^{n_c} \left[\phi_{\mathcal{T}}(\sum_{j=1}^{K}\tilde{\beta}_{k,j} h_j(X_{j,c}))_{c'}\log\frac{\phi_{\mathcal{T}}(\sum_{j=1}^{K}\tilde{\beta}_{k,j} h_j(X_{j,c}))_{c'}}{\phi_{\mathcal{T}}(h_k(x_k^i))_{c'}}\right] \\
& = \sum_{c'=1}^{n_c} \left[\phi_{\mathcal{T}}(\sum_{j=1}^{K}\tilde{\beta}_{k,j} h_j(X_{j,c}))_{c'} \log (\phi_{\mathcal{T}}(\sum_{j=1}^{K}\tilde{\beta}_{k,j} h_j(X_{j,c}))_{c'})\right] \\
& - \sum_{c'=1}^{n_c} \left[\phi_{\mathcal{T}}(\sum_{j=1}^{K}\tilde{\beta}_{k,j} h_j(X_{j,c}))_{c'}\log (\phi_{\mathcal{T}}(h_k(x_k^i))_{c'})\right] \\
& = \ell^{ce}(\phi_{\mathcal{T}}(\sum_{j=1}^{K}\tilde{\beta}_{k,j} h_j(X_{j,c})), \phi_{\mathcal{T}}(h_k(x_k^i))) - I(\phi_{\mathcal{T}}(\sum_{j=1}^{K}\tilde{\beta}_{k,j} h_j(X_{j,c}))).
\end{split}
\label{CE_KD_eq1}
\end{equation}
Notably, $c'$ is the c-th component value in the local logit $p_k^i$ or global logit, whereas $c$ represents the category of image $x_k^i$ (\ie $c=y_k^i$) and $n_c$ is the number of labels.

Moreover, we use Lemma~\ref{softmax_bound} with the global logits for client $k$ to further analyze the error bound:
\begin{equation}
\begin{aligned}
    \mathbf{Pr}_{\mathbb{\hat{D}}}\left(\mathbf{1}\left[arg\max_{c'}h_k(x_k^i)_{c'} \ne y_k^i \right]\right) & \le 2- 2\mathbb{E}_{\mathbb{\hat{D}}} \left[\phi_{\mathcal{T}}(h_k(x_k^i))_{y_k^i} \right]\\
    & = 2- 2\mathbb{E}_{\mathbb{\hat{D}}} \left[\phi_{\mathcal{T}}(h_k(x_k^i))_{y_k^i} \right] \\
    & + 2\mathbb{E}_{\mathbb{\hat{D}}} \left[\phi_{\mathcal{T}}(\sum_{j=1}^{K}\tilde{\beta}_{k,j}h_j(X_{j,{y_k^i}}))_{y_k^i} \right]\\
    & - 2\mathbb{E}_{\mathbb{\hat{D}}} \left[\phi_{\mathcal{T}}(\sum_{j=1}^{K}\tilde{\beta}_{k,j}h_j(X_{j,{y_k^i}}))_{y_k^i} \right] \\
    & \le \underset{\text{aggregation error}}{\underbrace{2-2\mathbb{E}_{\mathbb{\hat{D}}} \left[\phi_{\mathcal{T}}(\sum_{j=1}^{K}\tilde{\beta}_{k,j}h_j(X_{j,{y_k^i}}))_{y_k^i} \right]}} \\
    & + \underset{\text{distillation similarity}}{\underbrace{2|\mathbb{E}_{\mathbb{\hat{D}}}[\phi_{\mathcal{T}}(\sum_{j=1}^{K}\tilde{\beta}_{k,j}h_j(X_{j,{y_k^i}}))_{y_k^i}]-\mathbb{E}_{\mathbb{\hat{D}}}[\phi_{\mathcal{T}}(h_k(x_k^i))_{y_k^i} ]|}}
\end{aligned}
\label{CE_KD_eq2}
\end{equation}

\begin{remark}
\label{KDremark}
Eq. (\ref{CE_KD_eq1}) indicates the additional information entropy that we have to transfer knowledge from $p$ (\ie global distribution) to $q$ (\ie local distribution), and Eq. (\ref{CE_KD_eq2}) shows that the model performance is related to global logits and the distribution similarity.
At the beginning of every global epoch, $I(\phi_{\mathcal{T}}(\sum_{j=1}^{K}\tilde{\beta}_{k,j} h_j(X_{j,{y_k^i}})))$ is fixed because of the already uploaded local logits to the central server and fixed weight factor $\tilde{\beta}_{k,j}$.
The more similar the distribution of $\sum_{j=1}^{K}\tilde{\beta}_{k,j} h_j(X_{j,{y_k^i}})$ and $h_k(x_k^i)_{y_k^i}$ are, the less additional information is required, further reducing the generalization error.
Moreover, if global aggregated logits become more precise, the probability of wrong prediction will decrease and further better guide local learning.
We also can adjust the local distribution closer to the global distribution by modifying the weight coefficient $\tilde{\beta}_{k,j}$ and uploading correctly predicted logits to the server to improve the aggregation performance.
\end{remark}

Finally, for any $\epsilon \in (0,1)$, with the probability at least $1-\epsilon$, we denote $\mathbf{R}_{\mathbb{{D}}}(h)$ in Eq. (\ref{eq:lossbound_kd1}) as $\mathbf{R}_{\mathbb{{D}}_k}^{kd}(h_k)$ for client $k$. 
Exploiting Eq. (\ref{eq:global_logits_kd}) with the number of local samples $|\mathcal{D}_k|$ and Eq. (\ref{CE_KD_eq1}) to Eq. (\ref{eq:lossbound_kd1}) for client $k$, the client-specific generalization error bound over global logit distillation is:
\begin{equation}
\begin{split}
    & \mathbf{R}_{\mathbb{D}_k}^{kd}(h_k) \le \frac{1}{|\mathcal{D}_k|}\sum_{i=1}^{|\mathcal{D}_k|} KL(\phi_{\mathcal{T}}(\sum_{j=1}^{K} \tilde{\beta}_{k,j} h_j(X_{j,{y_k^i}}))||\phi_{\mathcal{T}}(h_k(x_k^i)))+\frac{\lambda_k-\log\epsilon}{t|\mathcal{D}_k|} + \frac{tC_k^2}{8} \\
    & = \frac{1}{|\mathcal{D}_k|}\sum_{i=1}^{|\mathcal{D}_k|} \left[\ell^{ce}(\phi_{\mathcal{T}}(\sum_{j=1}^{K}\tilde{\beta}_{k,j} h_j(X_{j,{y_k^i}})), \phi_{\mathcal{T}}(h_k(x_k^i))) - I(\phi_{\mathcal{T}}(\sum_{j=1}^{K}\tilde{\beta}_{k,j} h_j(X_{j,{y_k^i}})))\right]\\
    & +\frac{\lambda_k-\log\epsilon}{t|\mathcal{D}_k|} + \frac{tC_k^2}{8},
\end{split}
\end{equation}
where we replace $m$ and $C$ with $|\mathcal{D}_k|$ and $C_k$, respectively.
The $\ell$ in Eq. (\ref{eq:lossbound_kd1}) is the loss in Eq. (\ref{CE_KD_eq1}).
Because the local logits have different labels, we use the corresponding label $y_k^i$ of $x_k^i$ to replace the $c$ in $h_j(X_{j,c})$ and denoted as $h_j(X_{j,{y_k^i}})$.
\begin{remark}
    In addition to distillation distance loss, the number of samples and the loss bound also influence the error. The lower the loss bound $C_k$ and the more samples $|\mathcal{D}_k|$ will improve generalization ability. The sum of weight coefficients $\tilde{\beta}_{k,j}$ is not required to equal 1. We can control $\tilde{\beta}_{k,j}$ to modify the global weighted logits and thus influence the distillation loss, further adjusting the error bound and improve the model performance.
\end{remark}
For simplification, we use $\sum_{j=1}^{K}\tilde{\beta}_{k,j} h_j(X_{j})$ and $h_k(X_{k})$ to represent all samples, and the generalization error bound over global logit distillation is denoted as:
\begin{align*}
 \mathbf{R}_{\mathbb{D}_k}^{kd}(h_k) & \le \ell^{ce}(\phi_{\mathcal{T}}(\sum_{j=1}^{K}\tilde{\beta}_{k,j} h_j(X_{j}), \phi_{\mathcal{T}}(h_k(X_k))) - I(\phi_{\mathcal{T}}(\sum_{j=1}^{K}\tilde{\beta}_{k,j} h_j(X_{j}))) \\
& +\frac{\lambda_k-\log\epsilon}{t|\mathcal{D}_k|} + \frac{tC_k^2}{8}.
\end{align*}

\subsection{Generalization bound}
Inspired by~\cite{PAC_sum}, we have $\mathbf{R}_{\mathbb{D}_k}(h_k) \le \sup \{\mathbf{R}_{\mathbb{D}_k}^{ce}(h_k), \mathbf{R}_{\mathbb{D}_k}^{kd}(h_k)\}= \mathbf{R}_{\mathbb{D}_k}^{ce}(h_k) + \mathbf{R}_{\mathbb{D}_k}^{kd}(h_k)$ for each client k with its local model $h_k$ and local dataset $\mathcal{D}_k = (X_k,Y_k)$.
Thus, it gets:
\begin{equation*}
\resizebox{0.95\textwidth}{!}{$
\begin{aligned}
    &\mathbf{R}_{\mathbb{D}_T}(h_k)  \le \mathbf{R}_{\hat{\mathbb{D}}_k}^{ce}(h_k) + \sqrt{\frac{kl(h_k||h_0)+\ln\sqrt{4|{\mathcal{D}}_k|}-\ln \epsilon}{2|{\mathcal{D}}_k|}} + \lambda_{\mathbb{D}_k, \mathbb{D}_T}(h_0)+ d_{\mathcal{C}_h}((\mathbb{D}_k)_X,(\mathbb{D}_T)_X)\\
    & + \ell^{ce}(\phi_{\mathcal{T}}(\sum_{j=1}^{K}\tilde{\beta}_{k,j} h_j(X_{j})), \phi_{\mathcal{T}}(h_k(X_{k}))) - I(\phi_{\mathcal{T}}(\sum_{j=1}^{K}\tilde{\beta}_{k,j} h_j(X_j)))+\frac{\lambda_k-\log \epsilon}{t|\mathcal{D}_k|} + \frac{tC_k^2}{8},
\end{aligned}
$}
\end{equation*}
where $\lambda_{{\mathbb{D}_k},{\mathbb{D}}_T}(h_0)=\mathbf{R}_{{\mathbb{D}}_k}(h_0)+\mathbf{R}_{{\mathbb{D}}_T}(h_0)$ and $\mathcal{C}_h=h\Delta\mathcal{H}$.
It is worth noting that in the process of global logit distillation, the weight factor $\gamma$ and the temperature $\mathcal{T}$ in the loss function $\ell_k^{kd}$ influence the generation bound by constraining the loss range $C_k$ and $\ell^{kd}$ more than directly affect $\mathbf{R}_{\mathbb{D}_k}^{kd}(h_k)$.

\section{Details of experimental settings}
\label{appendix_setting_details}
\subsection{Details of computational resources}
\label{appendix_com_resource}
The proposed FedHPL is implemented in PyTorch~\cite{PyTorch} 2.2.2 and NVIDIA GeForce RTX 4090 with CUDA version 12.4. 

\subsection{Details of dataset and model settings}
\label{appendix_model_dataset}
\textbf{Dataset setting.}
We illustrate the local data distribution of each client on benchmark datasets with different dataset settings in Figure~\ref{AppendixdatasetSetting}.
CIFAR10~\cite{CIFAR} consists of 10 classes, each of which contains 5,000 training images and 1,000 testing images.
CIFAR100~\cite{CIFAR} contains 100 classes, with 500 training images and 100 testing images per class.
SVHN~\cite{SVHN} is comprised of 73,257 training images and 26,032 testing images for the digital recognition task with 10 classes.
Especially, the amount of per-class data is different in SVHN.
For \textit{IID} (independent identical distribution) data setting, we randomly sample independent data from the entire dataset.
In the setting of data heterogeneity (imbalanced class distribution), we conduct two different statistical settings for each client by randomly sampling data according to the Dirichlet distribution.
For \textit{Dir} data setting, each client $k$ samples $q_{k,c} \sim Dir(\alpha_{k,c})$ for each class $c$ (${\textstyle \sum_{c=1}^{n_c}\alpha_{k,c}}=1$ in CIFAR10, ${\textstyle \sum_{k=1}^{K}\alpha_{k,c}}=1$ in CIFAR100 and SVHN and $\alpha_{k,c}$ is randomly generated instead of giving certain values), then randomly assigns $q_{k,c}$ proportion of samples from the benchmark dataset for each class $c$.
In other words, for each class $c$, the number of samples in class $c$ on client $k$ (\ie $|\mathcal{D}_{k,c}|$) is equal to $q_{k,c}*|\mathcal{D}_{k}|$.
In this case, different clients have overlapping samples.
For \textit{Non-IID} data setting, each client $k$ samples $q_{k,c} \sim Dir(\alpha_{k,c})$ (${\textstyle \sum_{k=1}^{N}\alpha_{k,c}}=1$), and only chooses the corresponding non-overlap proportion of each class.
In this setting, samples between clients are independent.
A higher $\alpha$ means more balanced data distribution.
In addition, clients can specify the minimum quantity that they could have in all settings.
We also investigate different $\alpha$ in the \textit{Non-IID} data setting in FedHPL over benchmark datasets.
For fair comparison, the settings of client models and the split of private dataset in all approaches are kept the same.

\begin{figure}[htbp]
    \centering
    \subfloat[CIFAR10-IID]{\includegraphics[width=.33\linewidth]{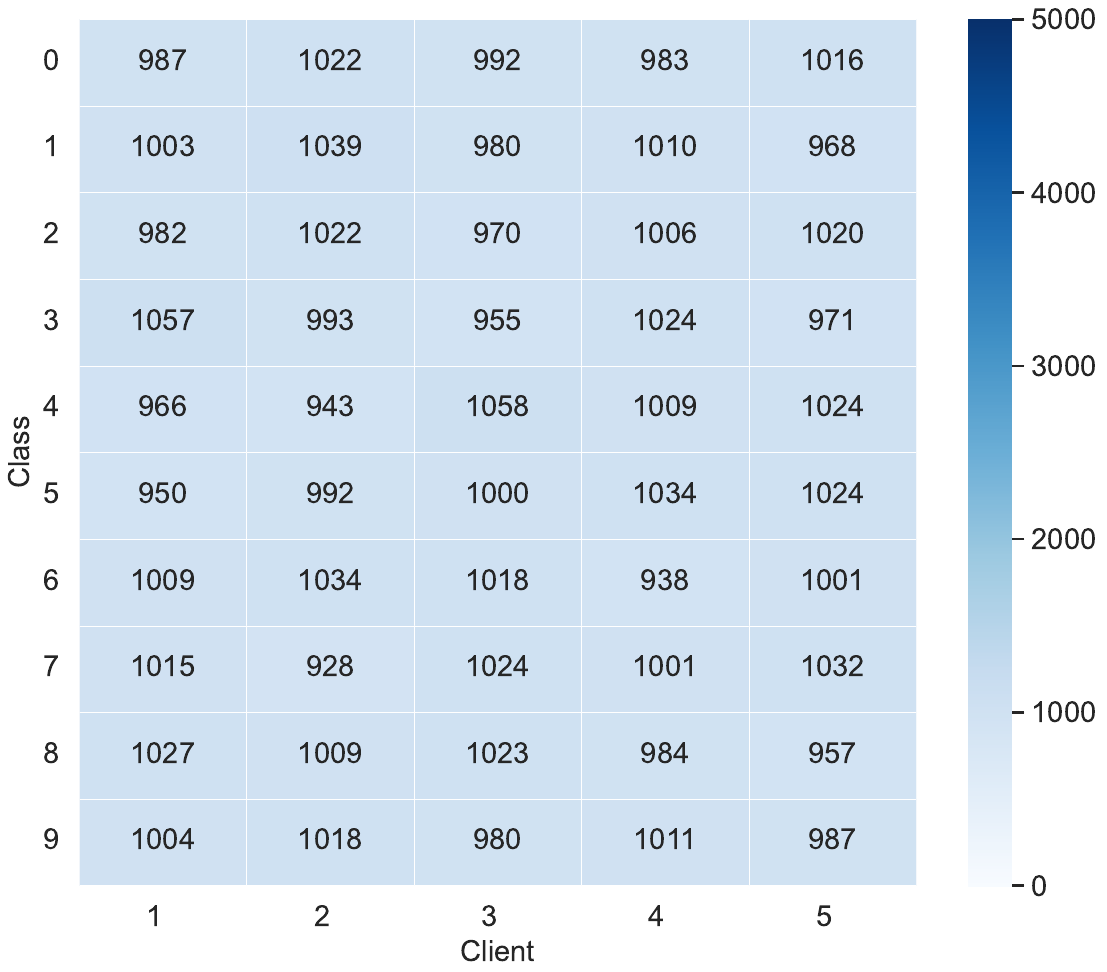}}
    \subfloat[CIFAR10-Dir]{\includegraphics[width=.33\linewidth]{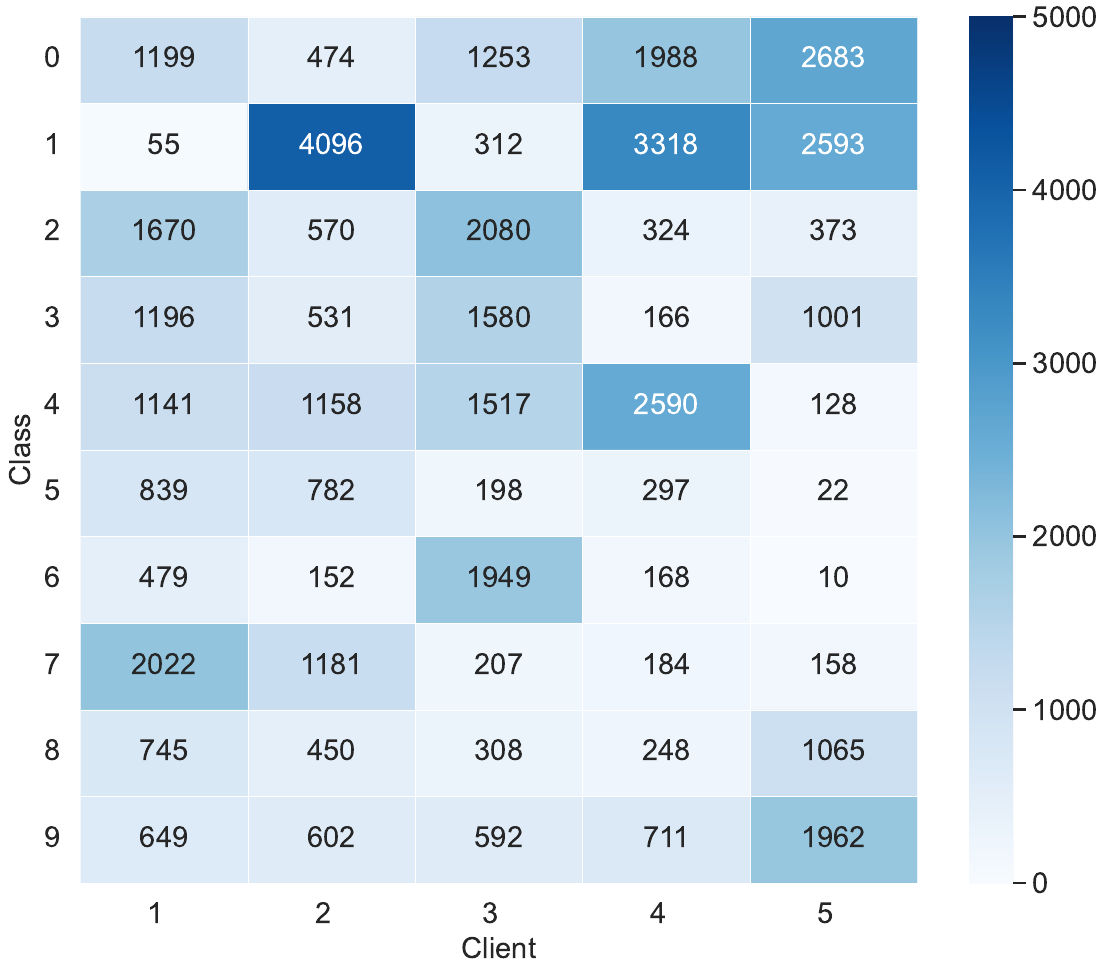}}
    \subfloat[CIFAR10-Non-IID]{\includegraphics[width=.33\linewidth]{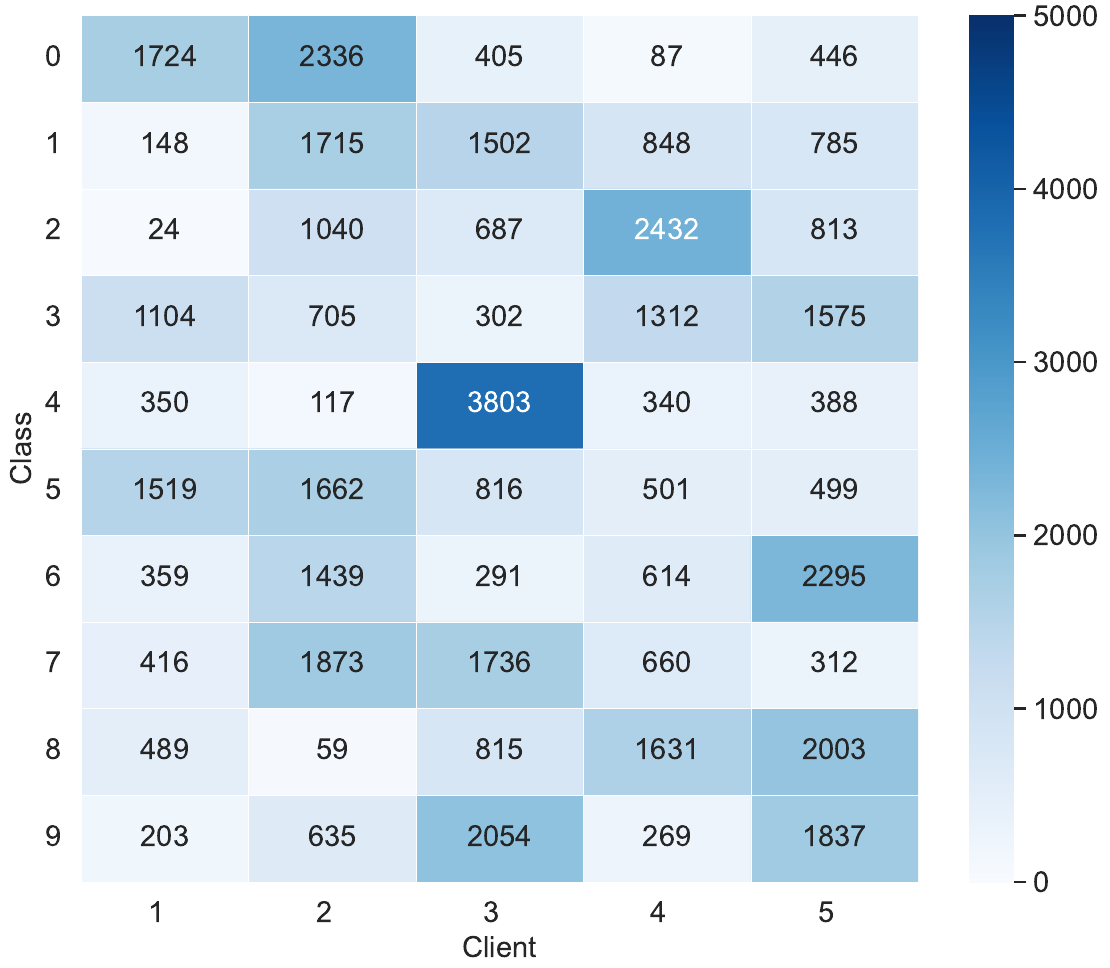}}\\
    \subfloat[CIFAR100-IID]{\includegraphics[width=.33\linewidth]{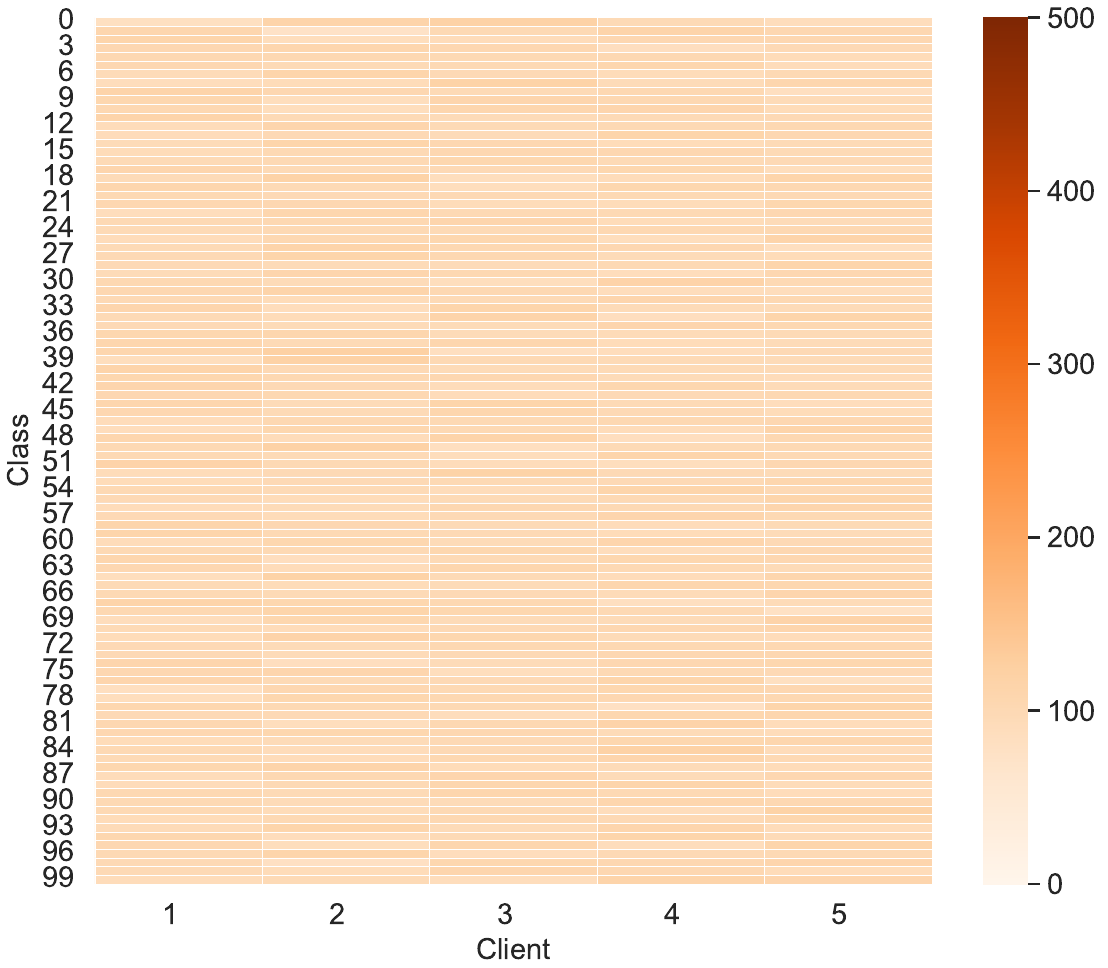}}
    \subfloat[CIFAR100-Dir]{\includegraphics[width=.33\linewidth]{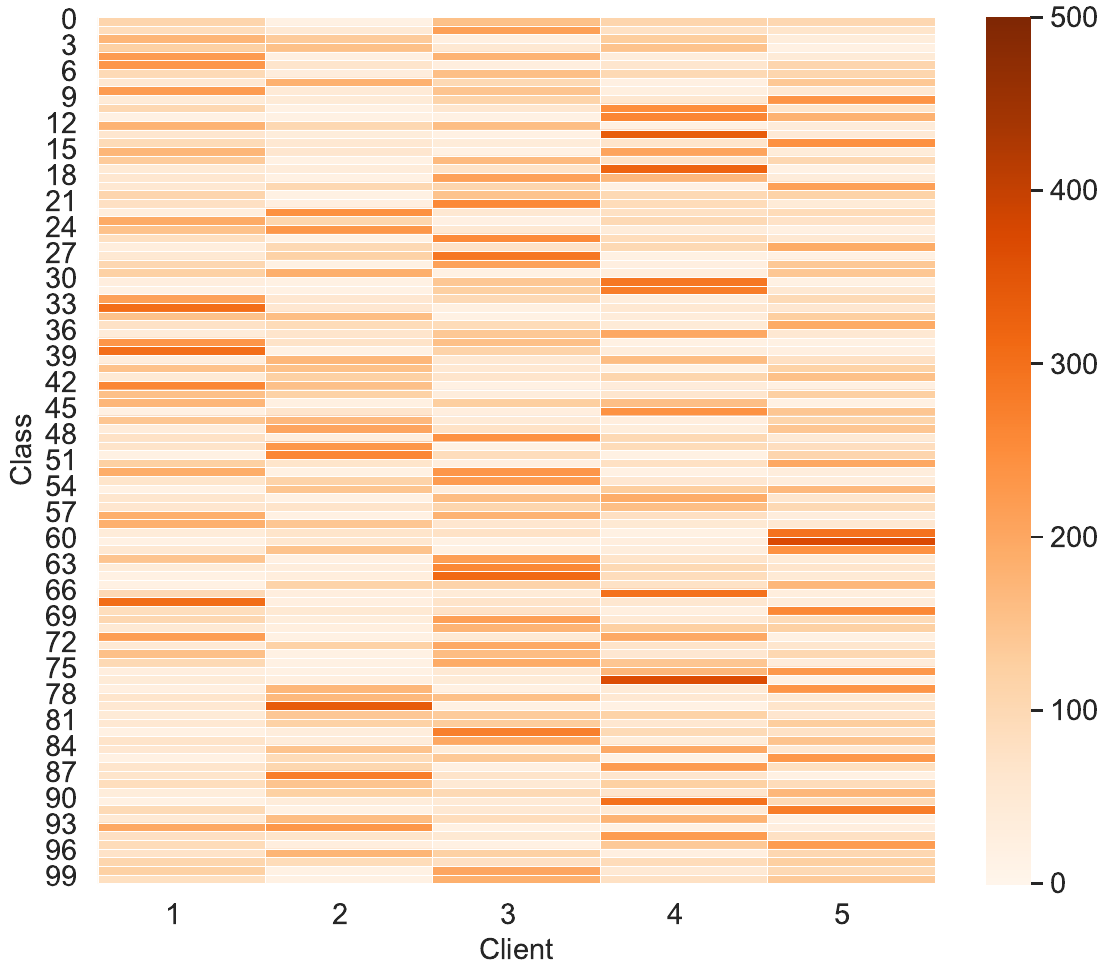}}
    \subfloat[CIFAR100-Non-IID]{\includegraphics[width=.33\linewidth]{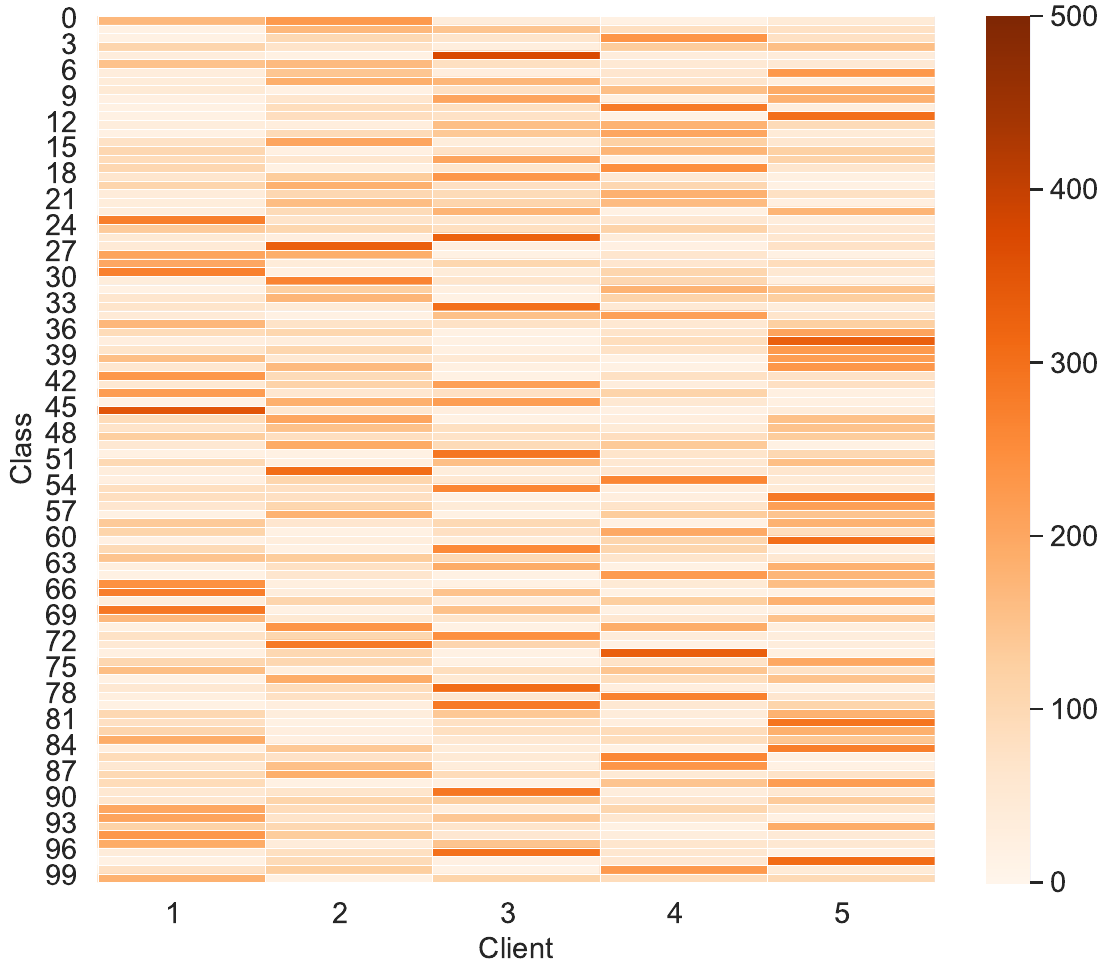}}\\
    \subfloat[SVHN-IID]{\includegraphics[width=.33\linewidth]{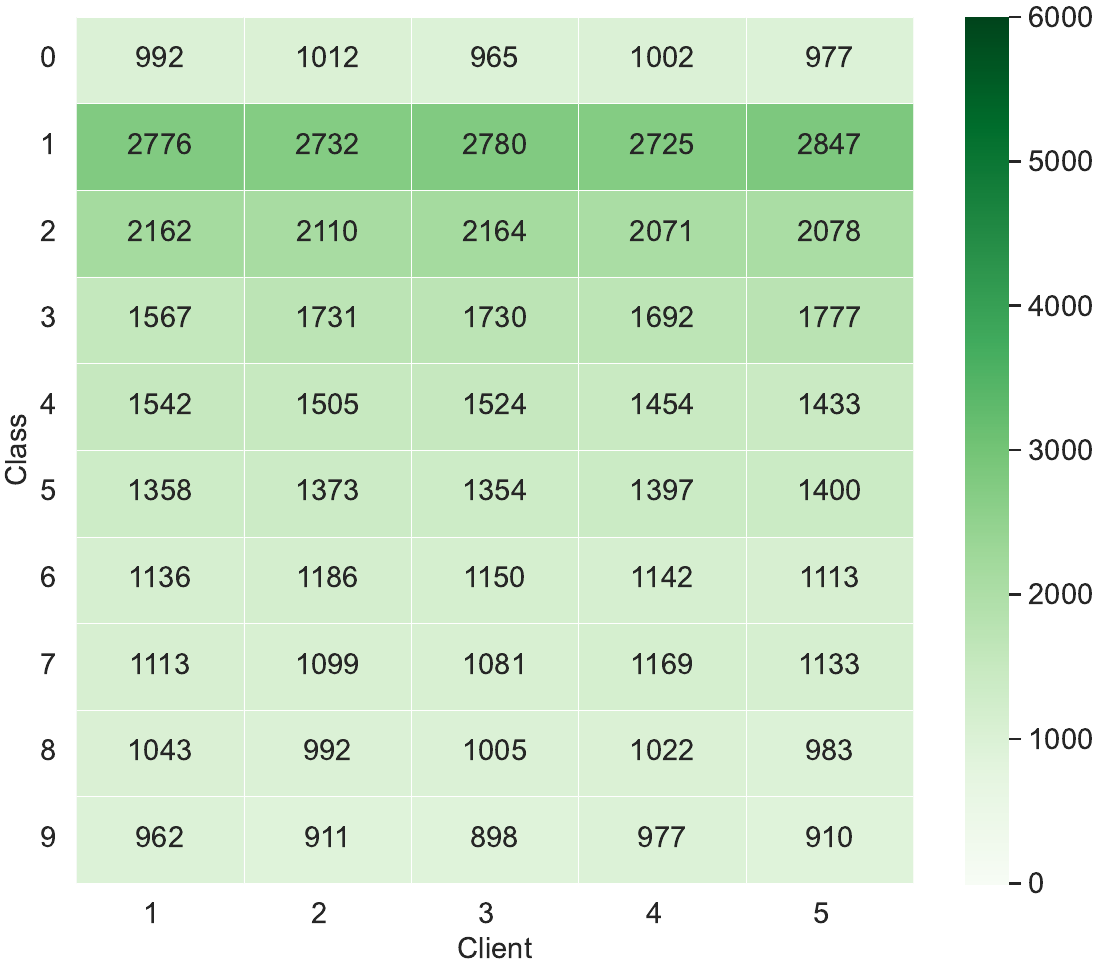}}
    \subfloat[SVHN-Dir]{\includegraphics[width=.33\linewidth]{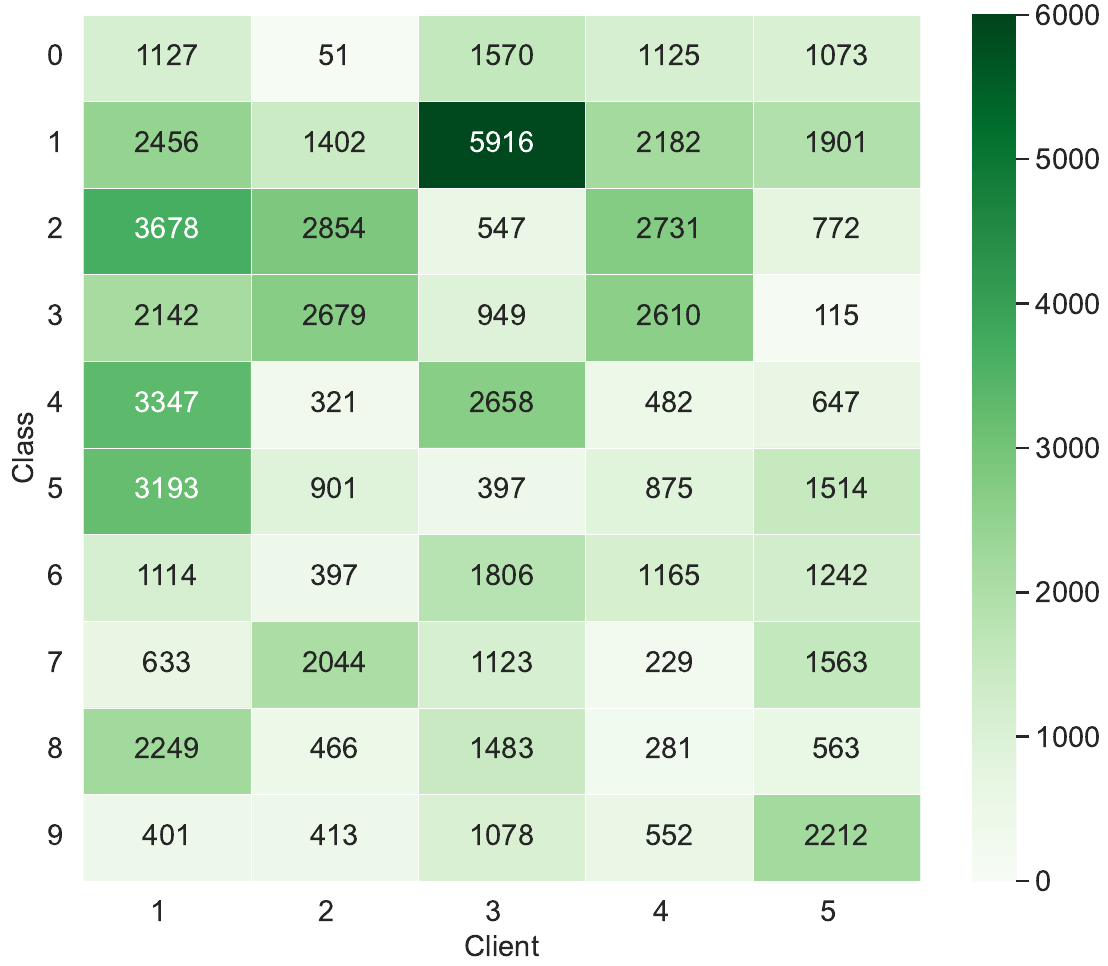}}
    \subfloat[SVHN-Non-IID]{\includegraphics[width=.33\linewidth]{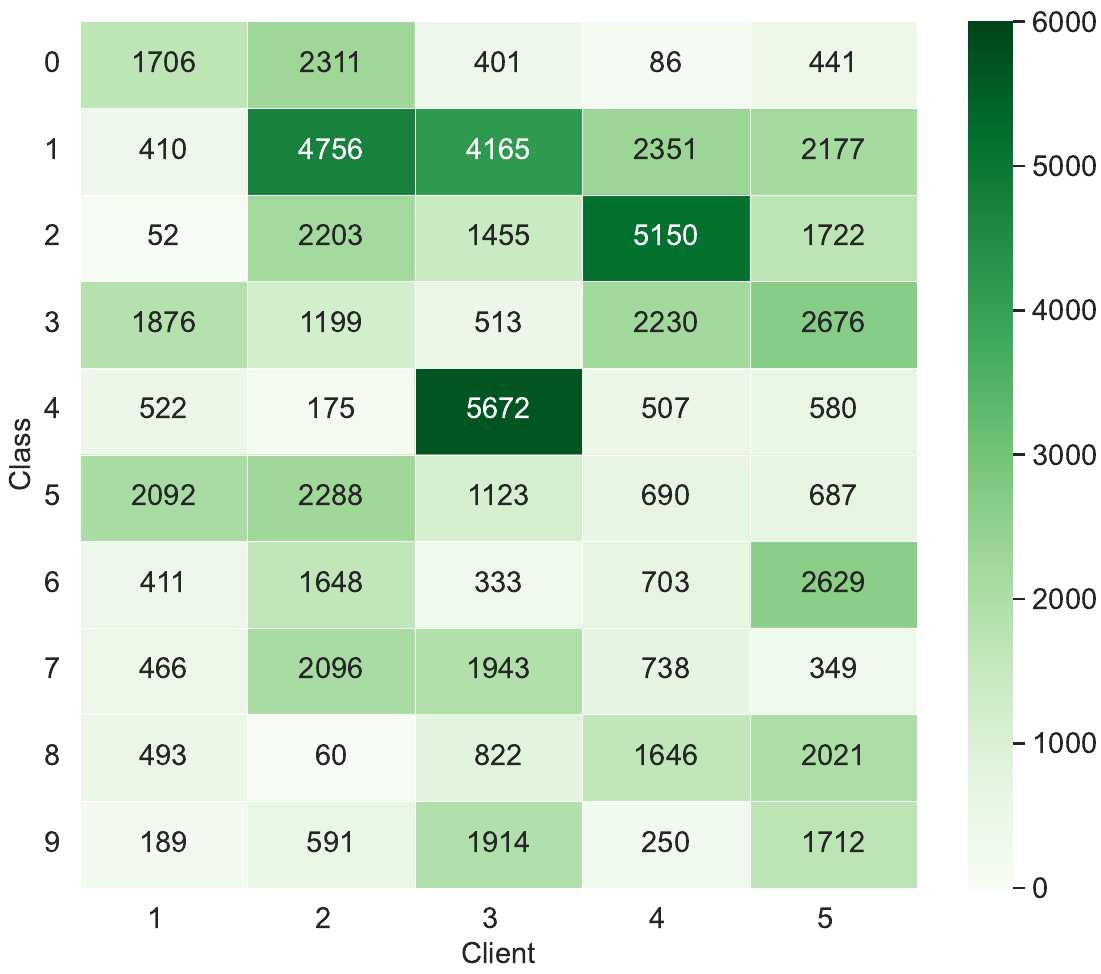}}\\
\caption{Heat maps for data sample distribution of each client on benchmark datasets. The subcaption `A-B' represents the experimental dataset and dataset setting.}
\label{AppendixdatasetSetting}
\end{figure}

\textbf{Model setting.}
We adopt ViT-B/16~\cite{ViT} as the pre-trained backbone over the experiments of the \textit{homogeneous model} setting.
Notably, we apply ResNet~\cite{ResNet} over FedHPL to fairly compare with other methods in the \textit{heterogeneous model} setting.
Then, we use ViT backbones for later ablation study and analysis.
The above backbones are all pre-trained on ImageNet-1k~\cite{ImageNet} which fine-tuned from ImageNet21k (image size: 224$\times$224) by supervised learning.
The specific model details are shown in Table~\ref{AppendixmodelSetting}.
Moreover, the training framework of FedHPL is based on FedGKT~\cite{FedGKT}, an open source in federated learning research with a distributed heterogeneous model environment.

\begin{table}[ht]
\caption{Client models in all model settings. All approaches in Table~\ref{tab:homo_model} apply the homogeneous model setting. FedHPL (CNN) and baselines in Table~\ref{tab:hete_model} use the heterogeneous ResNet model setting. FedHPL (ViT) in Table~\ref{tab:hete_model} uses the heterogeneous model setting (ViT backbones). Later experiments (ablation study in Figure~\ref{fig:ablation_theorem}, analysis in Table~\ref{tab:analysis_component}, and exploring experiments with FedHPL) use homogeneous and heterogeneous model settings (ViT backbones).}
\label{AppendixmodelSetting}
\centering
\resizebox{0.99\textwidth}{!}{
\begin{tabular}{c|c|c} 
\hline
\textbf{Model Setting}  & \textbf{Client models}   & \textbf{latent dimension} ($d_k$)       \\ 
\hline
homogeneous            & [ViT-B/16, ViT-B/16, ViT-B/16, ViT-B/16, ViT-B/16]  & [768, 768, 768, 768, 768]    \\
heterogeneous ResNet   & [ResNet18, ResNet34, ResNet50, ResNet34, ResNet18]  & [512, 512, 2048, 512, 512]\\
heterogeneous          & [ViT-S/16, ViT-B/16, ViT-L/16, ViT-B/16, ViT-S/16]  & [384, 768, 1024, 768, 384]\\
\hline
\end{tabular}
}
\label{tab:model_hete_settings}
\end{table}

\subsection{Details of implementation}
In the main text, we mainly describe the implementation of ViT backbones in FedHPL, and we use 10 prompts for the ResNet backbones in FedHPL.
We run 100 rounds for the heterogeneous ResNet setting in FedHPL, while in homogeneous and heterogeneous model settings (ViT backbones), we run 10 rounds in CIFAR10 and 15 rounds for CIFAR100 and SVHN datasets.
For all baselines, we run 100 global epochs for better comparison.
Note that in the evaluation phase of FedBABU~\cite{FedBABU} and the last local epoch of FedRep~\cite{FedRep}, we train 10 local epochs according to their original settings.
In pFedPT~\cite{pFedPT}, we use the original setting and ViT models.
In pFedPG~\cite{pFedPG}, we use 10 prompts with the supervised pre-trained backbones and 5 local client training epochs according to the original paper.
In FedMD~\cite{FedMD}, we perform 50 epochs of pre-training and model initialization before model tuning and execute 5 local training in each global round.

\subsection{Details of advanced baselines}
\label{Appendix2}
\textbf{Baselines in homogeneous models.}
\textbf{FedAVG}~\cite{FedAvg} proposes a communication-efficient distributed deep learning framework, which can achieve parameter aggregation by uploading updates of local models and then constructing a global model for decentralized clients.
\textbf{FedProx}~\cite{FedProx} applies a proximal term to adapt the model drift caused by multiple local updates in statistically heterogeneous datasets.
\textbf{SCAFFOLD}~\cite{SCAFFOLD} uses control variates in local training to correct for the local drift from the global models and further improve the model performance.
\textbf{FedBABU}~\cite{FedBABU} divides a local model into a feature extractor and a classification head and only updates the extractor during the training phase with a randomly initialized head which is further fine-tuned in the evaluation process.
\textbf{FedRep}~\cite{FedRep} shares the parameters of extractors for aggregation and loads the global parameters with private data to train the head.
\textbf{pFedPT}~\cite{pFedPT} firstly leverages a trainable personalized prompt generator to capture the local distribution through visual prompts with a frozen backbone, then trains the backbone with the frozen generator and sends backbone parameters for information aggregation.
\textbf{pFedPG}~\cite{pFedPG} trains the local prompt generator with a frozen pre-trained model by visual prompt tuning and generates global prompts by observing local optimization directions.

\textbf{Baselines in heterogeneous models.}
\textbf{FedGen}~\cite{FedGEN} addresses heterogeneous federated learning through a lightweight generator, which ensembles user information in a data-free manner and further regulates local training for improving generalization ability.  
\textbf{FedGH}~\cite{FedGH} proposes a model-heterogeneous FL framework by training a private feature extractor to generate local representations.
Then clients transfer them to a central server for training a global classification head and acquiring the global distribution.
\textbf{FedProto}~\cite{FedProto} uses the federated prototype learning framework instead of gradients to improve heterogeneous tolerance by regularizing the local model with aggregated local prototypes from different users.
\textbf{FedTGP}~\cite{FedTGP} introduces an adaptive-margin-enhanced contrastive learning mechanism to train class-wise prototypes among clients and a server, further improving the adaptation in the model heterogeneity environment.
\textbf{FedMD}~\cite{FedMD} simply leverages the knowledge distillation algorithm to achieve information sharing between different local models and a central server.
\textbf{FedHE}~\cite{FedHE} applies knowledge distillation based on logit vectors to train various heterogeneous models and reduces communication overheads.

\section{Details of experimental results}
\subsection{Parameter analysis}
\label{Appendixparameters}
We first briefly analyze the model parameters of the above FL algorithms from the perspective of trainable parameters and uploaded parameters.

\subsubsection{Homogeneous model experiments} 
\label{Appendixparameterhomo}
\textbf{Trainable parameters.}
The entire model parameters are trained at the local client side in \textbf{\textit{loss-based}} federated learning methods and FedRep, while FedBABU exploits some special training tricks for better performance in data heterogeneity. 
It trains the backbone during the training phase and keeps the classification head frozen while updating either the head or the entire model (we prefer the entire model) during the evaluation phase.
Additionally, \textbf{\textit{VPT-based}} federated learning methods train a generator for generating prompts which can capture the data distribution in the training phase.
Especially, pFedPT inserts prompts into the input space (\ie images) instead of the model and only freezes the backbone in stage 1 to train a prompt generator and train the entire model parameter with the frozen generator in stage 2 with the original input space (\ie 32$\times$32 pixels).
In pFedPG, only inserted prompts and classification head could be trained when the parameter of backbone is frozen after loading the pre-trained foundation model.
In summary, Table~\ref{tab:parameter_vit1} and Table~\ref{tab:parameter_vit2} show the number of trainable parameters over the above algorithms and our FedHPL.

\begin{table}
\caption{Trainable parameters (M) of backbones and entire models (the backbone with a linear head) in original ViT-S/16, ViT-B/16, and ViT-L/16 with 3 prompts. We also count the parameter of ViT in pFedPT and the generator in pFedPT and pFedPG, denoted as ViT, G1, and G2. Notably, pFedPG only updates prompts (10 prompts) and head parameters on the client side whereas the prompt generator is trained on the server side. So, the full trainable parameters in pFedPT (F1) consist of ViT and G1, while the parameters in pFedPG (F2) consist of G2, prompts, and a head.}
\label{tab:parameter_vit1}
\centering
\resizebox{0.98\textwidth}{!}{
    \begin{tabular}{c|cc|ccc|cc} 
    \hline
    Dataset        & Backbone (S/B/L)   & Entire Model (S/B/L) & ViT    & G1 (K)  & F1    & G2     & F2   \\
    \hline
    CIFAR10/SVHN   & 20.66/81.82/289.25 & 20.57/81.83/289.26   & 1.026  & 1.3125  & 1.028 & 1.512  & 1.526 \\
    CIFAR100       & 20.66/81.82/289.25 & 20.70/81.90/289.35   & 1.037  & 1.3125  & 1.039 & 1.512  & 1.592 \\
    \hline
    \end{tabular}
}
\end{table}

\begin{table}
\caption{Trainable parameters (K) in FedHPL with different insertion position, which consists of prompts and a classification head. $n$ represents the number of prompts for each backbone layer. For example, if $n=3$ and the insertion style changes from VPT-shallow to VPT-deep, the trainable parameter over the CIFAR10 and SVHN dataset changes from 4.89K (VPT-shallow) to 17.26K (VPT-deep) in ViT-S, 9.76K to 34.51K in ViT-B and 13.01K to 82.01K in ViT-L.}
\label{tab:parameter_vit2}
\centering
\resizebox{0.97\textwidth}{!}{
    \begin{tabular}{c|ccc|ccc} 
    \hline
    \multirow{2}{*}{Dataset} & \multicolumn{3}{c|}{VPT-shallow}   & \multicolumn{3}{c}{VPT-deep}                                                              \\ 
    \cline{2-7}
                             & ViT-S          & ViT-B          & ViT-L      & ViT-S        & ViT-B          
    & ViT-L     \\ 
    \hline
    CIFAR10/SVHN             & 3.76 + 0.375n  & 7.51 + 0.75n   & 10.01 + n  & 3.76 + 4.5n   & 7.51 + 9n   & 10.01 + 24n \\
    CIFAR100                 & 37.60 + 0.375n & 75.10 + 0.75n  & 100.10 + n & 37.60 + 4.5n  & 75.10 + 9n   & 100.10 + 24n \\
    \hline
    \end{tabular}
}
\end{table}

\textbf{Uploaded parameters.} 
For every global round for each client, the uploaded parameters are related to model parameters.
Specifically, FedAVG, FedProx, and SCAFFOLD upload entire model parameters to the central server, and SCAFFOLD needs to upload about twice the model parameters due to extra control variables.
FedBABU and FedRep only require clients to upload the backbone, saving communication costs for head parameters.
pFedPT only uploads the parameters of ViT and does not need to upload the generator G1.
pFedPG only uploads the update direction of prompts and further decreases communication overheads, only 7.5K parameters need to be uploaded for each client in every global round.
The communication overhead of FedHPL is related to the number of classes and correct logits and is further compressed by averaging the local logits of each category.
We analyze the communication overheads in Appendix~\ref{appendix_communication}.

\subsubsection{Heterogeneous model experiments}
\label{Appendixparameterhete}
We firstly explain why we choose the CNN backbones for the \textit{heterogeneous model setting}.
In most methods for model heterogeneity, they investigate the framework effectiveness in the CNN heterogeneous setting.
For example, FedGH varies the number of filters and the dimension in the CNN model and FedProto only considers the number heterogeneity of output channels in the convolutional layers.
Only FedTGP considers the Transformer architecture in their code but does not mention it in their original paper.
For a fair comparison with these SOTA methods, we apply our methods to CNN.

\textbf{Trainable parameters.}
In \textbf{\textit{model-based}}, \textbf{\textit{prototype-based}}, and \textbf{\textit{distillation-based}} federated learning algorithms, all model parameters are required to be trained while our FedHPL only needs to update prompts and a linear classification head.
The parameters of ResNet (\ie trainable parameters of baselines) are shown in Table~\ref{tab:parameter_resnet1}.
Notably, FedGen and FedTGP train an extra model at the server side, which are the knowledge generator and prototype generator and we show them in Table~\ref{tab:upload-hete-paras}.
It can be observed that baselines have different numbers of parameters because they apply distinct projection layers for the parameter aggregation in the model heterogeneous setting.
For example, a mapping layer in FedProto for aggregating prototypes of different latent dimensions, and a linear layer for aligning representations in FedGH.
The trainable parameters of FedHPL are shown in Table~\ref{tab:hete-FedHPL-para-ResNet}.
It can be seen that the parameters are increased with the image size and the number of classes.

\textbf{Uploaded parameters.}
In FedGen, clients upload head parameters and labels $|\mathcal{D}_k|$ while the server distributes the trained generator, global head, and label space.
In FedGH, clients upload the per-class average representations obtained from the output of a feature extractor, then the server uses them as inputs to train a global classification head and further distributes the head to each client.
In FedHE, only the per-class average logits are required to upload, and then the server aggregates and distributes the global per-class logits.
In FedMD, clients upload predicted logits on a shared public dataset to the server, and then the server aggregates and distributes the global logits.
For FedProto and FedTGP, clients and a server exchange the local prototypes and global prototypes, while the former exploits an averaging parameter method and the latter applies a generator to produce global prototypes.
In addition, the methods of uploading per-class parameters also upload the label space for distinguish labels.
The uploaded parameters with extra model architectures of baselines are illustrated in Table~\ref{tab:upload-hete-paras}.

\begin{table}[ht]
\caption{Entire model parameters (M) of ResNet in the heterogeneous model setting. We also count the model parameters of FedHPL with different image sizes (32 $\times$ 32 and 224 $\times$ 224).}
\label{tab:parameter_resnet1}
\centering
    \begin{tabular}{c|ccc|ccc} 
    \hline
    \multirow{2}{*}{Method} & \multicolumn{3}{c|}{CIFAR10/SVHN}                            & \multicolumn{3}{c}{CIFAR100}                   \\ 
    \cline{2-7}
                             & ResNet18    & ResNet34   & ResNet50      & ResNet18      & ResNet34      & ResNet50       \\ 
    \hline
    FedGen                   & 10.664      & 20.304     & 22.424        & 10.708        & 20.348        & 22.468         \\
    FedGH                    & 10.907      & 20.547     & 23.417        & 10.951        & 20.591        & 23.461         \\
    FedProto                 & 11.664      & 21.304     & 23.424        & 11.708        & 21.348        & 23.468         \\
    FedTGP                   & 10.664      & 20.304     & 22.424        & 10.708        & 20.348        & 22.468         \\
    FedMD                    & 10.661      & 20.301     & 22.451        & 10.705        & 20.345        & 22.627         \\
    FedHE                    & 10.656      & 20.296     & 22.431        & 10.700        & 20.340        & 22.607         \\
    $\text{FedHPL}^{32}$     & 10.665      & 20.305     & 22.440        & 10.709        & 20.349        & 22.616  \\
    $\text{FedHPL}^{224}$    & 10.671      & 20.311     & 22.446        & 10.715        & 20.355        & 22.622  \\
    \hline
    \end{tabular}
\end{table}

\begin{table}[ht]
\caption{Trainable parameters of FedHPL in ResNet18, ResNet34, and ResNet50 with different image sizes (32/224). The trainable parameters in ResNet18 and ResNet34 are the same. We exhibit them in normal font, whereas we display the parameters of ResNet50 in \textbf{bold} font. Prompts inserted to the left and right of input images are denoted as `prompt$\_$lr'. Prompts inserted to the top and bottom of input images are denoted as `prompt$\_$tb'. The parameters of the head are independent of the image size. We also count the sum of trainable parameters with the different image sizes.}
\label{tab:hete-FedHPL-para-ResNet}
\centering
\resizebox{0.98\textwidth}{!}{
\begin{tabular}{c|ccccc} 
\hline
Dataset      & prompt\_lr & prompt\_tb & head (K) & $\text{sum}^\text{32}$ (K) & $\text{sum}^\text{224}$ (K)\\ 
\hline
CIFAR10/SVHN & $\text{576}^\text{32}$/$\text{4032}^\text{224}$ &$\text{684}^\text{32}$/$\text{4140}^\text{224}$ & 5.01/\textbf{20.01}     & 6.24/\textbf{21.24}  & 12.99/\textbf{27.99}\\
CIFAR100     & $\text{576}^\text{32}$/$\text{4032}^\text{224}$ &$\text{684}^\text{32}$/$\text{4140}^\text{224}$ & 50.10/\textbf{200.10}   & 51.33/\textbf{201.33}& 58.08/\textbf{208.08}\\
\hline
\end{tabular}
}
\end{table}

\begin{table}[ht]
\caption{Uploaded parameters of each client. 
The generator parameter of FedGen and the prototype generator in FedTGP are denoted as $G$ and $G_p$.
The server ($H_{g}$) in FedGH needs to distribute the global head parameter.
The output dimension in FedMD is the number of the private classes $n_c$ and public classes $n_p$ ($n_p$ is usually equal to 10).
The communication of FedHPL can further compress by averaging the local logits on the category and we analyze it in Appendix~\ref{appendix_communication}.
}
\label{tab:upload-hete-paras}
\centering
\begin{tabular}{c|c|c|l} 
\hline
Method         & CIFAR10/SVHN             & CIFAR100                  & Remarks (CIFAR10/SVHN, CIFAR100)                                                  \\ 
\hline
FedGen   & 5130+$|\mathcal{D}_k|$& 51300+$|\mathcal{D}_k|$& $G$: 0.507M / 0.551M         \\
FedGH    & 5130& 51300& $H_{g}$: 5130 / 51300          \\
FedProto & 5130& 51300& ($n_c$) $\times$ per-class average prototypes + $n_c$               \\
FedTGP   & 5130& 51300& $G_p$: 0.506M / 0.550M     \\
FedMD    & 21$|\mathcal{D}_p|$& 111$|\mathcal{D}_p|$& ($n_c + n_p + 1$) $\times$ public samples ($|\mathcal{D}_p|$)\\
FedHE    & 110& 10100& per-class average logits       \\      
FedHPL   & 11$|\mathcal{\tilde{D}}_k|$& 101$|\mathcal{\tilde{D}}_k|$& weighted logits      \\
FedHPL+  & 110& 10100& after compression by category      \\
\hline
\end{tabular}
\end{table}

\subsection{Analysis of communication cost}
\label{appendix_communication}
In this subsection, we simply analyze the communication cost among clients from the number of uploaded parameters in a global communication round.

\textbf{Uploaded parameters in baselines.}
In approaches for the \textit{homogeneous model} setting, the communication cost is related to models.
We have explained this in detail in \textbf{Uploaded Parameters} of Appendix~\ref{Appendixparameterhomo} and provided some specific values in Table~\ref{tab:parameter_vit1}. 
In methods for the \textit{heterogeneous model} setting, the communication overhead can refer to \textbf{Uploaded Parameters} of Appendix~\ref{Appendixparameterhete}. 

\textbf{Uploaded parameters in FedHPL.}
We first analyze from a theoretical perspective.
Given all uploaded logits $\{p_{j}^i\}_{i=1}^{|\mathcal{\tilde{D}}_j|}$ from client $j$, the server aggregates logits for each client $k$:
\begin{equation}
\begin{split}
\tilde{p}_{k,c} &=\frac{\sum_{j=1}^{K}\beta_{k,j}\sum_{\forall (p_{j}^{i},y_{j}^{i})\in \vec{p}_j, y_{j}^{i}=c}p_{j}^{i}}{1+\sum_{j=1}^{K}\beta_{k,j}|{\mathcal{D}}_{j,c}|}=\sum_{j=1}^{K}\tilde{\beta}_{k,j}\sum_{i=1}^{|\mathcal{\tilde{D}}_{j,c}|}p_{j,c}^{i} \\
& =\sum_{j=1}^{K}\tilde{\beta}_{k,j}|\tilde{\mathcal{D}}_{j,c}|\bar{p}_{j,c}
=\sum_{j=1}^{K}\tilde{\beta}_{k,j,c}\bar{p}_{j,c}, 
\end{split}
\label{eq:appendix_logit}
\end{equation}
where $\tilde{\beta}_{k,j,c}=\tilde{\beta}_{k,j}|\tilde{\mathcal{D}}_{j,c}|$ represents the weight coefficient for client $j$ to client $k$ in class $c$ and $\bar{p}_{j,c}=\sum_{i=1}^{|\mathcal{\tilde{D}}_{j,c}|}p_{j,c}^{i}/|\mathcal{\tilde{D}}_{j,c}|$.
The size of the average logit $\bar{p}_{j,c} \in \mathbb{R}^{n_c}$ is only related to the number of classes $n_c$ and is independent of the number of correct logits $|\mathcal{\tilde{D}}_{j,c}|$.
Thus, clients in FedHPL can reduce the communication cost by averaging local logits by label and uploading the per-class knowledge $\{\bar{p}_{j,c}\}_{c=1}^{n_c}$ with the corresponding count value $|\mathcal{\tilde{D}}_{j,c}|_{c=1}^{n_c}$.
Furthermore, clients have fewer convergence rounds compared with baselines, further reducing the communication cost.
Next, we demonstrate the effectiveness of the average uploading mechanism (upload the per-class average logits $\{\bar{p}_{j,c}\}_{c=1}^{n_c}$ with extra count values $\{|\mathcal{\tilde{D}}_{j,c}|\}_{c=1}^{n_c}$)  from the experimental perspective.

As shown in Table~\ref{tab:comm_sim}, we exhibit the logit communication cost among all clients over the first and the last global round in FedHPL with the mechanism of uploading all correct logits. 
The communication cost among all clients in FedHPL with the average uploading mechanism is a constant, which is shown in the Table caption.
We can observe that the logit communication cost can compress to 1\% on CIFAR100 dataset and 0.1\% on CIFAR10 and SVHN datasets with the average uploading mechanism compared to the original mechanism.
It is also evident that the compression degree will gradually increase with the improvement of model performance because the number of correct logits will increase and further raise the overhead in the mechanism of uploading all correct logits, while the cost still remains constant in the average uploading technique because the number of labels is fixed.
Furthermore, in order to verify whether Eq. (\ref{eq:appendix_logit}) holds and whether the average uploading mechanism is equivalent to the original uploading mechanism, we perform experiments on whether using the same local logits can generate the same global logits under these two uploading methods.
The results show that the cosine similarity of all global logits between the two mechanisms is 1, which proves that the two methods produce consistent global per-class logits with the same local logits.

Then we test the model performance of the two uploading mechanisms in the same testing environment.
As illustrated in Table~\ref{tab:comm_acc}, the test accuracy of clients of the two methods is basically the same in the same testing environment, further proving our theory.
The slight difference is due to the random model training on each batch with the SGD optimizer.
So, clients can reduce the communication cost with the average uploading mechanism while maintaining the comparable performance to the original uploading mechanism which uploads all correctly predicted logits.

\begin{table}[ht]
\caption{The logit communication cost (M) in the first and last global round over all clients when uploading all correct logits. The communication cost with the average uploading mechanism is a constant (CIFAR10: 0.54K; CIFAR100: 49.32K; SVHN: 0.54K).}
\label{tab:comm_sim}
\centering
\subfloat[homogeneous Model]{
\resizebox{0.49\textwidth}{!}{
    \begin{tabular}{c|cc|cc|cc} 
    \hline
    \multirow{2}{*}{Data}& \multicolumn{2}{c|}{CIFAR10}                                                                  & \multicolumn{2}{c|}{CIFAR100}                                                                  & \multicolumn{2}{c}{SVHN}                                                                        \\ 
    \cline{2-7}
                   & First  & Last   & First   & Last   & First  & Last    \\ 
    \hline
    IID            &  0.445&   0.521&  4.085&  4.759& 0.688&0.749\\
    Dir            &  0.506&   0.524&  4.015&  4.741& 0.696&0.751\\
    Non-IID        &  0.510&   0.523&  3.455&  4.711& 0.702&0.749\\
    \hline
    \end{tabular}
}}
\subfloat[heterogeneous Model]{
\resizebox{0.49\textwidth}{!}{
    \begin{tabular}{c|cc|cc|cc} 
    \hline
    \multirow{2}{*}{Data}& \multicolumn{2}{c|}{CIFAR10}                                                                  & \multicolumn{2}{c|}{CIFAR100}                                                                  & \multicolumn{2}{c}{SVHN}                                                                        \\ 
    \cline{2-7}
                   & First  & Last   & First   & Last   & First  & Last    \\ 
    \hline
    IID            &  0.496&   0.524&  4.198&  4.744& 0.583&0.737\\
    Dir            &  0.514&   0.524&  3.629&  4.726& 0.622&0.736\\
    Non-IID        &  0.531&   0.524&  4.185&  4.733& 0.514&0.739\\
    \hline
    \end{tabular}
}}
\end{table}
\vspace{-1em}
\begin{table}[ht]
\caption{The test average accuracy (\%) among clients in FedHPL with ViT backbones. `Average' represents the average uploading mechanism, while `All' represents the original uploading mechanism that uploading all qualified logits without compression.}
\label{tab:comm_acc}
\centering
\subfloat[homogeneous Model]{
\resizebox{0.49\textwidth}{!}{
    \begin{tabular}{c|cc|cc|cc} 
    \hline
    \multirow{2}{*}{Data} & \multicolumn{2}{c|}{CIFAR10}   & \multicolumn{2}{c|}{CIFAR100}   & \multicolumn{2}{c}{SVHN}  \\ 
    \cline{2-7}
                      & Average  & All    & Average  & All    & Average  & All \\ 
    \hline
    IID           & 97.96    &        97.84&          89.66&        89.66&          95.51& 95.49\\
    Dir          & 96.76& 96.64& 86.92    & 86.91  & 93.21& 93.12\\
    Non-IID              & 96.76&        96.85& 86.65&        86.68& 91.88    & 91.37 \\
    \hline
    \end{tabular}
}}
\subfloat[heterogeneous Model]{
\resizebox{0.49\textwidth}{!}{
    \begin{tabular}{c|cc|cc|cc} 
    \hline
    \multirow{2}{*}{Data}& \multicolumn{2}{c|}{CIFAR10}   & \multicolumn{2}{c|}{CIFAR100}   & \multicolumn{2}{c}{SVHN}  \\ 
    \cline{2-7}
                      & Average  & All    & Average  & All    & Average  & All \\ 
    \hline
    IID& 96.93    &        96.68& 88.86&        88.78& 94.32&       94.38\\
    Dir& 96.06& 96.23& 85.88    & 85.87  & 91.49& 91.40\\
    Non-IID& 96.03& 95.38& 85.24& 85.46& 89.40    & 89.62 \\
    \hline
    \end{tabular}
}}
\end{table}

\subsection{Details of model performance}
\label{Appendix_table_details}
\textbf{The details of Table~\ref{tab:homo_model}.}
Figure~\ref{fig:homo-model-label-acc} reports the per-class average test accuracy among clients in all datasets over the \textit{homogeneous model} setting.
Because the central server in \textbf{\textit{personalized}} and \textbf{\textit{VPT-based}} federated learning baselines aggregates parameters for assisting clients to train private local models instead of generating a global model like \textbf{\textit{loss-based}} federated learning methods, we use the average test accuracy of all clients to represent the global performance.
In \textbf{\textit{loss-based}} federated learning methods, we test the accuracy of the global model.
The paper follows the configuration throughout all experiments when clients train their distinct local models with the global knowledge from the server instead of generating a global model.
We can observe that pFedPG and our FedHPL are obviously better than others on model performance, but our method can further accommodate the \textit{heterogeneous model} setting while pFedPG cannot.
From the result on CIFAR100 dataset, it can be inferred that the pre-trained parameters can provide a strong representation to local models in complex multi-class downstream tasks, thus significantly improving the model performance.

\begin{figure}[htbp]
    \centering
    \subfloat[CIFAR10-IID]{\includegraphics[width=.32\linewidth]{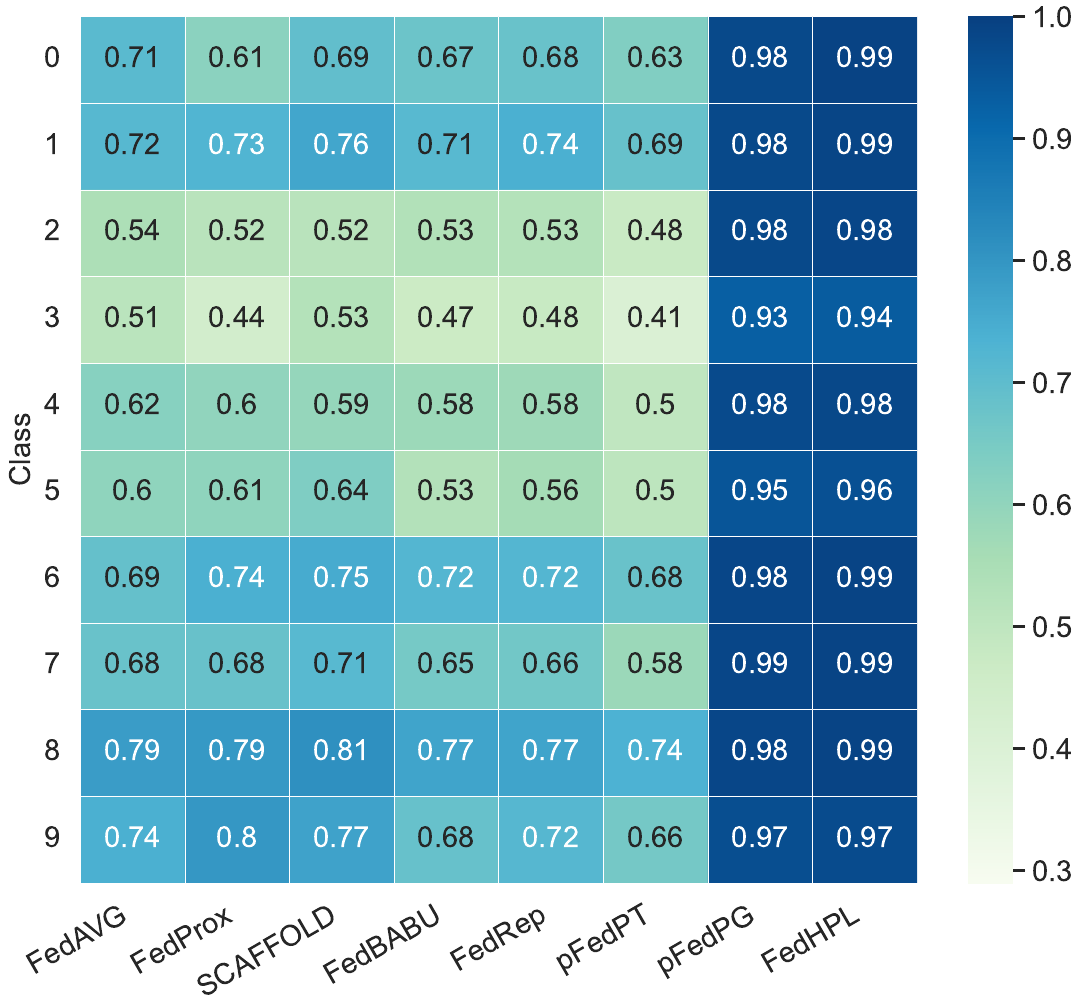}}
    \subfloat[CIFAR10-Dir]{\includegraphics[width=.32\linewidth]{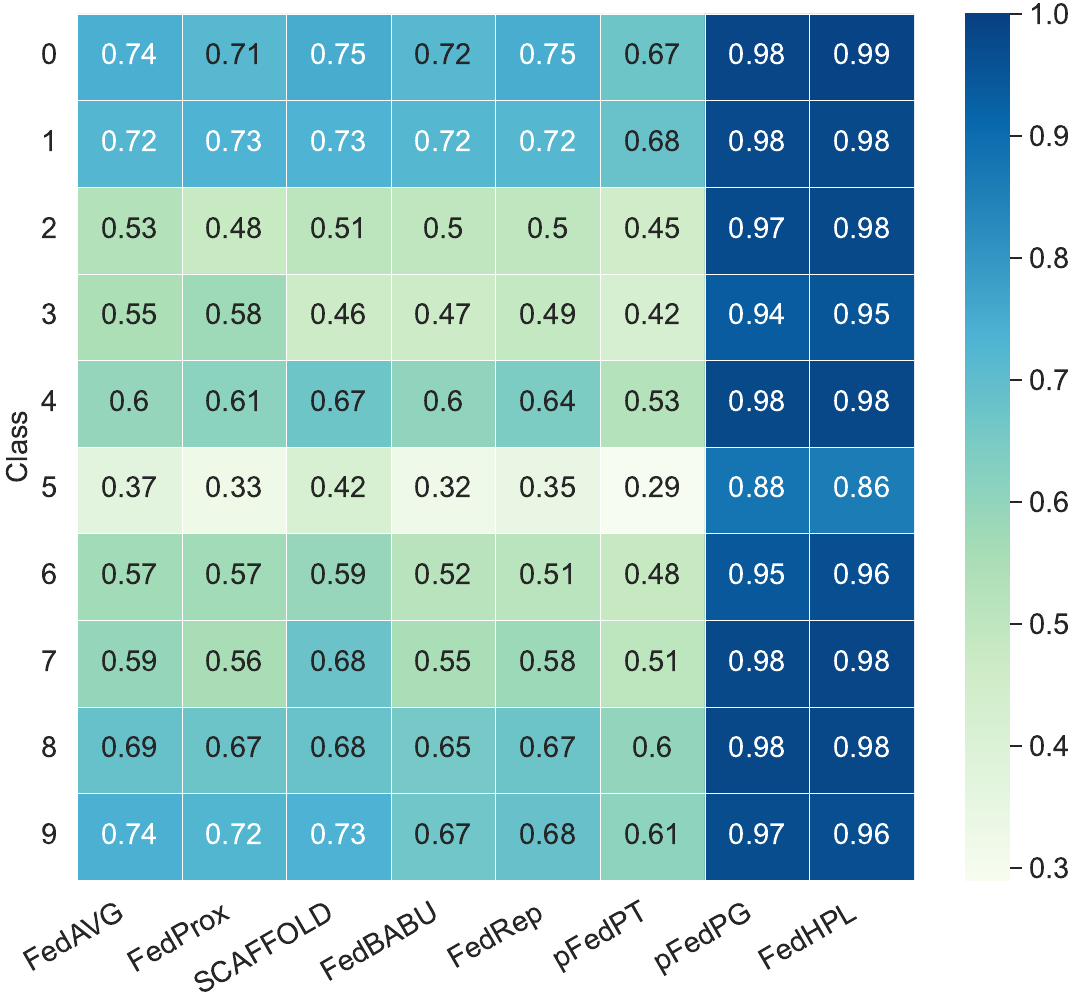}}
    \subfloat[CIFAR10-Non-IID]{\includegraphics[width=.32\linewidth]{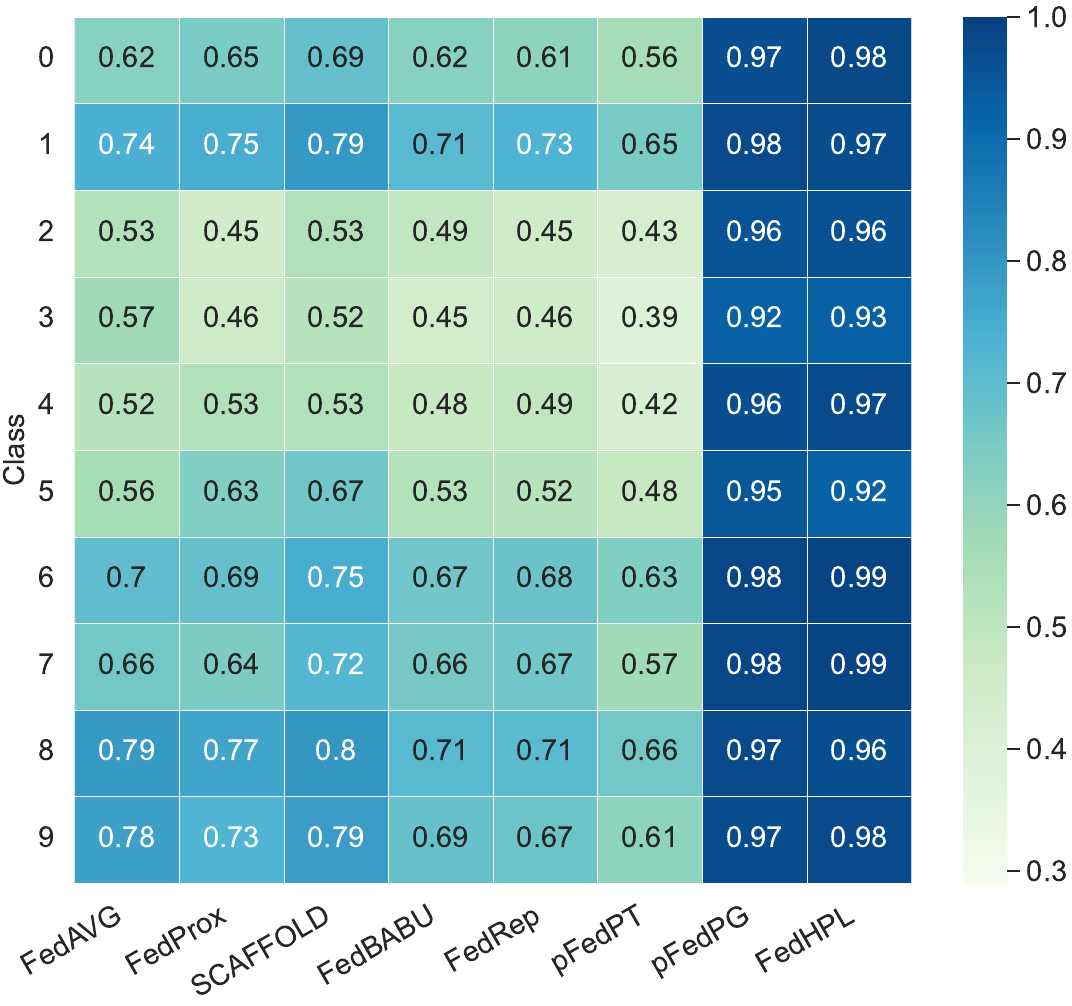}}\\
    \subfloat[CIFAR100-IID]{\includegraphics[width=.32\linewidth]{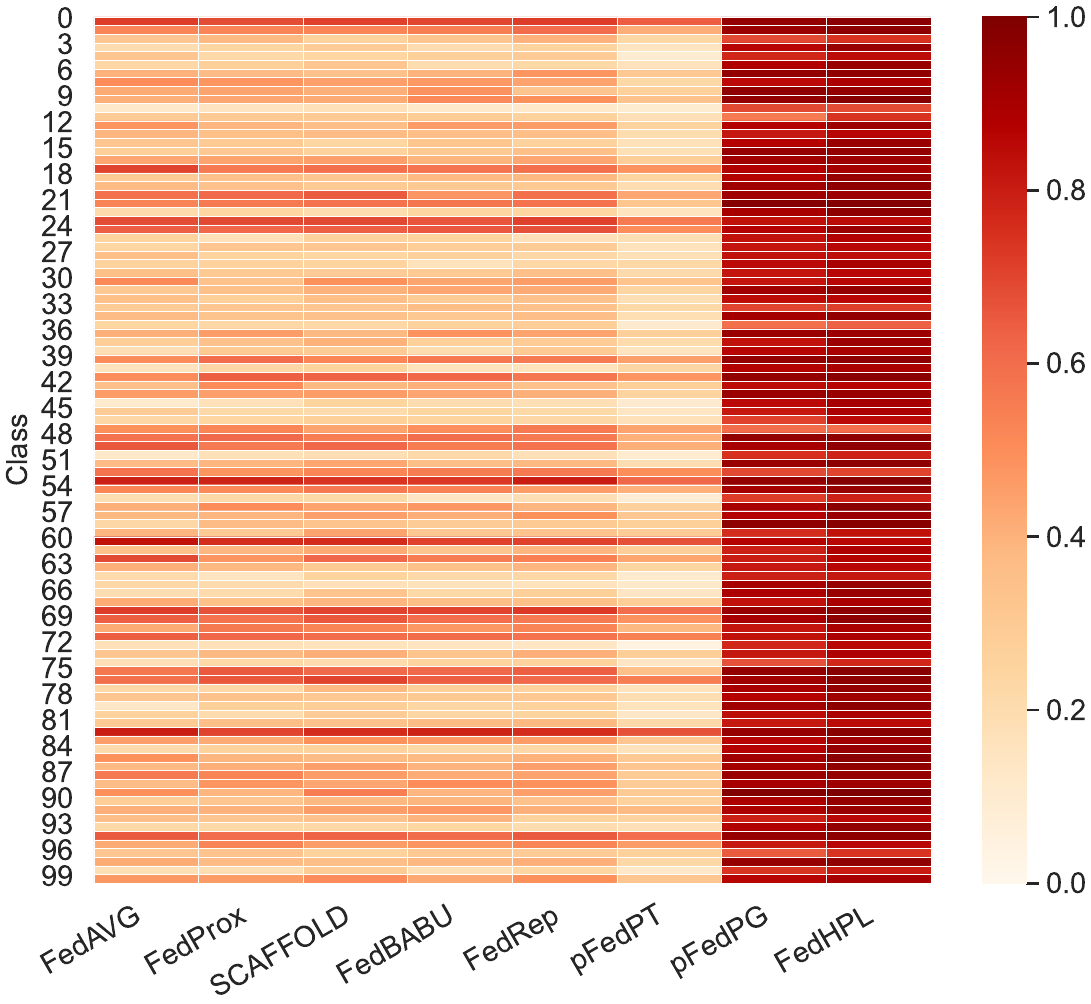}}
    \subfloat[CIFAR100-Dir]{\includegraphics[width=.32\linewidth]{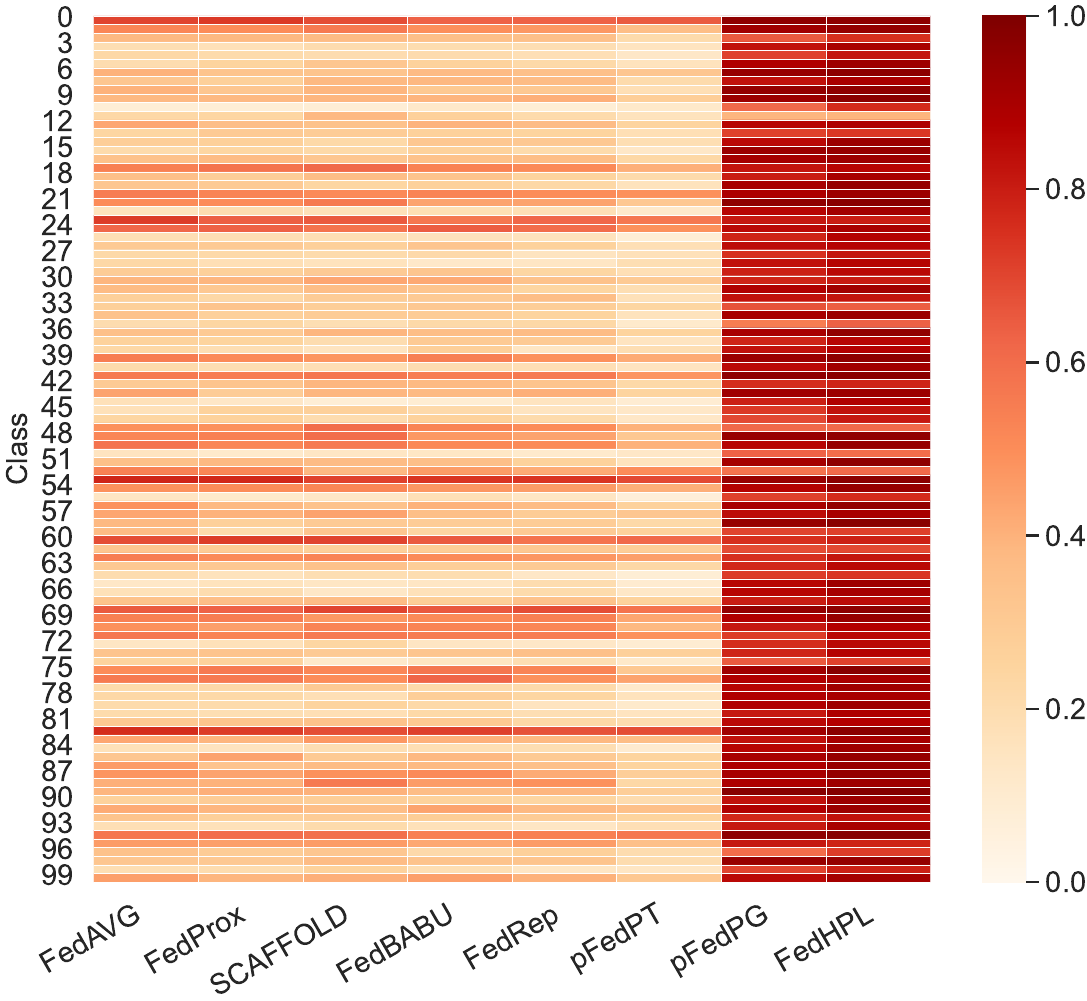}}
    \subfloat[CIFAR100-Non-IID]{\includegraphics[width=.32\linewidth]{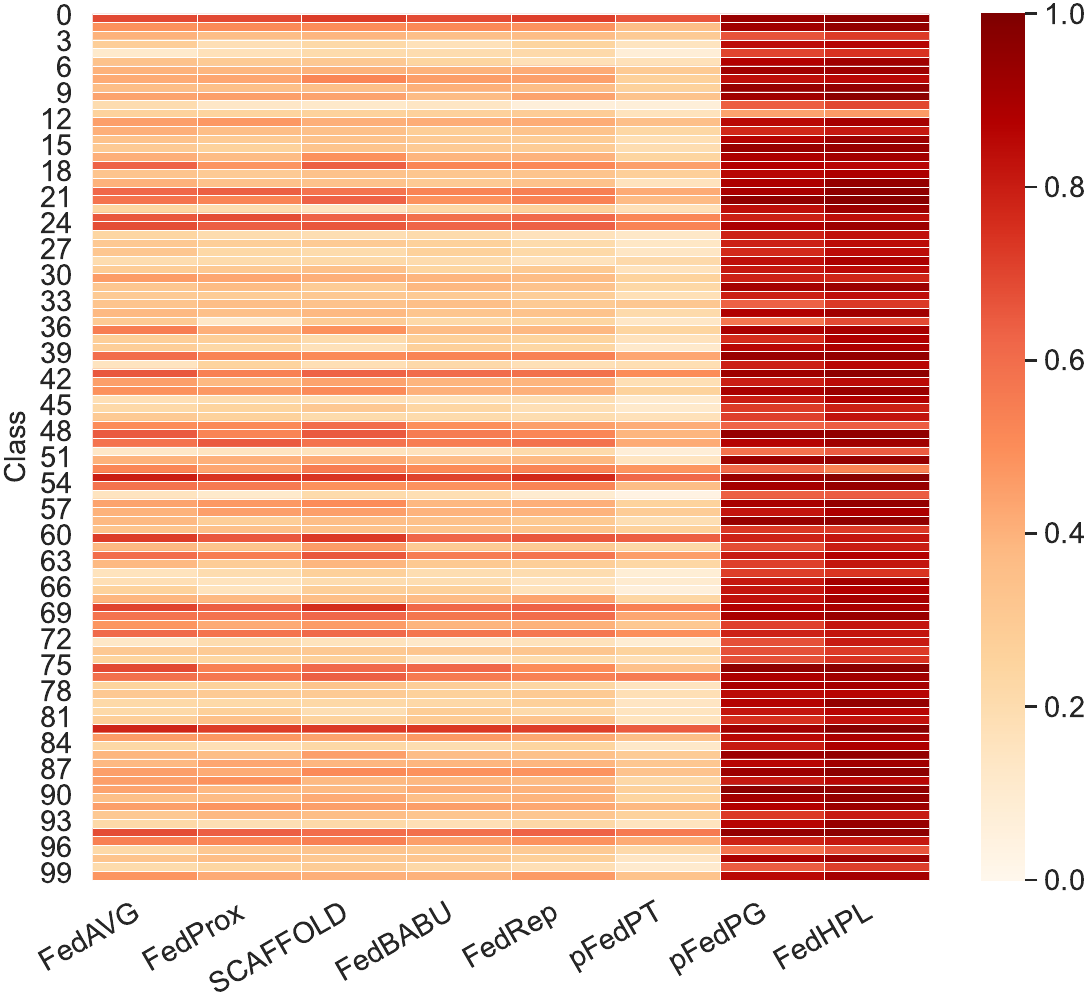}}\\
    \subfloat[SVHN-IID]{\includegraphics[width=.32\linewidth]{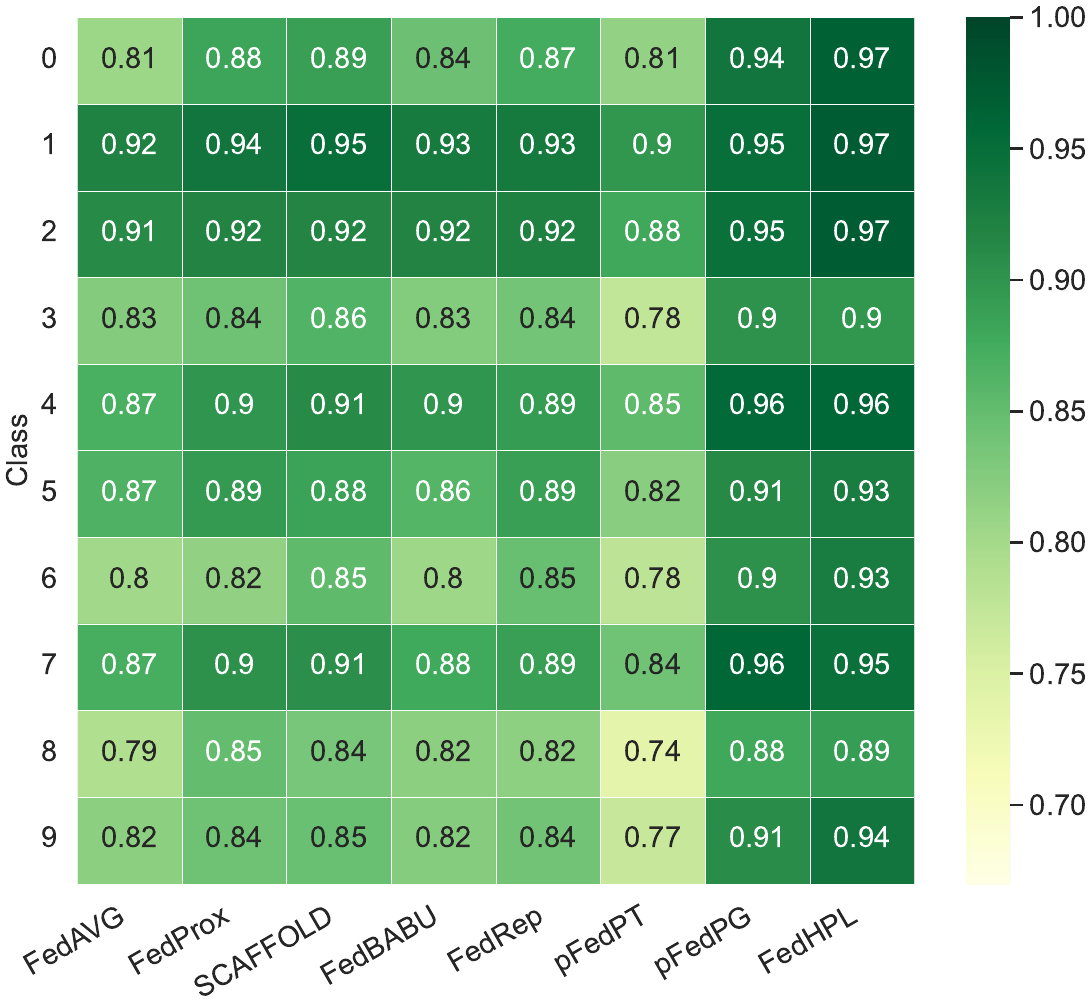}}
    \subfloat[SVHN-Dir]{\includegraphics[width=.32\linewidth]{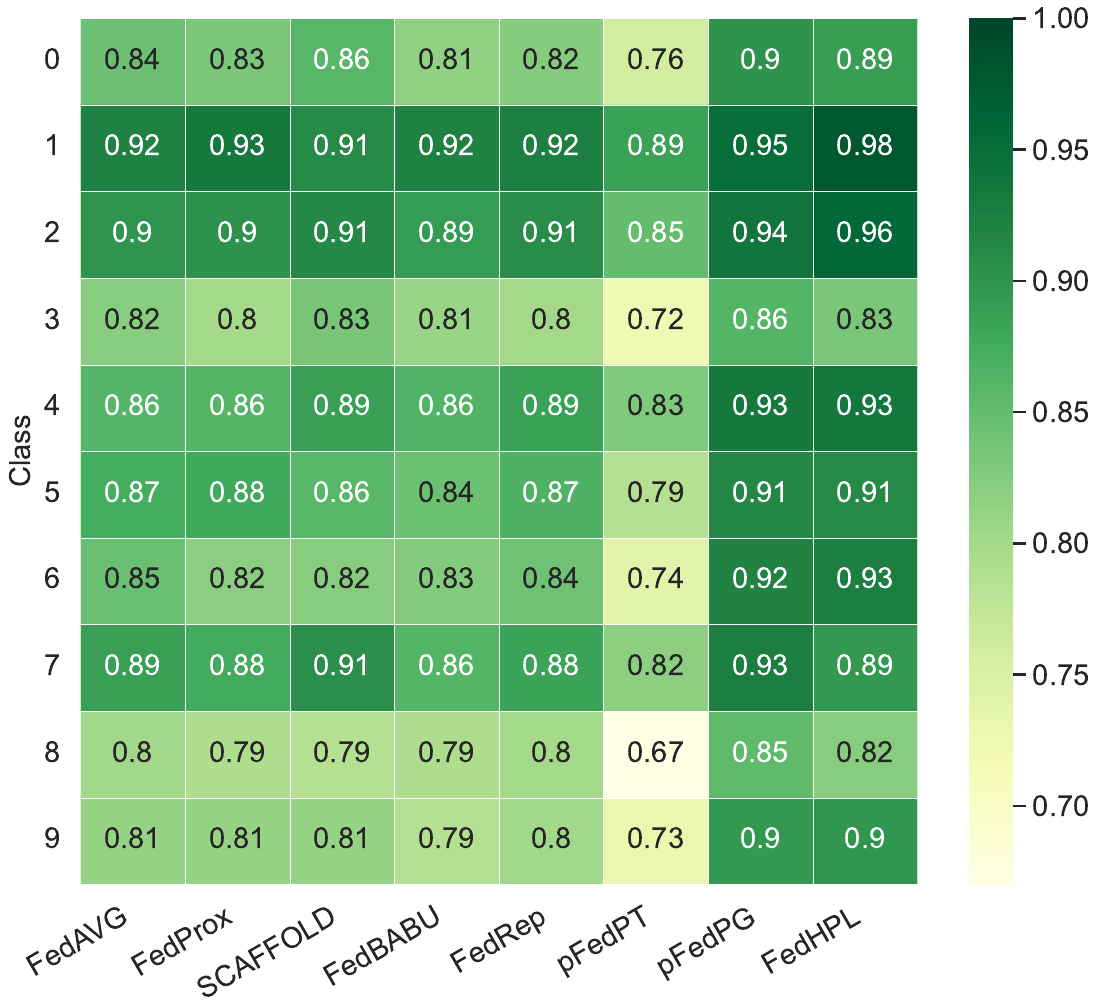}}
    \subfloat[SVHN-Non-IID]{\includegraphics[width=.32\linewidth]{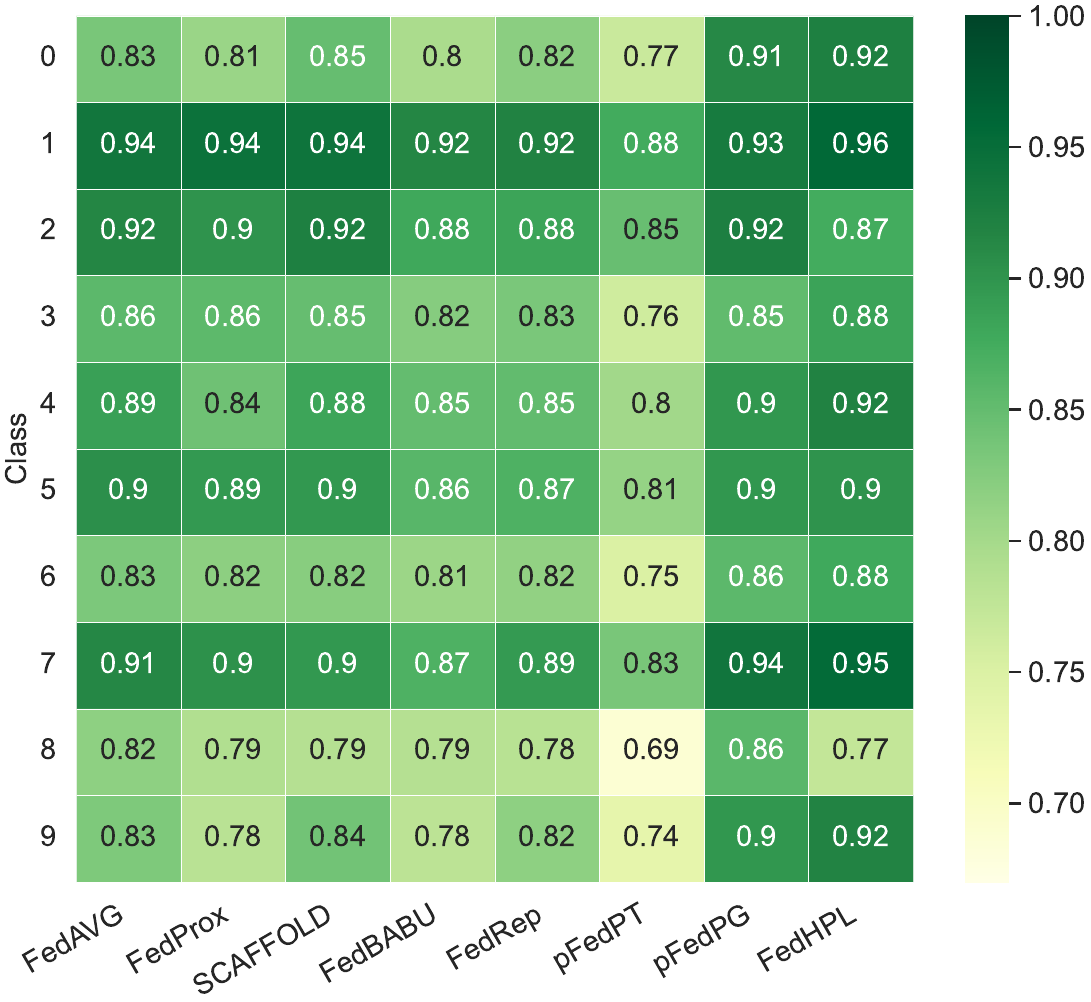}}\\
\caption{The per-class average test accuracy (\%) among all clients with baselines and FedHPL in the homogeneous model setting. The caption `A-B' represents the benchmark dataset and dataset setting. In order to visually demonstrate the test accuracy, we adjust the lowest test accuracy from 0 to 29\% in CIFAR10 and 0 to 67\% over SVHN.}
\label{fig:homo-model-label-acc}
\end{figure}

\textbf{The details of Table~\ref{tab:hete_model}.}
As illustrated in Figure~\ref{fig:hete-model-label-acc}, the per-class average test accuracy among clients in FedHPL is higher than other baselines over CIFAR10 and CIFAR100, especially for CIFAR100 which has more classes with fewer per-class samples.
We also notice that \textbf{our FedHPL with ResNet over the SVHN dataset is not ideal} because VPT cannot show the strong representation ability in a small CNN backbone according to~\cite{VPT}.
Moreover, we think the performance degradation may also be related to the SVHN dataset itself.
Prompt tuning is padded around the original image with pixels in FedHPL over ResNet and prompts cannot capture the core information well, leading to bad test accuracy.
Notably, there exists \textbf{significant test performance difference} between our experiments and the original results of some approaches (\ie FedGen, FedGH, FedProto, and FedTGP) under the entire test dataset, and we perform some experiments to analyze the reason.

\begin{figure}[htbp]
    \centering
    \subfloat[CIFAR10-IID]{\includegraphics[width=.32\linewidth]{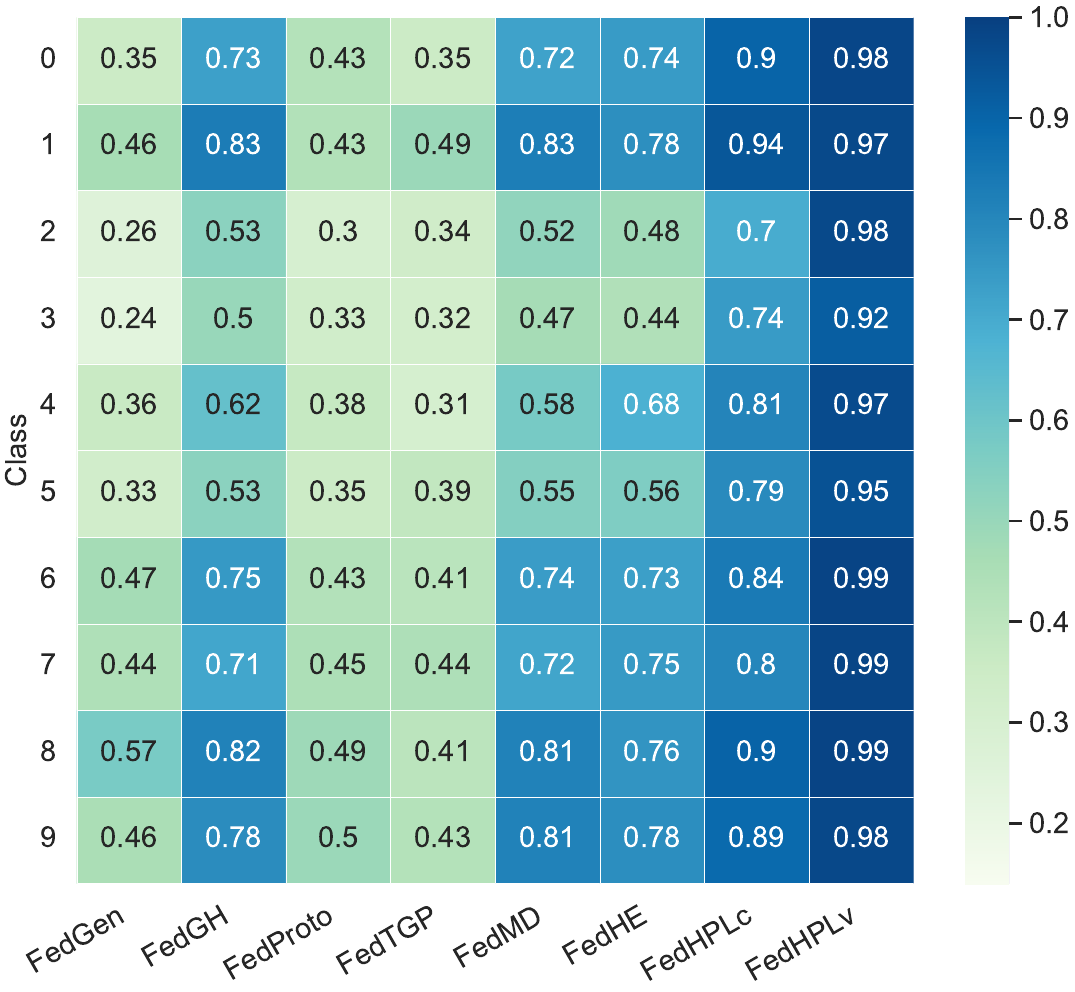}}
    \subfloat[CIFAR10-Dir]{\includegraphics[width=.32\linewidth]{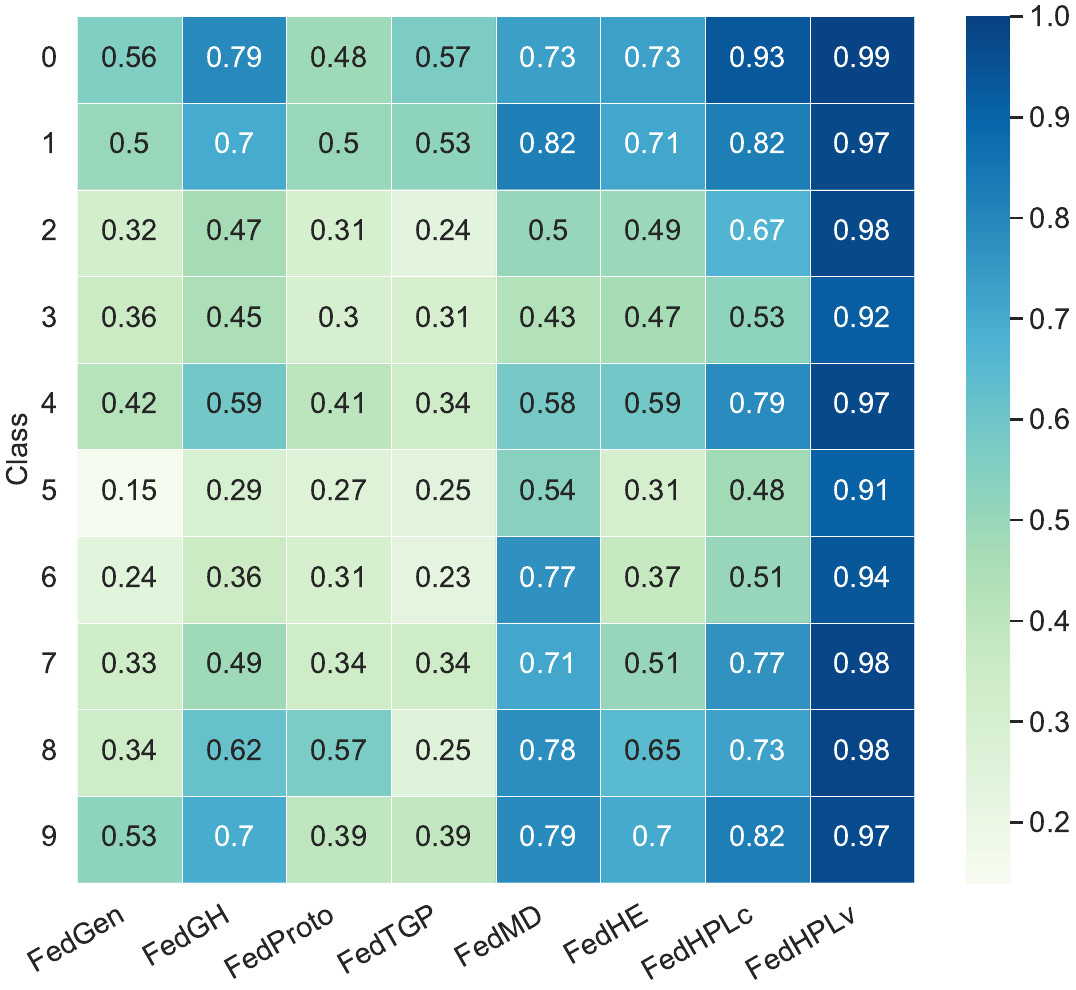}}
    \subfloat[CIFAR10-Non-IID]{\includegraphics[width=.32\linewidth]{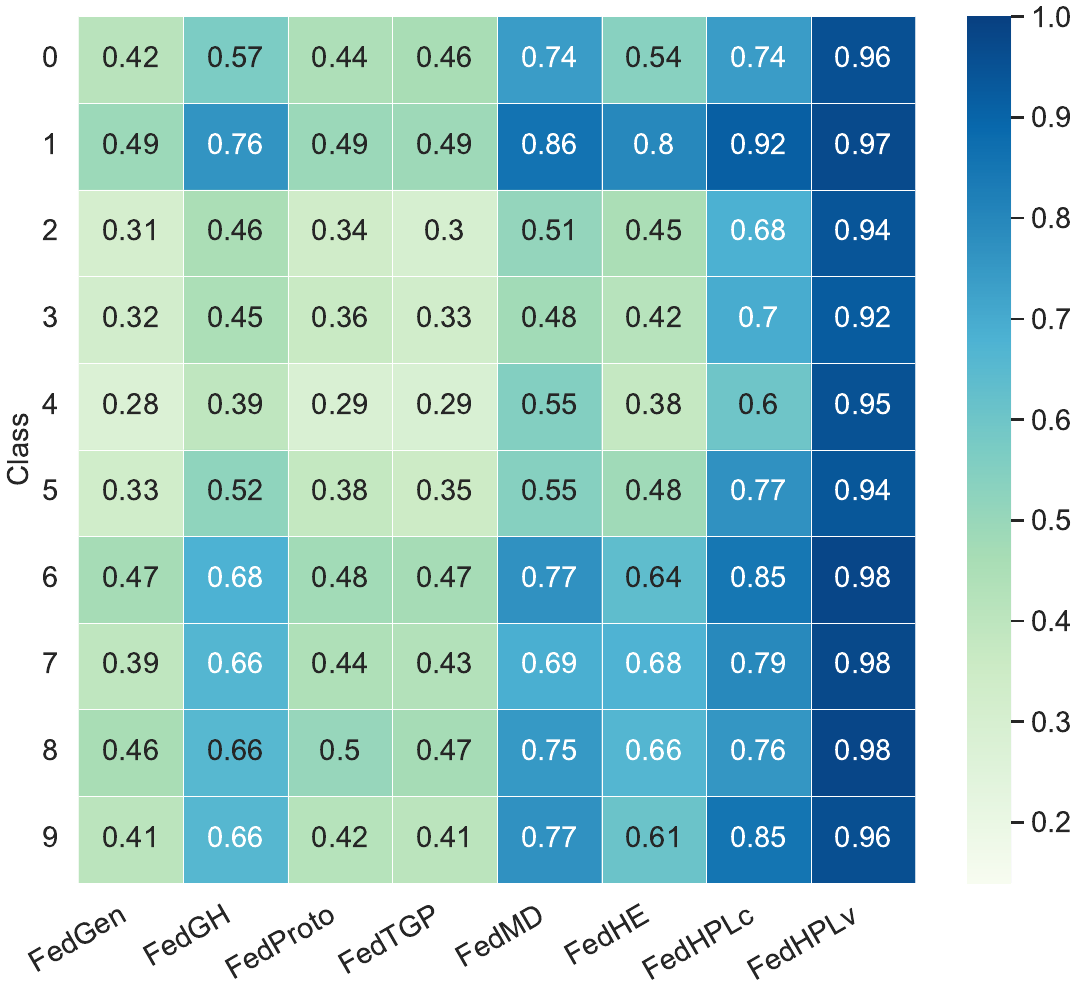}}\\
    \subfloat[CIFAR100-IID]{\includegraphics[width=.32\linewidth]{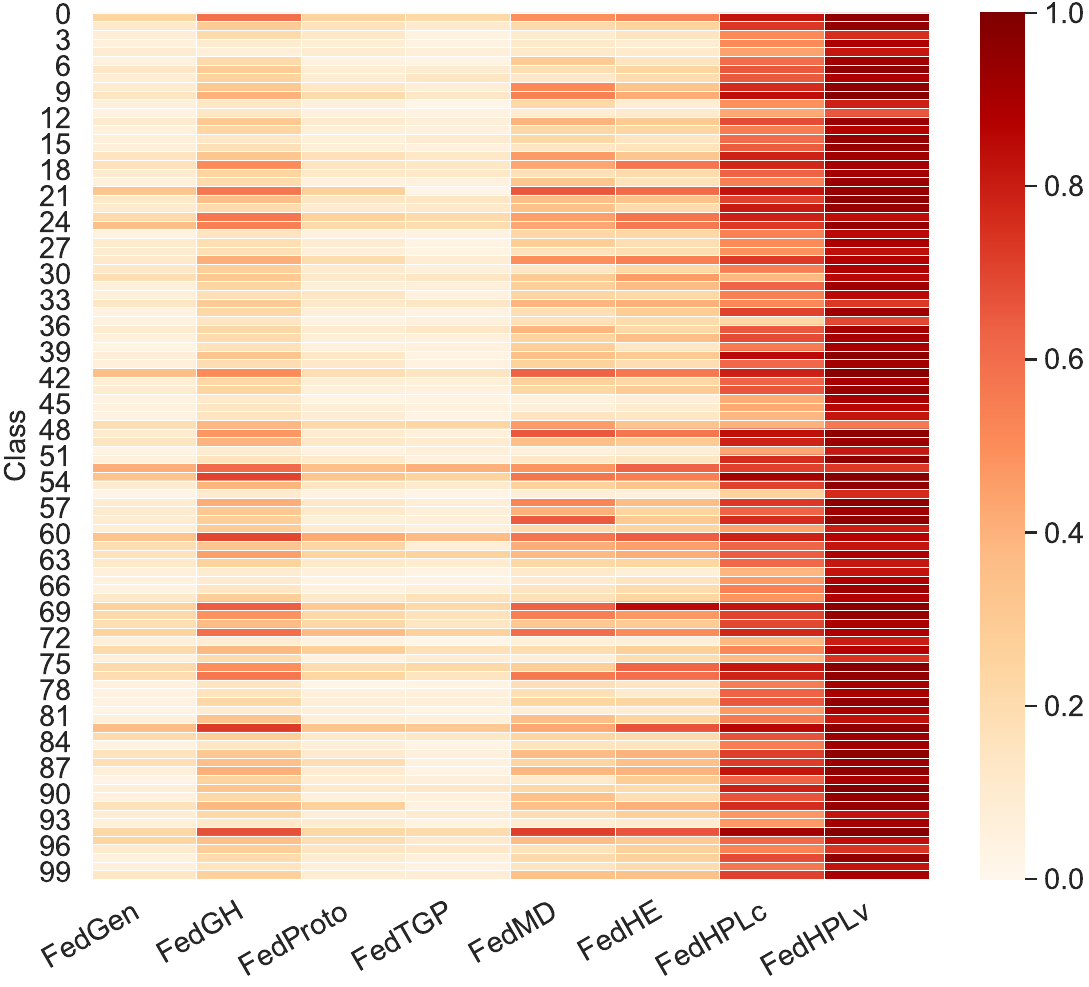}}
    \subfloat[CIFAR100-Dir]{\includegraphics[width=.32\linewidth]{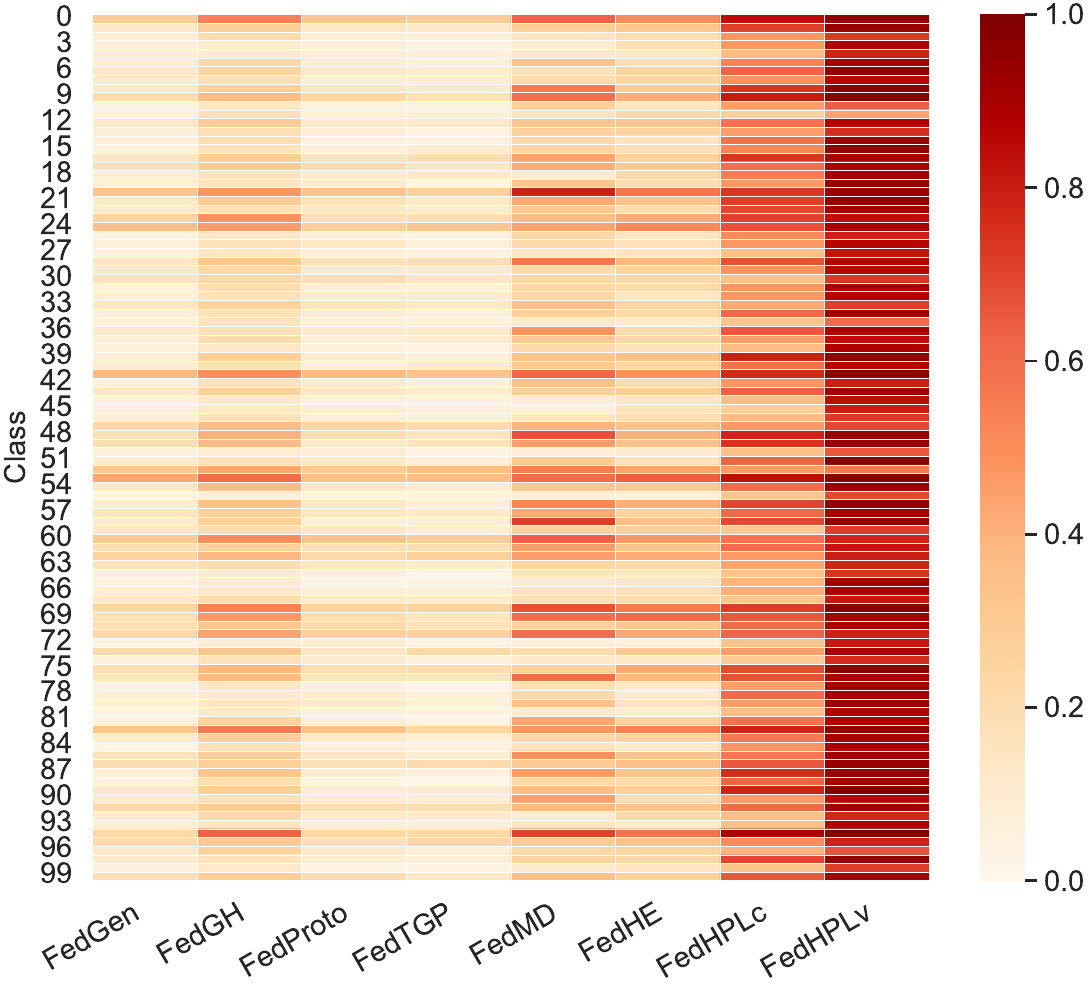}}
    \subfloat[CIFAR100-Non-IID]{\includegraphics[width=.32\linewidth]{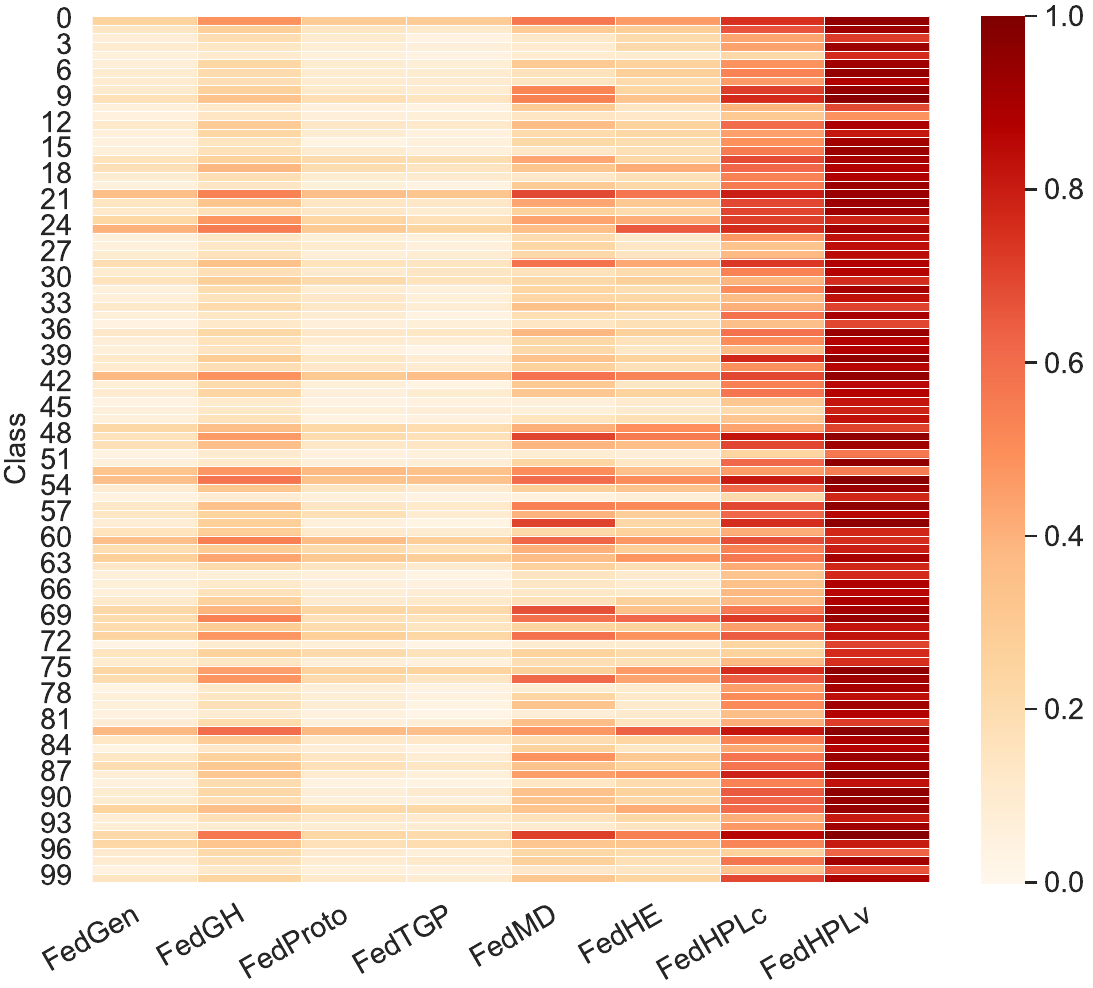}}\\
    \subfloat[SVHN-IID]{\includegraphics[width=.32\linewidth]{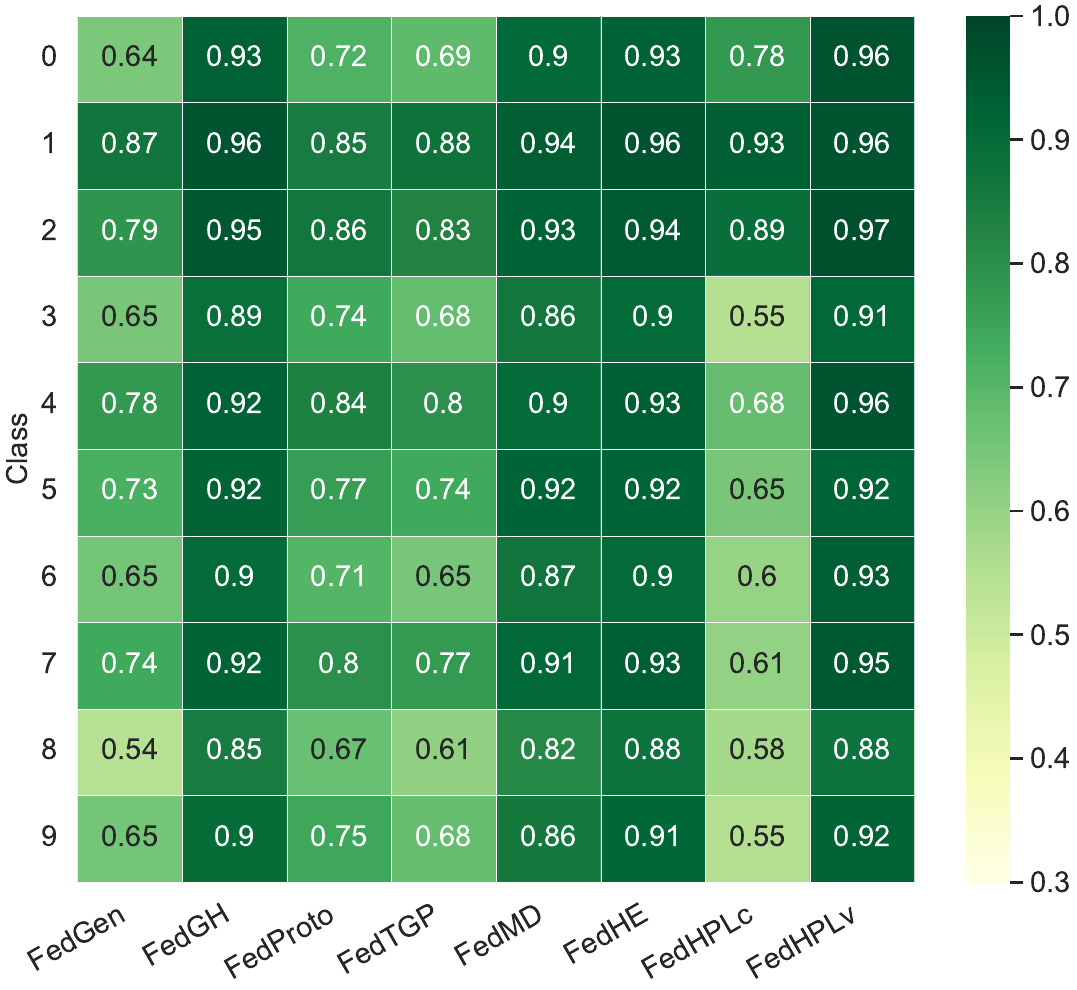}}
    \subfloat[SVHN-Dir]{\includegraphics[width=.32\linewidth]{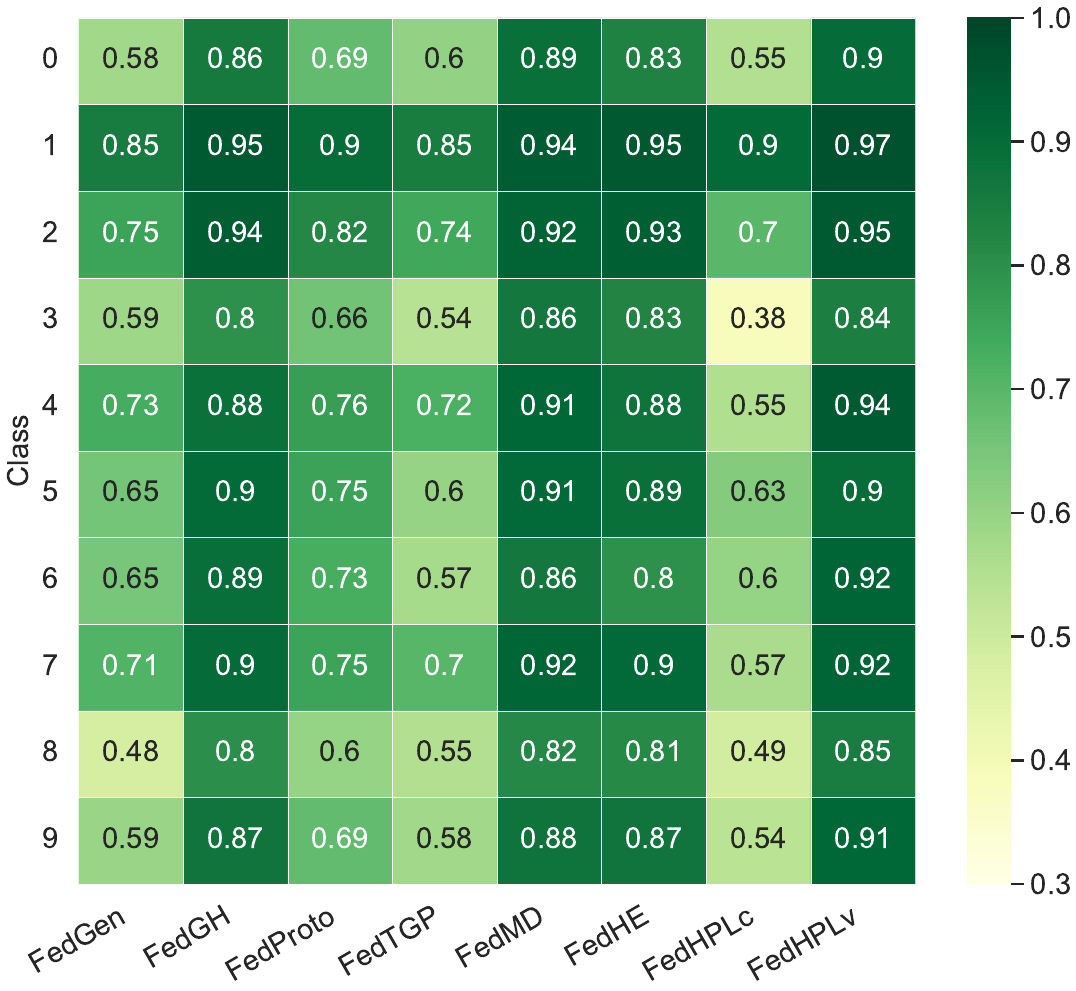}}
    \subfloat[SVHN-Non-IID]{\includegraphics[width=.32\linewidth]{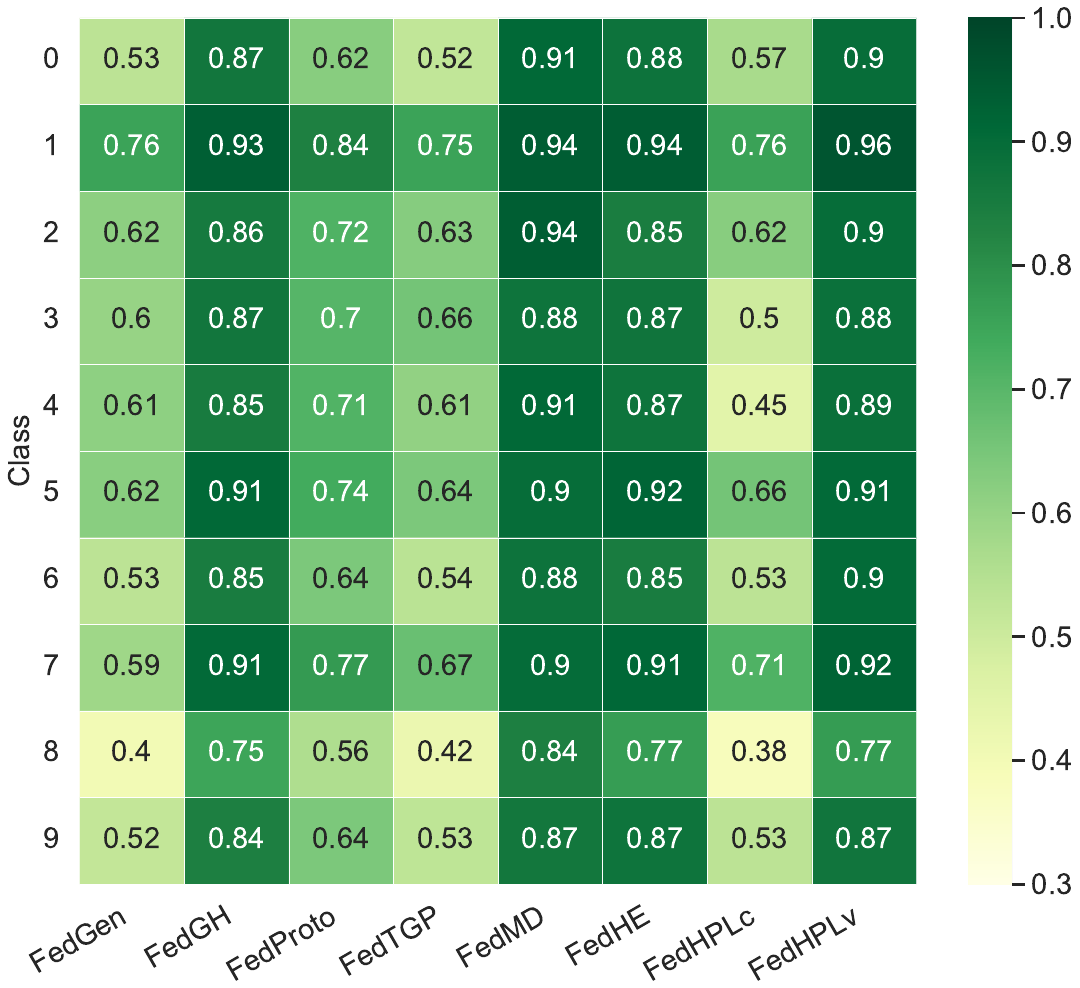}}\\
\caption{The per-class average test accuracy (\%) among clients with baselines and FedHPL in the heterogeneous model setting. In order to visually demonstrate the test accuracy, we adjust the lowest test accuracy from 0 to 14\% in CIFAR10 and 0 to 30\% over SVHN. FedHPLc and FedHPLv represent FedHPL (CNN) and FedHPL (ViT) in Table~\ref{tab:hete_model}.}
\label{fig:hete-model-label-acc}
\end{figure}

\textbf{The performance difference comes from the testing methods.}
We consider that the performance difference between the original paper of these baselines and our experiments is caused by the different testing method.
We use the entire test dataset with all labels while the original methods split the training dataset of benchmark datasets into clients' training set and clients' test set according to the same class space and only test performance on the private test set.
Thus, we apply their original partitioning techniques and split different training and test datasets.
Specifically, we firstly treat the training dataset of benchmark datasets as the private data of clients and split per-class samples into training sets (80\%) and test sets (20\%).
Then, we distribute samples with \{2, 4, 6, 8, 10\} classes into each client on the CIFAR10 and SVHN datasets, and allocate samples with \{10, 30, 50, 70, 100\} classes into each client on the CIFAR100 dataset, respectively. 
Figure~\ref{fig:labels_baselines_CIFAR10}, Figure~\ref{fig:labels_baselines_CIFAR100}, and Figure~\ref{fig:labels_baselines_SVHN} exhibit the test accuracy of each client and the average test accuracy in FedGen, FedGH, FedProto, and FedTGP over the heterogeneous ResNet model setting. 
It reveals that the test accuracy usually rises with the decrease in the number and category of test samples, suggesting that the partition of the test dataset is the reason that causes the performance difference.
However, the generalization performance of the model still has certain flaws.

\textbf{The details of selecting ViT as the pre-trained backbone.}
Figure~\ref{fig:res_vit_local_our} shows the per-class average client accuracy in different heterogeneous model settings.
We can observe that the performance of ResNet is lower than the performance of ViT backbones.
Additionally, it can be observed that the model utility with the original image size of 32 $\times$ 32 is not ideal because the pre-trained foundation models were trained on ImageNet with the input size of 224 $\times$ 224. 
In summary, this demonstrates that the ViT is a better choice for the pre-trained backbone.

\begin{figure}[htbp]
    \centering
    \subfloat[FedGen-IID]{\includegraphics[width=.24\linewidth]{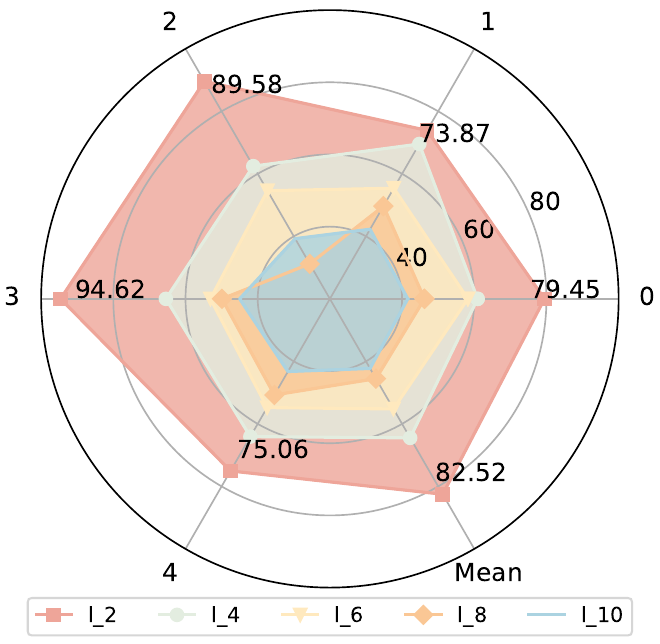}}
    \subfloat[FedGH-IID]{\includegraphics[width=.24\linewidth]{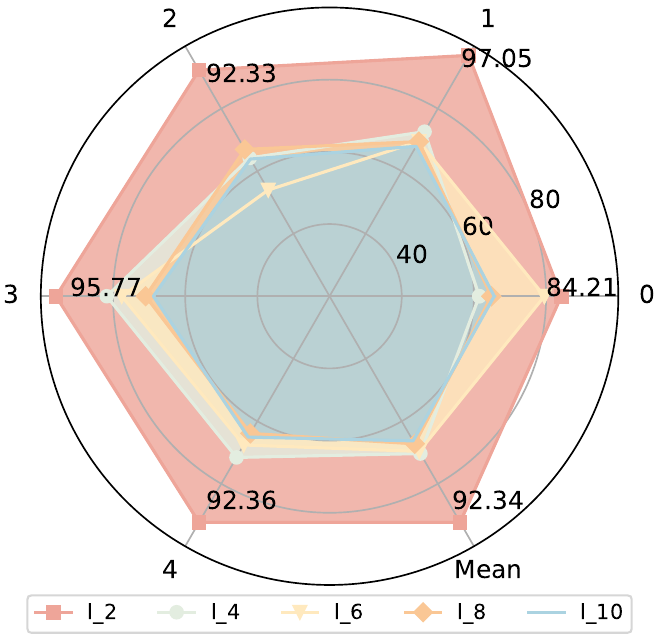}}
    \subfloat[FedProto-IID]{\includegraphics[width=.24\linewidth]{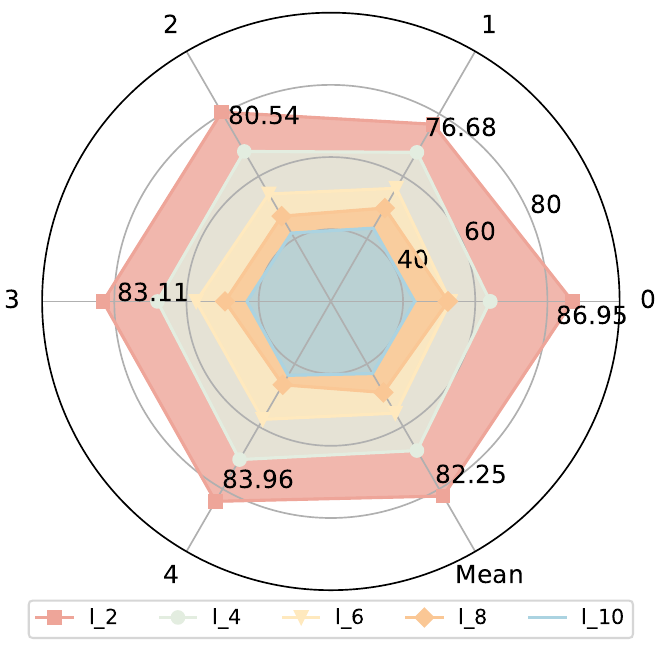}}
    \subfloat[FedTGP-IID]{\includegraphics[width=.24\linewidth]{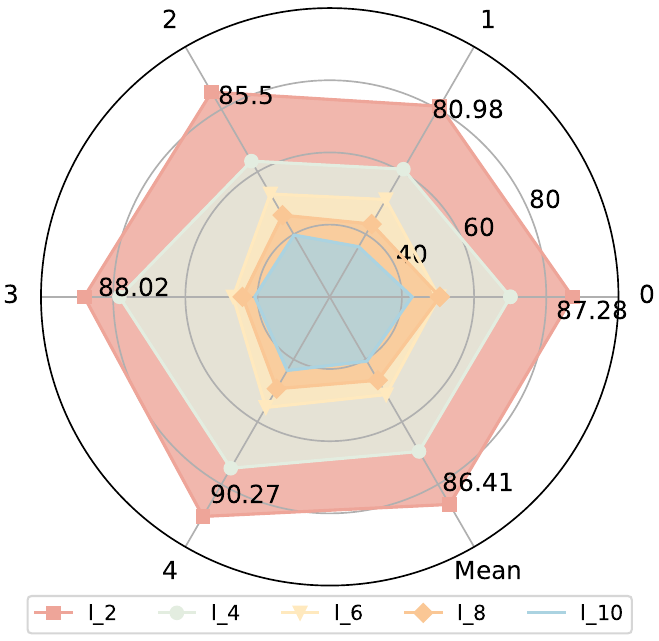}}\\
    \subfloat[FedGen-Dir]{\includegraphics[width=.24\linewidth]{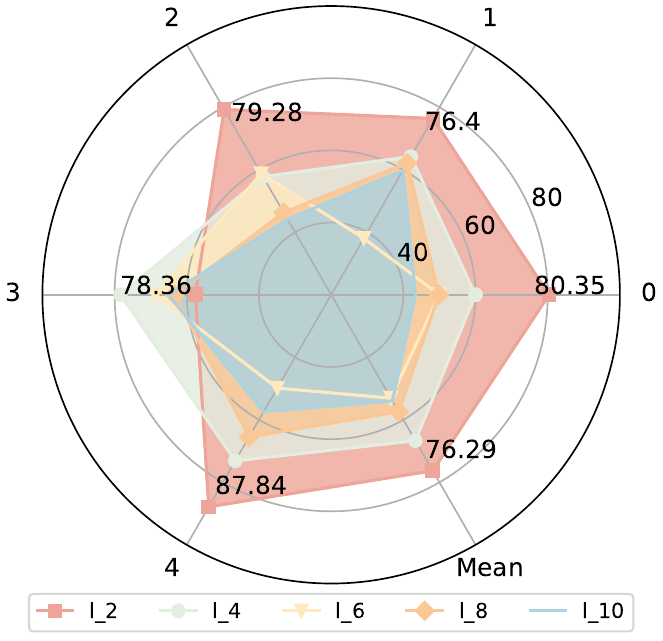}}
    \subfloat[FedGH-Dir]{\includegraphics[width=.24\linewidth]{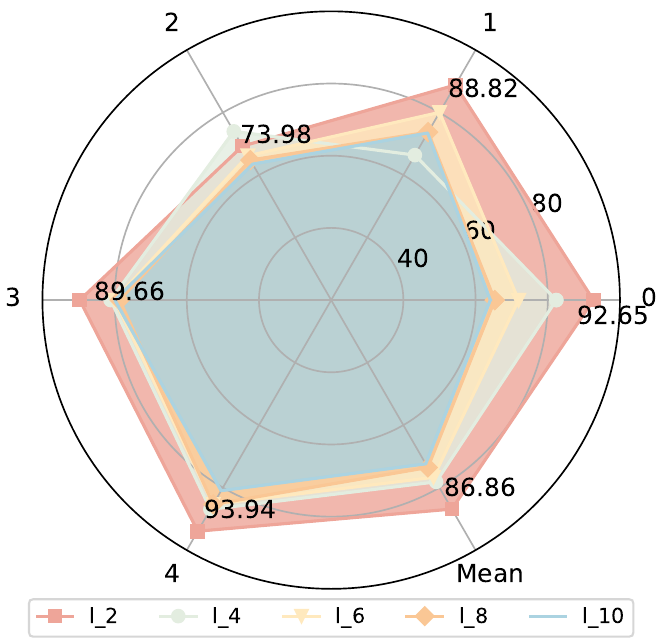}}
    \subfloat[FedProto-Dir]{\includegraphics[width=.24\linewidth]{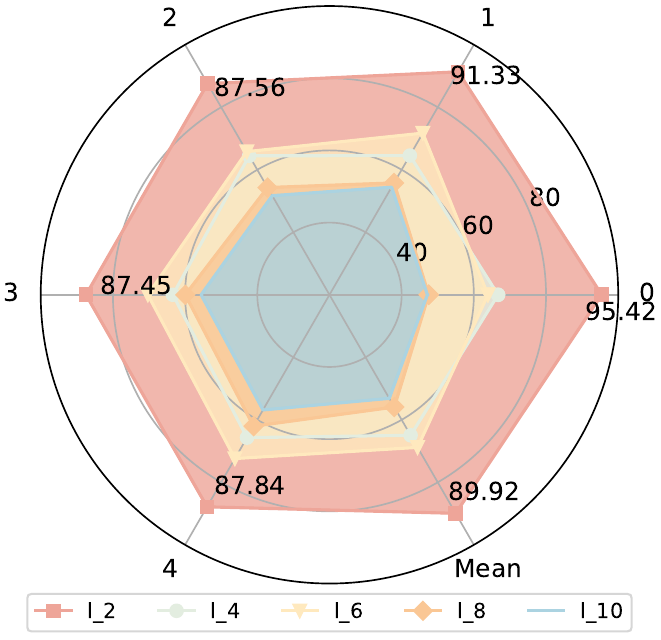}}
    \subfloat[FedTGP-Dir]{\includegraphics[width=.24\linewidth]{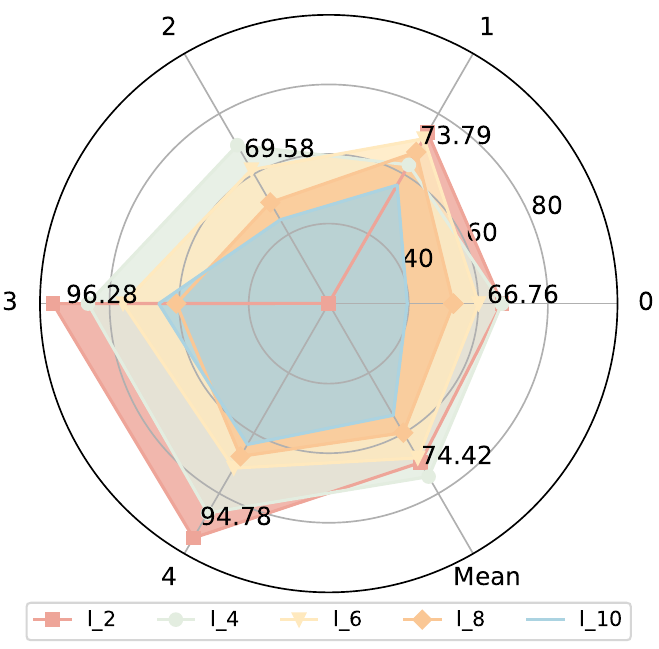}}\\
    \subfloat[FedGen-Non-IID]{\includegraphics[width=.24\linewidth]{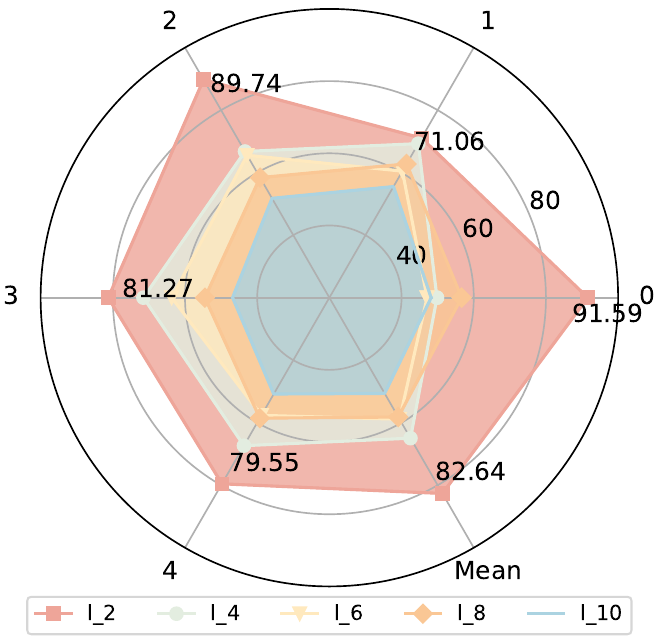}}
    \subfloat[FedGH-Non-IID]{\includegraphics[width=.24\linewidth]{figures/Leida_label_percentages/Leida_CIFAR10/CIFAR10_FedGH_Dir_label_percentage.pdf}}
    \subfloat[FedProto-Non-IID]{\includegraphics[width=.24\linewidth]{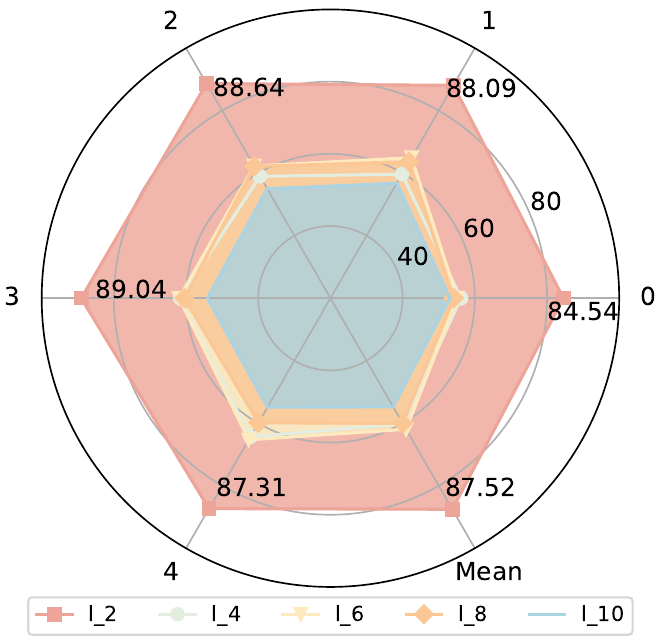}}
    \subfloat[FedTGP-Non-IID]{\includegraphics[width=.24\linewidth]{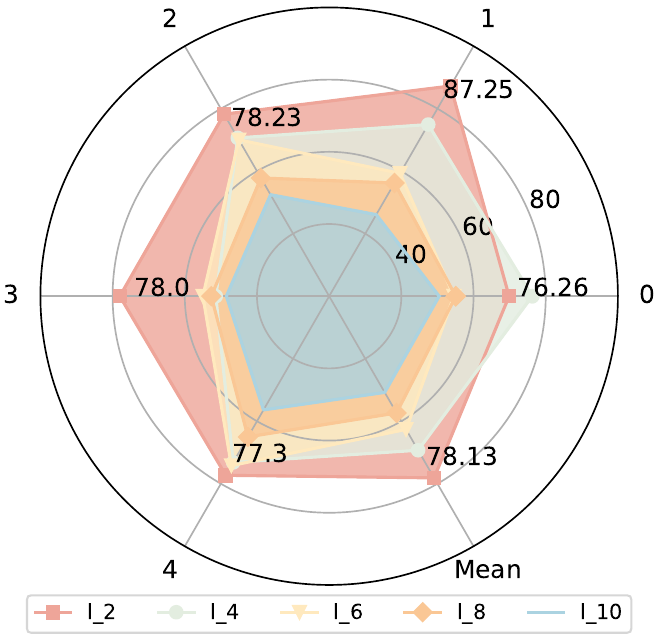}}\\
\caption{Per-client and average test accuracy (\%) on CIFAR10 in the heterogeneous ResNet setting.}
\label{fig:labels_baselines_CIFAR10}
\end{figure}

\begin{figure}[htbp]
    \centering
    \subfloat[FedGen-IID]{\includegraphics[width=.24\linewidth]{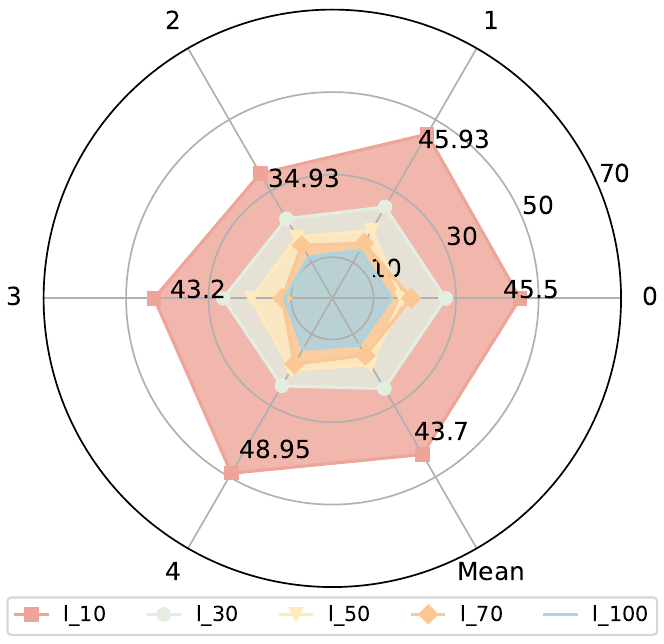}}
    \subfloat[FedGH-IID]{\includegraphics[width=.24\linewidth]{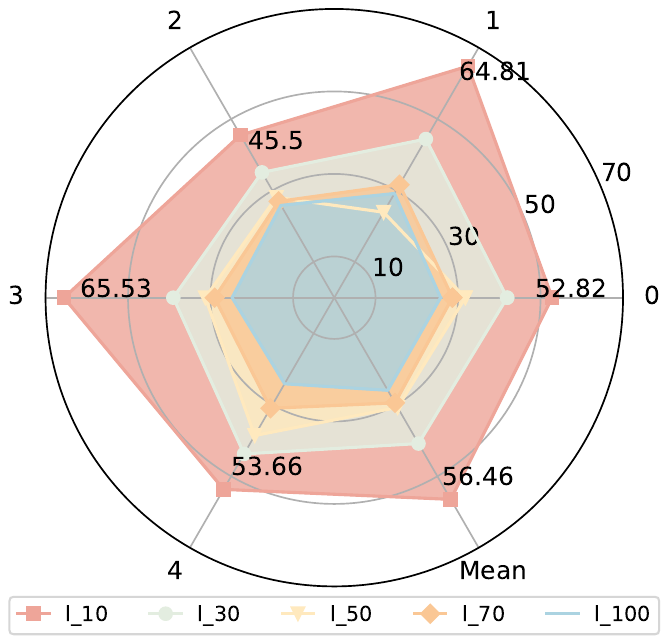}}
    \subfloat[FedProto-IID]{\includegraphics[width=.24\linewidth]{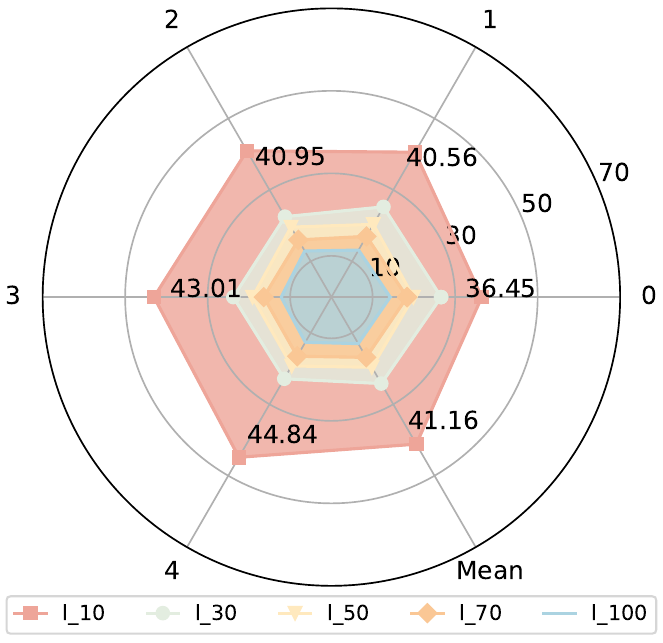}}
    \subfloat[FedTGP-IID]{\includegraphics[width=.24\linewidth]{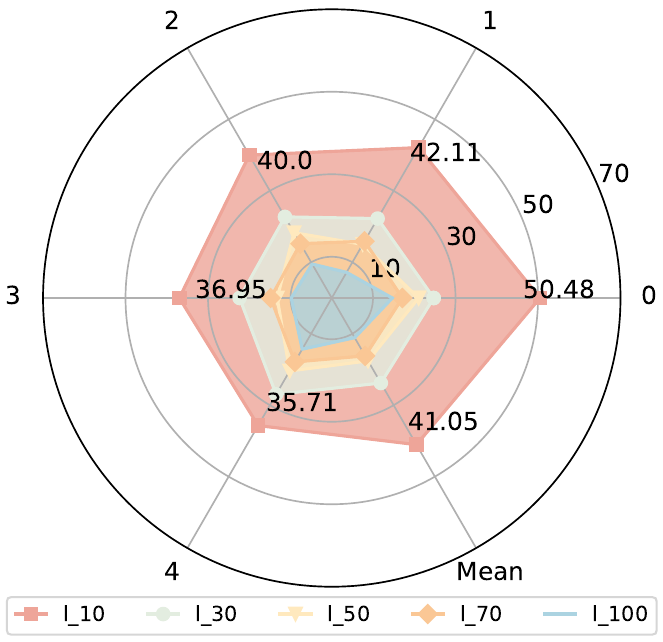}}\\
    \subfloat[FedGen-Dir]{\includegraphics[width=.24\linewidth]{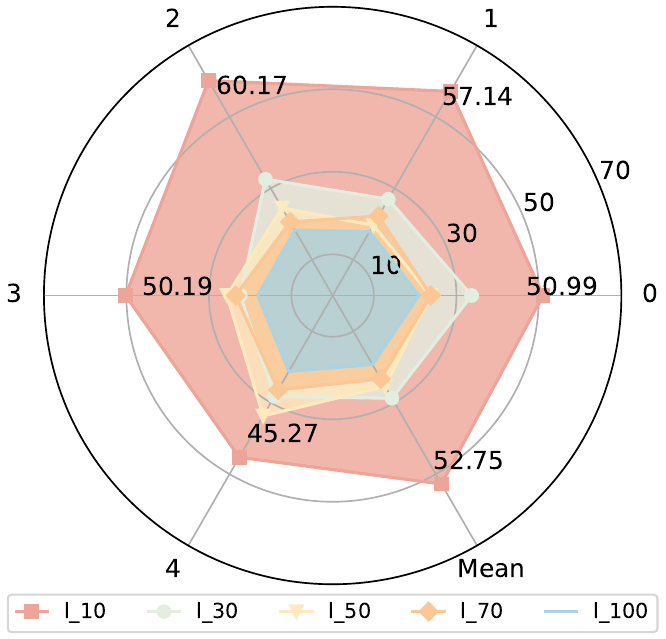}}
    \subfloat[FedGH-Dir]{\includegraphics[width=.24\linewidth]{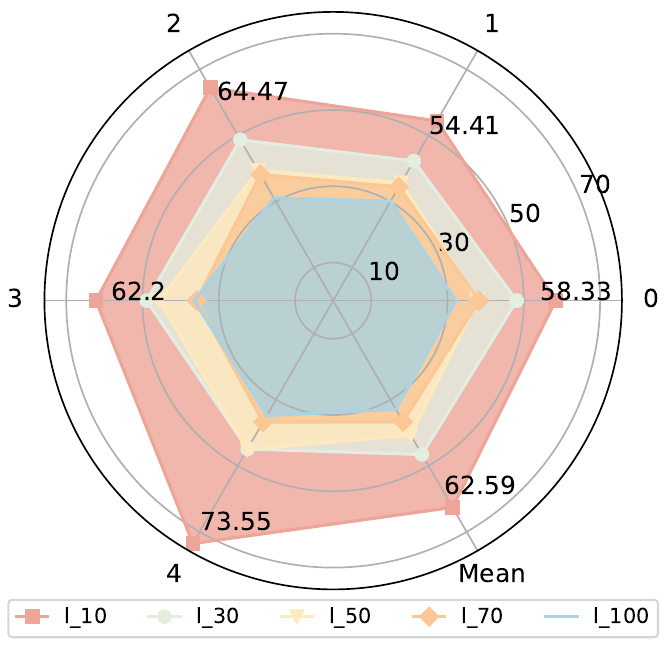}}
    \subfloat[FedProto-Dir]{\includegraphics[width=.24\linewidth]{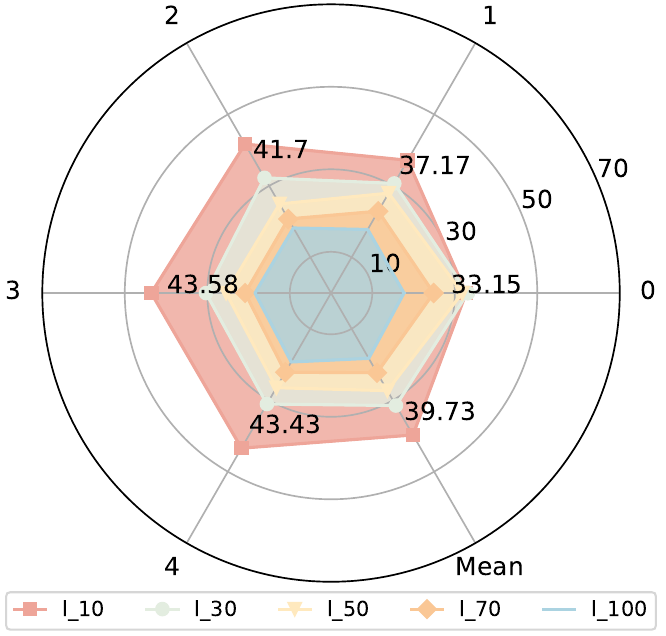}}
    \subfloat[FedTGP-Dir]{\includegraphics[width=.24\linewidth]{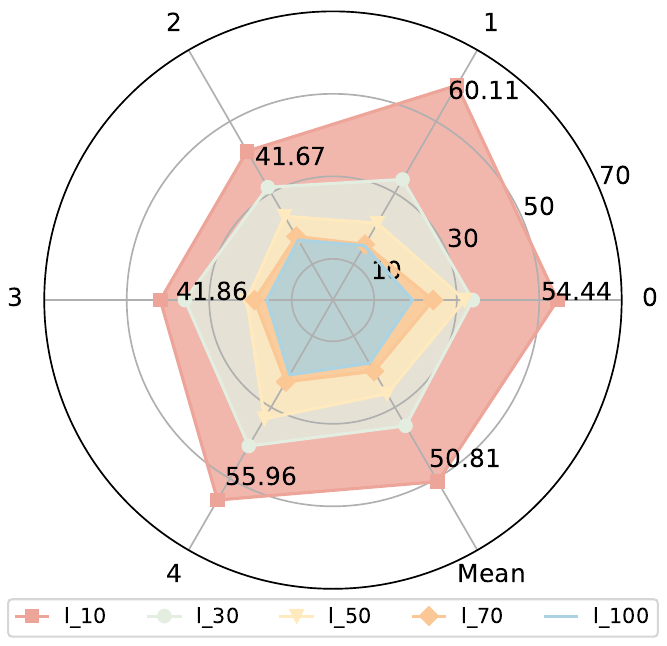}}\\
    \subfloat[FedGen-Non-IID]{\includegraphics[width=.24\linewidth]{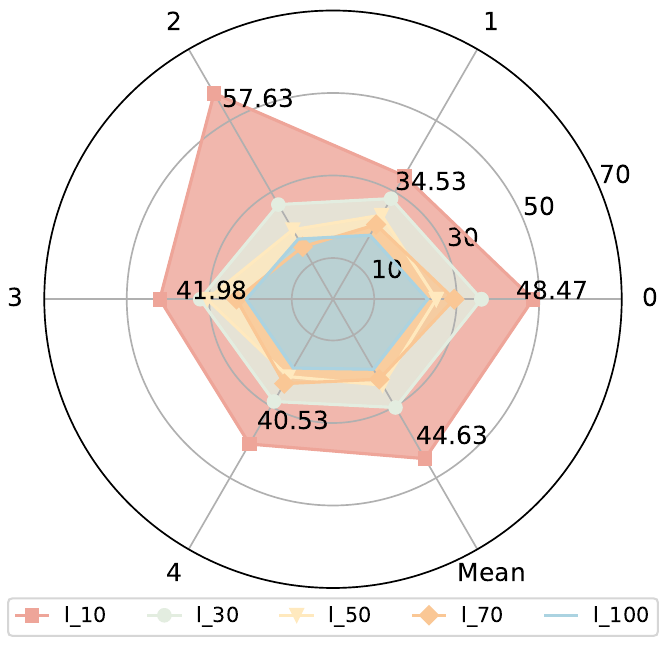}}
    \subfloat[FedGH-Non-IID]{\includegraphics[width=.24\linewidth]{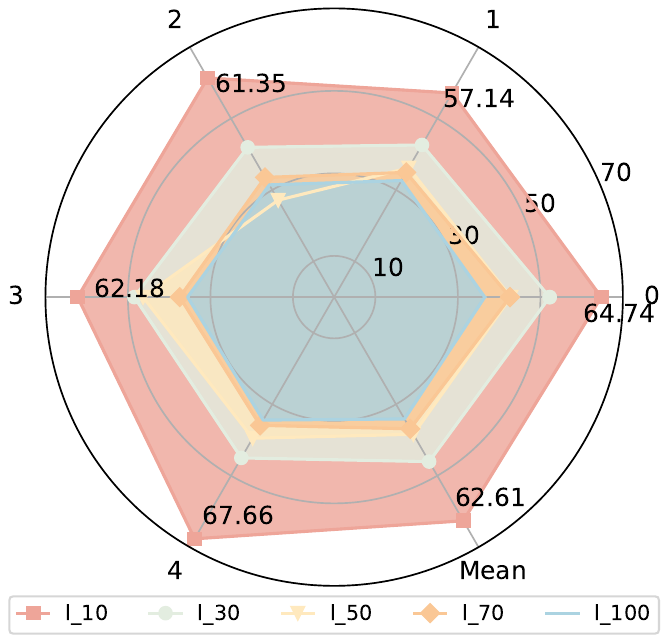}}
    \subfloat[FedProto-Non-IID]{\includegraphics[width=.24\linewidth]{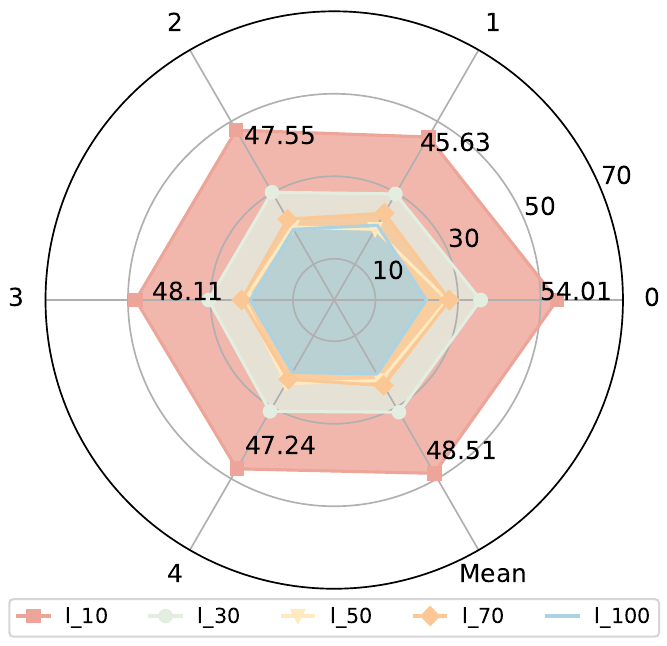}}
    \subfloat[FedTGP-Non-IID]{\includegraphics[width=.24\linewidth]{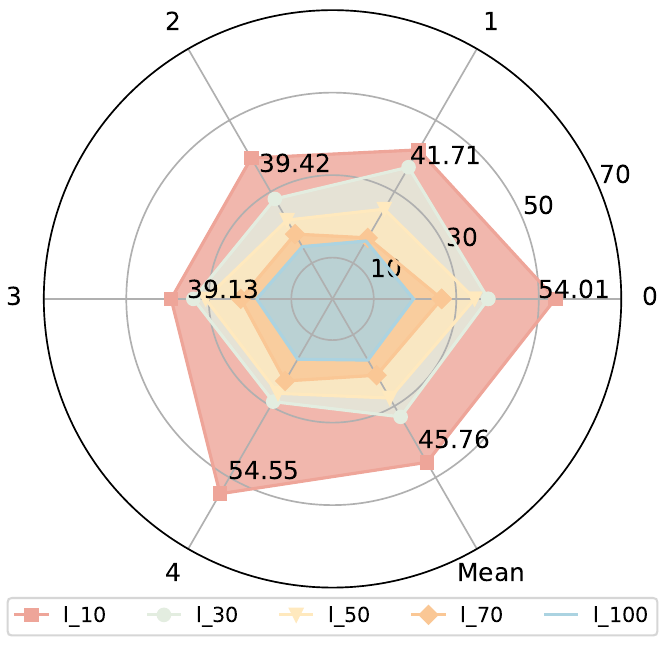}}\\
\caption{Per-client and average test accuracy (\%) on CIFAR100 in the heterogeneous ResNet setting.}
\label{fig:labels_baselines_CIFAR100}
\end{figure}

\begin{figure}[htbp]
    \centering
    \subfloat[FedGen-IID]{\includegraphics[width=.24\linewidth]{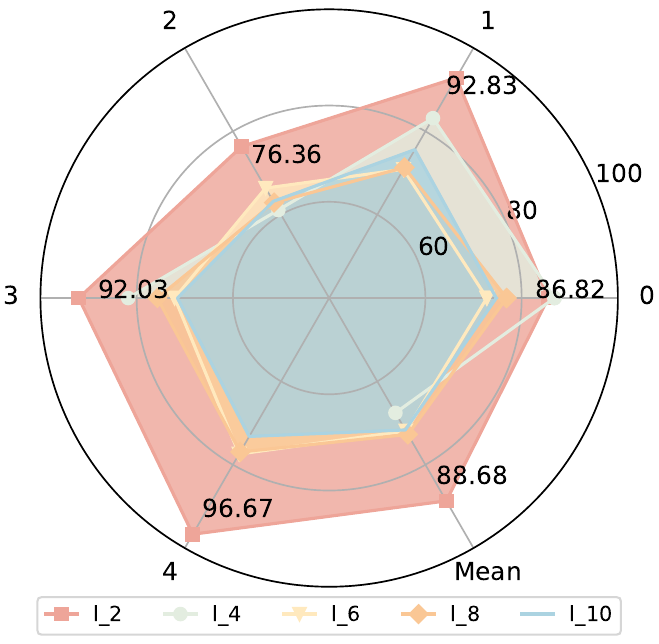}}
    \subfloat[FedGH-IID]{\includegraphics[width=.24\linewidth]{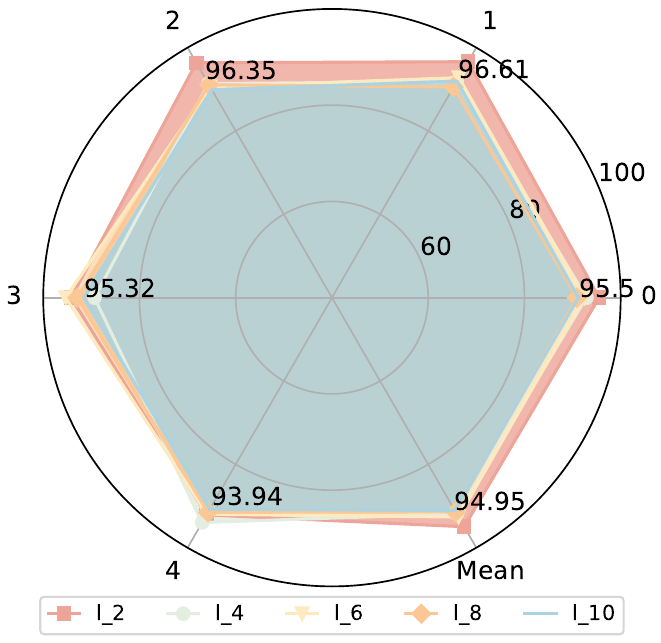}}
    \subfloat[FedProto-IID]{\includegraphics[width=.24\linewidth]{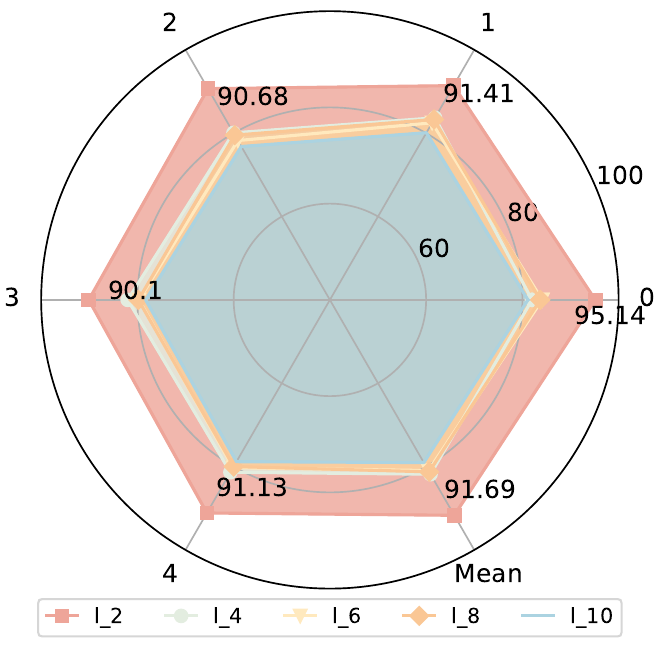}}
    \subfloat[FedTGP-IID]{\includegraphics[width=.24\linewidth]{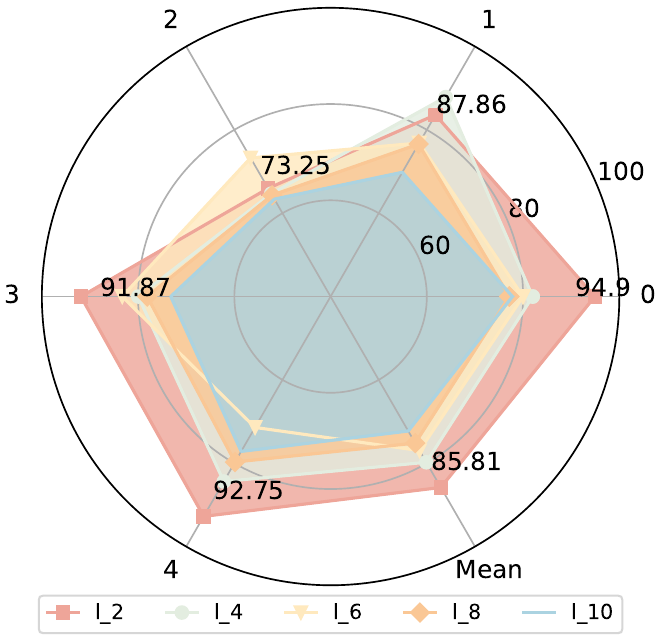}}\\
    \subfloat[FedGen-Dir]{\includegraphics[width=.24\linewidth]{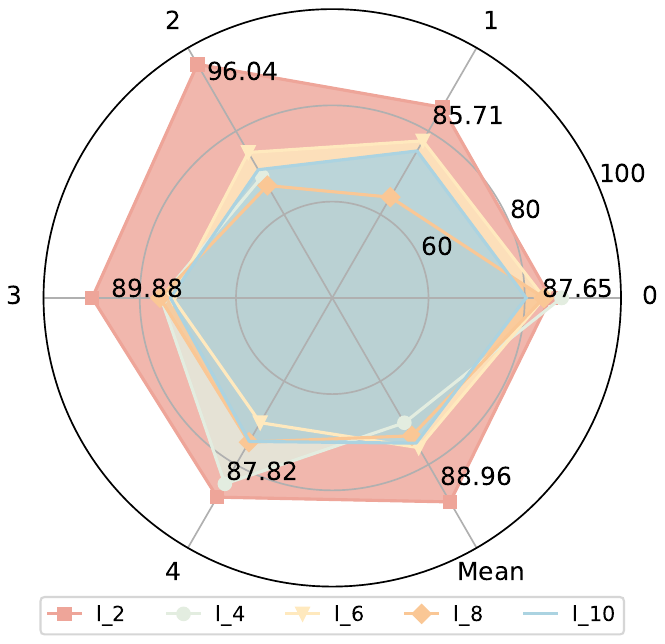}}
    \subfloat[FedGH-Dir]{\includegraphics[width=.24\linewidth]{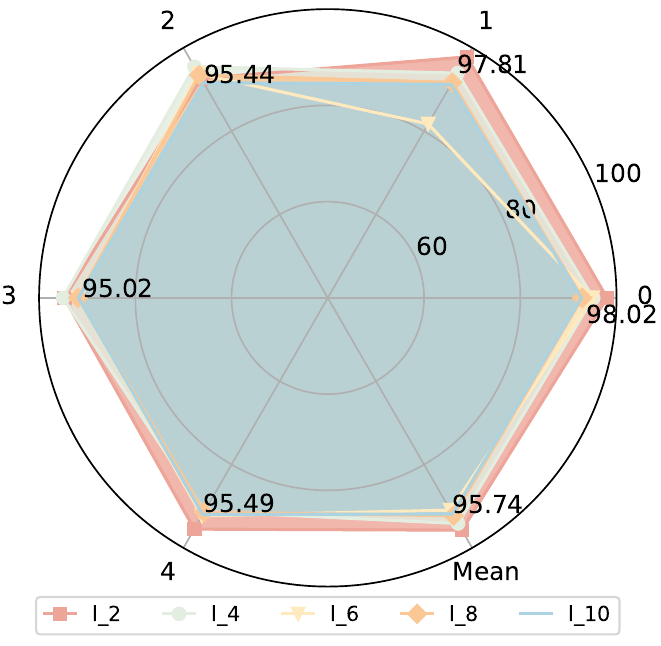}}
    \subfloat[FedProto-Dir]{\includegraphics[width=.24\linewidth]{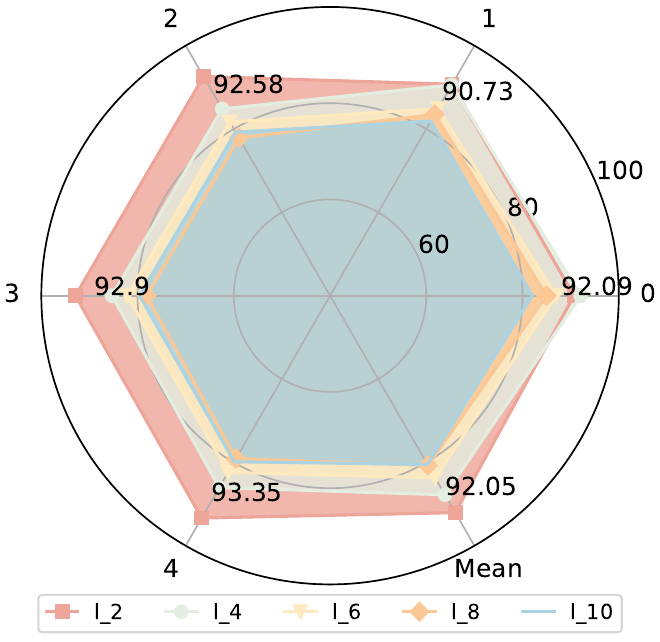}}
    \subfloat[FedTGP-Dir]{\includegraphics[width=.24\linewidth]{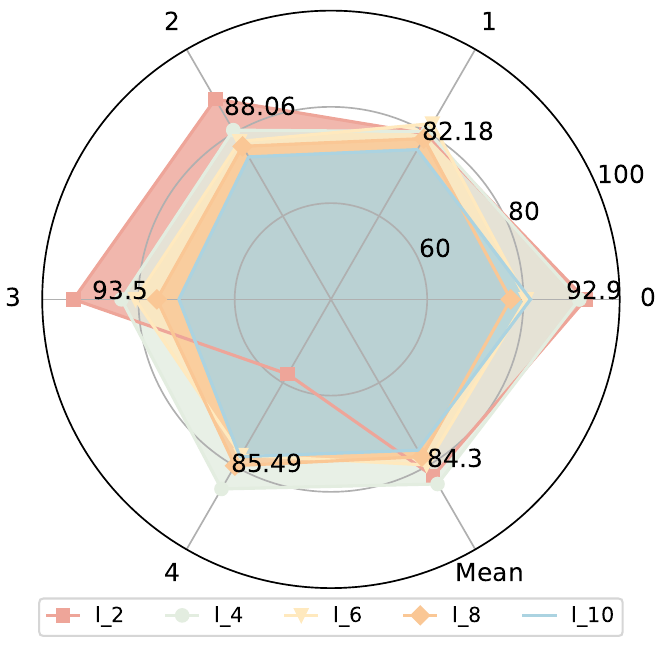}}\\
    \subfloat[FedGen-Non-IID]{\includegraphics[width=.24\linewidth]{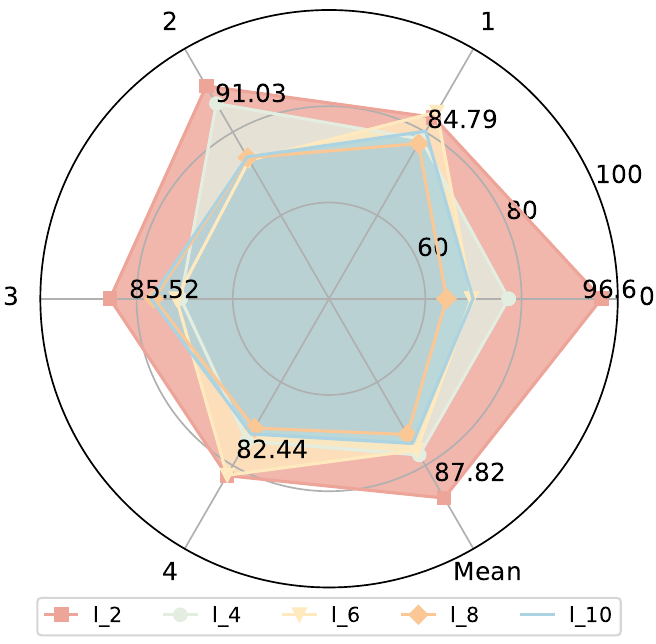}}
    \subfloat[FedGH-Non-IID]{\includegraphics[width=.24\linewidth]{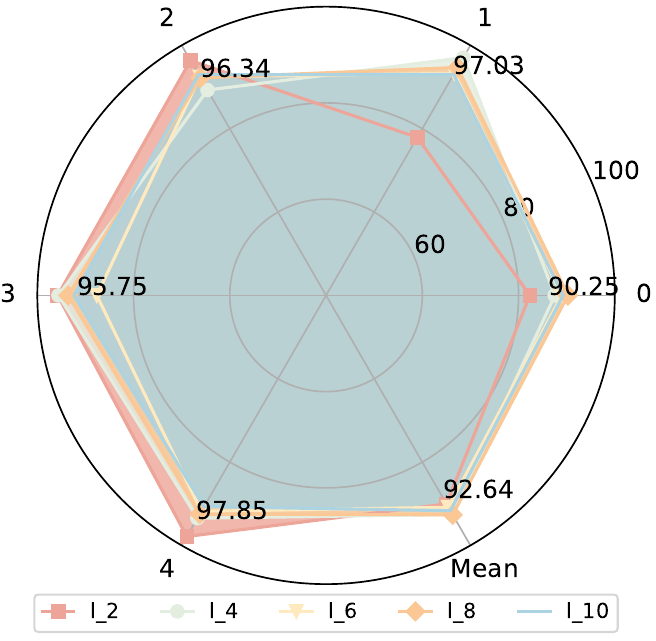}}
    \subfloat[FedProto-Non-IID]{\includegraphics[width=.24\linewidth]{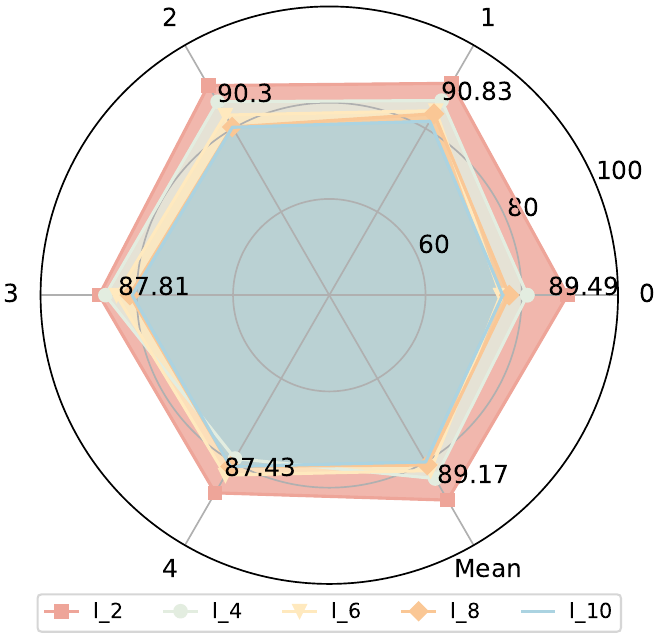}}
    \subfloat[FedTGP-Non-IID]{\includegraphics[width=.24\linewidth]{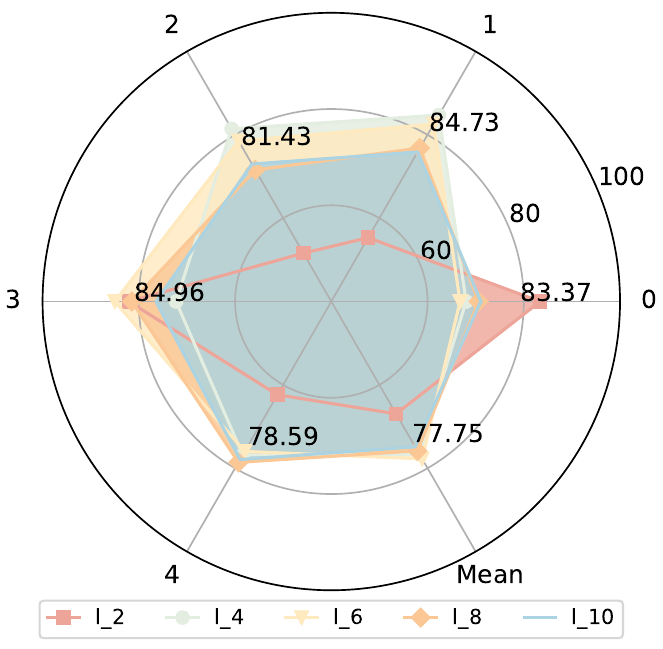}}\\
\caption{Per-client and average test accuracy (\%) on SVHN in the heterogeneous ResNet setting.}
\label{fig:labels_baselines_SVHN}
\end{figure}

\begin{figure}
    \centering
    \subfloat[IID]{\includegraphics[width=.33\linewidth]{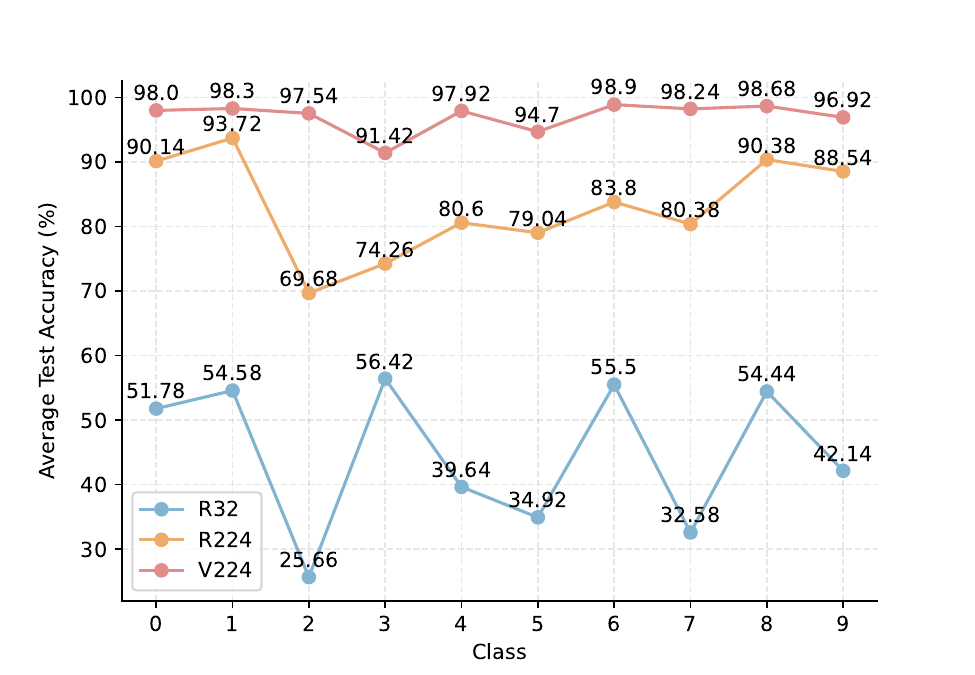}}
    \subfloat[Dir]{\includegraphics[width=.33\linewidth]{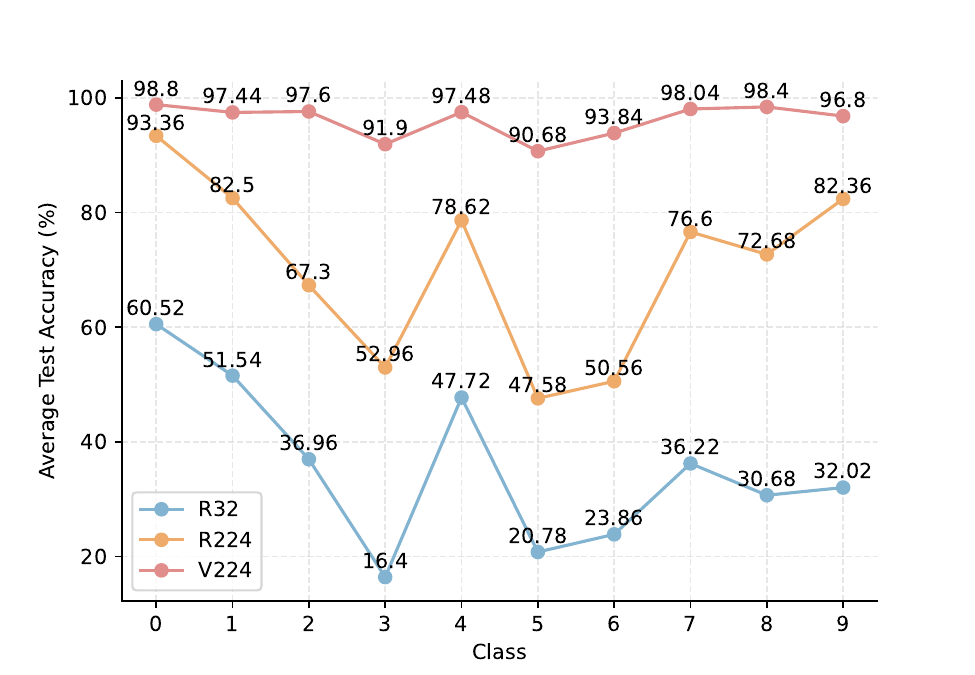}}
    \subfloat[Non-IID]{\includegraphics[width=.33\linewidth]{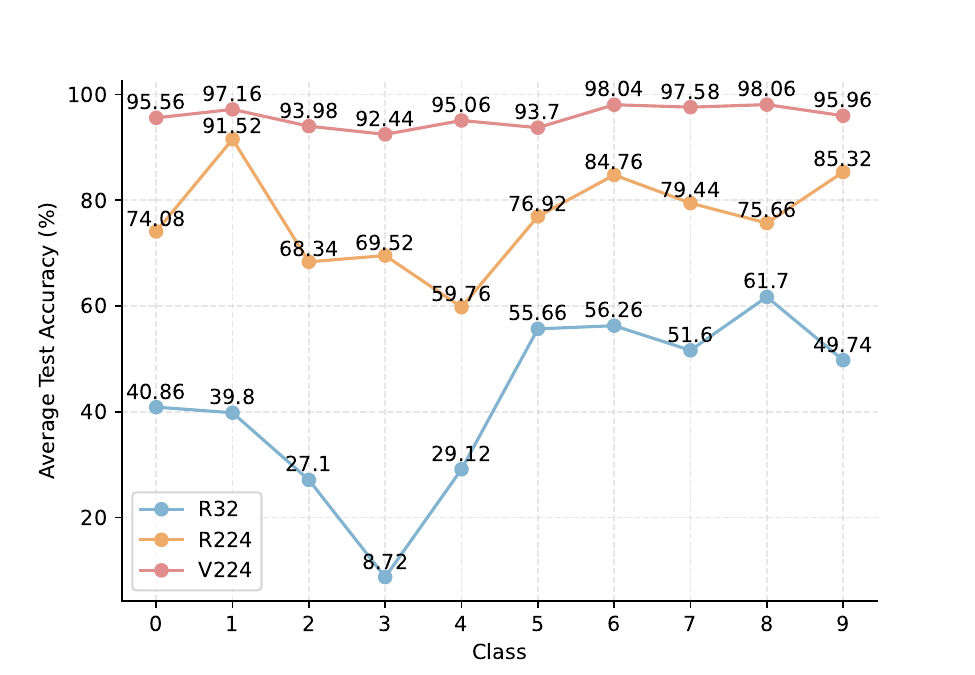}}
\caption{The average test accuracy of each class on the CIFAR10 dataset with FedHPL. `R' and `V' denote the heterogeneous ResNet model and heterogeneous model (ViT) setting. The latter digit is the size of an input image.}
\label{fig:res_vit_local_our}
\end{figure}

\newpage
\subsection{Effect of the training style of pre-trained backbones}
\label{appendix:backbone_mocov3}
Next, we turn our attention to the pre-trained backbone and explore the effect of the pre-training style of foundation models on model performance in FedHPL.
In addition to the pre-trained parameters which are trained with supervised learning, we also initialize the backbone with self-supervised pre-training on ImageNet1k without labels.
Notably, if there is no global logit for a certain label (\ie all clients predict wrong), client $k$ can replace $\tilde{p}_{k,c}$ with the local logit $p_k^i$.
This situation is because self-supervised pre-trained model parameters need more rounds and training time to adapt downstream tasks. 
So, in the initial global epochs, some labels may not be correctly classified, especially for the datasets with a lot of classes (\eg CIFAR100) and limited local samples.
It can be seen from Table~\ref{tab:moco-v3} that the average test accuracy of FedHPL with the pre-trained model by self-supervised learning is worse than the accuracy of FedHPL with the pre-trained model by supervised learning (as shown in Table~\ref{tab:homo_model} and Table~\ref{tab:hete_model}).
However, the test accuracy in FedHPL with the self-supervised pre-trained models is still better than other baselines on the CIFAR10 and CIFAR100 datasets.
Because pFedPG uses the pre-trained backbone trained by supervised learning, the performance is slightly lower than it.
Furthermore, the average test accuracy on the SVHN dataset is not as ideal as in supervised learning.
But it still represents a comparable performance to other algorithms.

\begin{table}
\caption{The average test accuracy (\%) in FedHPL with the ViT backbone pre-trained by a self-supervised learning method: MoCo-v3~\cite{MoCo-v3}. We perform 50 global rounds with one local epoch over all datasets for better adapt downstream tasks. Model setting can refer to Table~\ref{AppendixmodelSetting}.}
\label{tab:moco-v3}
\centering
\resizebox{0.99\textwidth}{!}{
    \begin{tabular}{c|ccc|ccc|ccc} 
    \hline
    \multirow{2}{*}{Model Setting} & \multicolumn{3}{c|}{CIFAR10} & \multicolumn{3}{c|}{CIFAR100} & \multicolumn{3}{c}{SVHN}  \\ 
    \cline{2-10}
                         & IID   & Dir   & Non-IID       & IID   & Dir   & Non-IID        & IID   & Dir   & Non-IID    \\ 
    \hline
    Homogeneous Model    & 96.50 & 91.56 & 93.94        & 80.22 & 62.37 & 61.19         & 91.73 & 88.22 & 85.94     \\
    Heterogeneous Model  & 93.93 & 87.71 & 89.70        & 71.63 & 57.34 & 56.64         & 90.57 & 87.11 & 85.26\\
    \hline
    \end{tabular}
}
\end{table}

\subsection{Details of ablation study and analysis}
\label{appendix_analysis_details}

\subsubsection{Effect of prompt length and insertion position}
\label{Appendix_prompt}
From the detailed comparison of prompt length and insertion position presented in Figure~\ref{fig:ablation_prompt_length_insert}, we can see that the model performance in VPT-deep is evidently higher than of VPT-shallow with less accuracy variance.
We also notice that the average model accuracy is robust in VPT-deep and we choose VPT-deep with $n=3$ prompts in each backbone layer for faster training, less trainable parameters, and better performance.

\begin{figure}[htbp]
    \centering
    \subfloat[VPT-shallow]{\includegraphics[width=.49\linewidth]{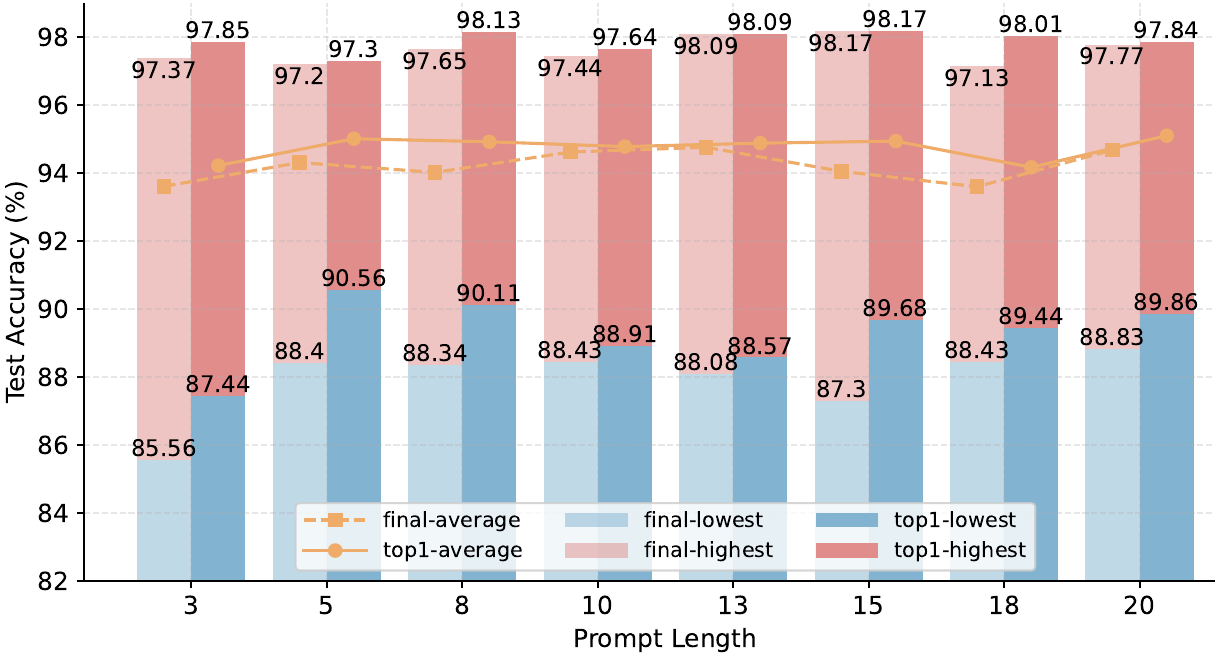}}
    \subfloat[VPT-deep]{\includegraphics[width=.49\linewidth]{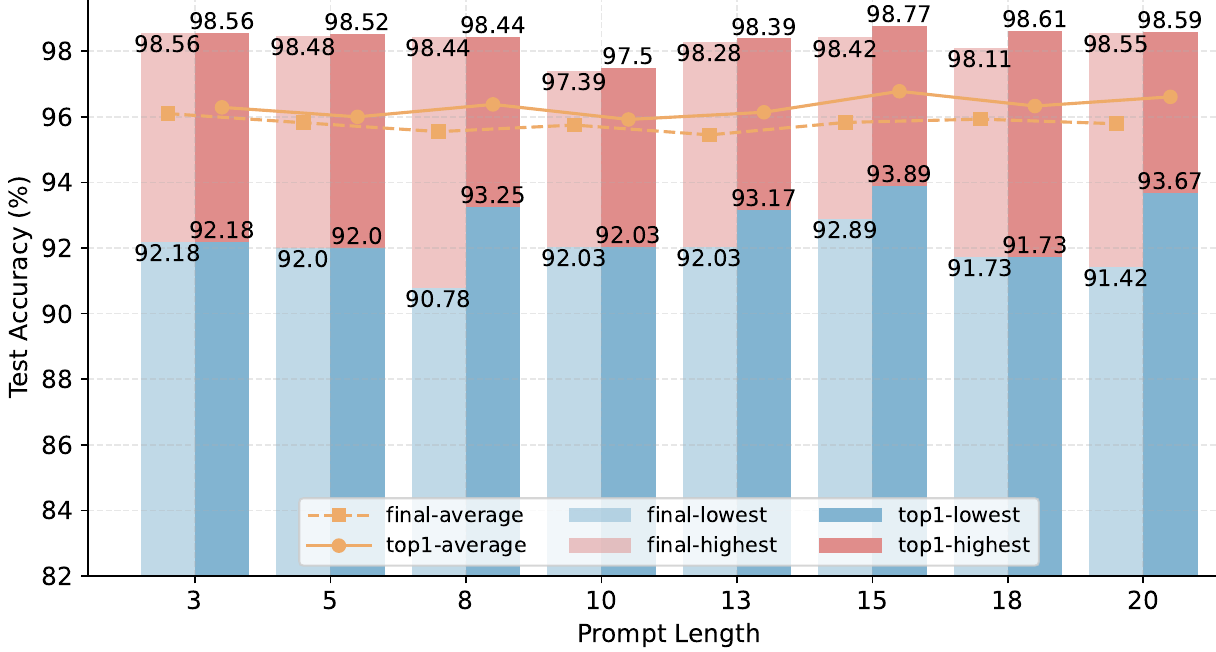}}
\caption{Ablation study on prompt length and insertion position on the CIFAR10 dataset in the \textbf{Dir} data and heterogeneous model (ViT) setting. `A-B' represents the epoch (final: the test accuracy in the last epoch; top1: the best test accuracy) and client accuracy (lowest: the lowest test accuracy of clients; average: the average test accuracy among all clients; highest: the highest test accuracy of clients). The left bar represents `final' and the right bar represents `top1'.}
\label{fig:ablation_prompt_length_insert}
\end{figure}

\subsubsection{Sensitivity to the number of involved training samples}
We next explore the sensitivity to the percentage of samples on the CIFAR10 dataset in the \textit{IID} data and \textit{homogeneous model} setting for clear comparison without other factors' influence.
The percentage of samples can reflect the effect of the number of samples $|\mathcal{D}_k|$ on the model performance.
We test the model accuracy of FedHPL with only local prompt tuning and no collaborative learning (\ie no global logit distillation).
Then we show the results over different percentages of involved local training samples in Figure~\ref{fig:ablation_theorem}.
We also compare the results with FedHPL over full training with local prompt tuning and global logit distillation.
The lowest test accuracy, average test accuracy, and the highest test accuracy of clients are shown in Figure~\ref{fig:ablation_theorem}.

\subsubsection{Necessity of weighted aggregation}
Furthermore, due to the fact that global logits are aggregated of local logits and further guiding the local training. 
Then clients generate the next global round of logits with the guidance of weighted logits.
The interaction inevitably causes slightly training oscillations and we agree that it will be an interesting future work to investigate that how to alleviate the unstable training over the situation.

\subsubsection{Effect of data heterogeneity}
In addition to the quantity of local samples, we also investigate the effect of data heterogeneity on model performance.
Especially, Dir($\alpha$) in the \textit{Non-IID} data setting has no overlap samples and $\alpha$ can control the data heterogeneity.
A smaller $\alpha$ corresponds to a more imbalanced data distribution and increases the data heterogeneity.
We set the minimum sample percentage for clients, which is 1\% in CIFAR10, 10\% in CIFAR100 (because the quantity of per-class samples is small), and 0.5\% in SVHN.
As shown in Table~\ref{tab:mocov3-alpha}, the convergence speed in FedHPL with the self-supervised pre-trained backbone is slow and the model performance has a degradation compared to supervised learning.
Moreover, the performance over the CIFAR10 and CIFAR100 datasets only has a slight degradation as the degree of data heterogeneity increases (\ie $\alpha$ reduces), further demonstrating the effectiveness of FedHPL in addressing data heterogeneity.
However, the performance in the SVHN dataset with $\alpha=0.1$ is not ideal because the number of local samples is small (\eg only 27 samples in label 7 across 4 clients).
In future work, we will handle the issue of model performance degradation caused by too few local samples.

\begin{table}[ht]
\caption{The average test accuracy (\%) among clients with different backbones trained by supervised learning and self-supervised learning (MoCo-v3). We run 10 global rounds on CIFAR10 and 15 global rounds on the CIFAR100 and SVHN datasets with the `Supervised' backbone. We run the same epochs with MoCo-v3 and denoted as MoCo-v3 (1) whereas MoCo-v3 (2) represents the 50 global training rounds. The local epoch is 1. The model setting can refer to Table~\ref{AppendixmodelSetting}.}
\label{tab:mocov3-alpha}
\centering
\resizebox{0.98\textwidth}{!}{
    \begin{tabular}{c|ccc|ccc} 
    \hline
    \multirow{2}{*}{}         & \multicolumn{3}{c|}{Homogeneous Model}   & \multicolumn{3}{c}{Heterogeneous Model}   \\ 
    \cline{2-7}
                              & MoCo-v3 (1)  & MoCo-v3 (2)  & Supervised & MoCo-v3 (1)  & MoCo-v3 (2)  & Supervised \\ 
    \hline
    \multicolumn{7}{l}{{\cellcolor[rgb]{0.902,0.902,0.902}}\textbf{CIFAR10 Dataset}} \\
    $\alpha$=0.1              & 45.73        & 88.97        & 91.43      & 65.08&              78.05&             92.48\\
    $\alpha$=0.5              & 84.14        & 93.28        & 96.13      & 75.40&              84.85& 95.51\\
    $\alpha$=1.0              & 89.98        & 94.38        & 96.90      & 84.22& 89.12& 96.23      \\
    \hline
    \multicolumn{7}{l}{{\cellcolor[rgb]{0.902,0.902,0.902}}\textbf{CIFAR100 Dataset}} \\
    $\alpha$=0.1&             27.00&             60.27&            86.01& 34.92& 52.91& 84.65\\
    $\alpha$=0.5&             35.88&             67.24&            86.92& 40.83& 58.91& 85.61       \\
    $\alpha$=1.0&             39.51&             69.21&            88.27& 43.37& 60.41& 86.66       \\
    \hline
    \multicolumn{7}{l}{{\cellcolor[rgb]{0.902,0.902,0.902}}\textbf{SVHN Dataset}} \\
    $\alpha$=0.1&             42.35&             52.81&            72.91& 45.46& 53.73& 68.87\\
    $\alpha$=0.5&             71.75&             81.26&            88.58& 73.46& 79.06& 87.20\\
    $\alpha$=1.0&             79.85&             85.94&            91.68& 82.74& 85.26& 90.32\\
    \hline
    \end{tabular}
}
\end{table}

\subsubsection{Effect of hyper-parameter}
\label{Appendix_hyperparameter}
\textbf{Hyper-parameter in the loss function.}
Several factors can affect the model performance and we further inspect the sensitivity of FedHPL to the hyper-parameters of the loss function.
We select $\gamma$ from [0, 2] and $\mathcal{T}$ from [1.5, 5] on CIFAR10 dataset in the \textit{Dir} data and \textit{heterogeneous model} (ViT) setting over VPT-shallow and VPT-deep.
It can be observed from Figure~\ref{fig:hyper_parameter_compare} that the highest test accuracy is robust on hyper-parameters of the loss function while $\gamma$ and $\mathcal{T}$ have a quite obvious influence on the lowest test accuracy.
The lowest client accuracy in the final epoch has a better performance when $\mathcal{T}$=3.5 or 4.5 over VPT-shallow and VPT-deep.
The ideal performance on $\gamma$ is mainly located in the interval of [1.5, 2.0] over VPT-shallow while it is located in [0.5, 1.0] over VPT-deep. 
Above all, the appropriate hyper-parameter is a specific task for different situations in FedHPL and a more generalized pre-trained model is less affected by hyper-parameters.
We also find that the model has a more stable training and more accurate estimation over VPT-deep than VPT-shallow.
Furthermore, we find that the model performance is more stable when batch size $bs=16$.
For example, the accuracy is 91.86\% ($bs=16$) compared to 90.75\% ($bs=32$) on the SVHN dataset in the \textit{Dir} data and \textit{heterogeneous model} (ViT) setting and 97.89\% ($bs=16$) compared to 95.50\% ($bs=32$) on the CIFAR10 dataset in the \textit{IID} data and \textit{homogeneous model} (ViT) setting.

\textbf{Local epochs.}
We next investigate the number of local epochs on the model performance in Table~\ref{tab:local_epoch}.
It can be observed that the test accuracy is robust to the local epoch in the \textit{heterogeneous model} setting.
However, in the \textit{homogeneous model} setting, the performance has an increase with the number of local epochs, especially for the lowest test accuracy among clients.

\newpage

\begin{figure}[ht]
    \centering
    \subfloat[$\gamma$ in VPT-shallow ($\mathcal{T}$=3.0)]{\includegraphics[width=.5\linewidth]{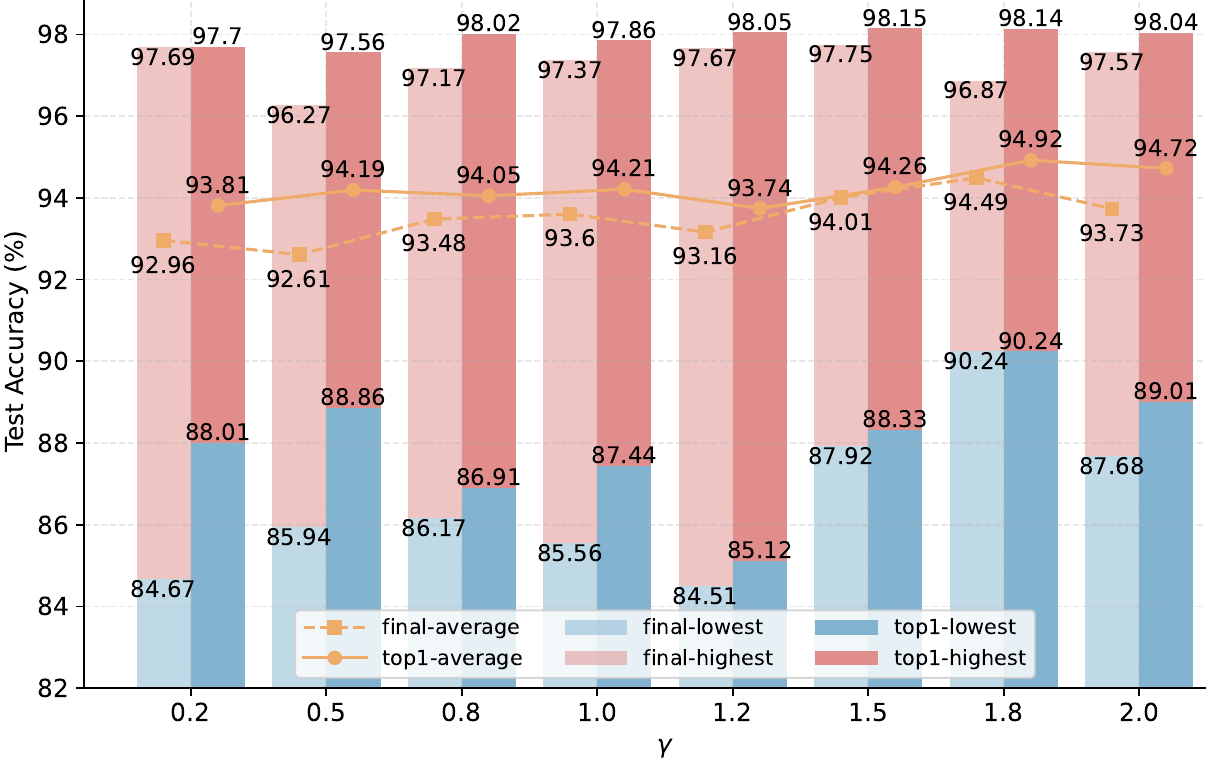}}
    \subfloat[$\gamma$ in VPT-deep ($\mathcal{T}$=4.5)]{\includegraphics[width=.5\linewidth]{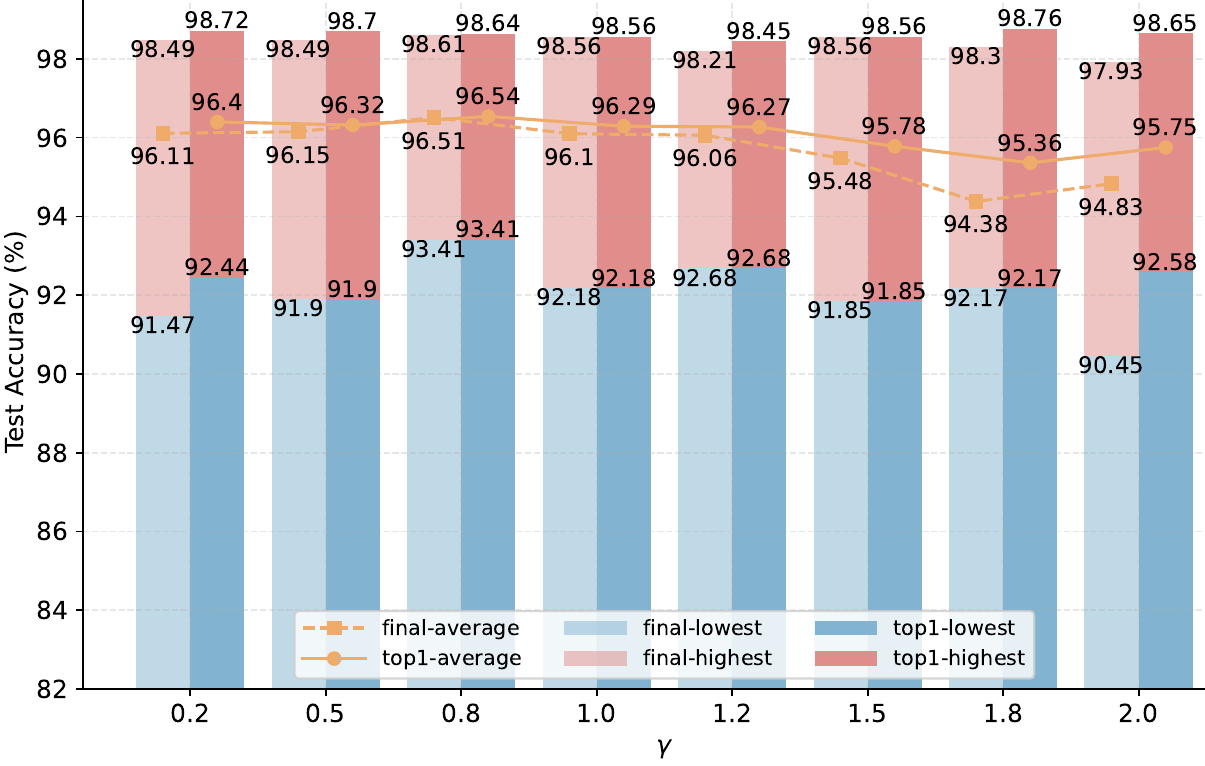}}\\
    \subfloat[$\mathcal{T}$ in VPT-shallow ($\gamma$=1.0)]{\includegraphics[width=.5\linewidth]{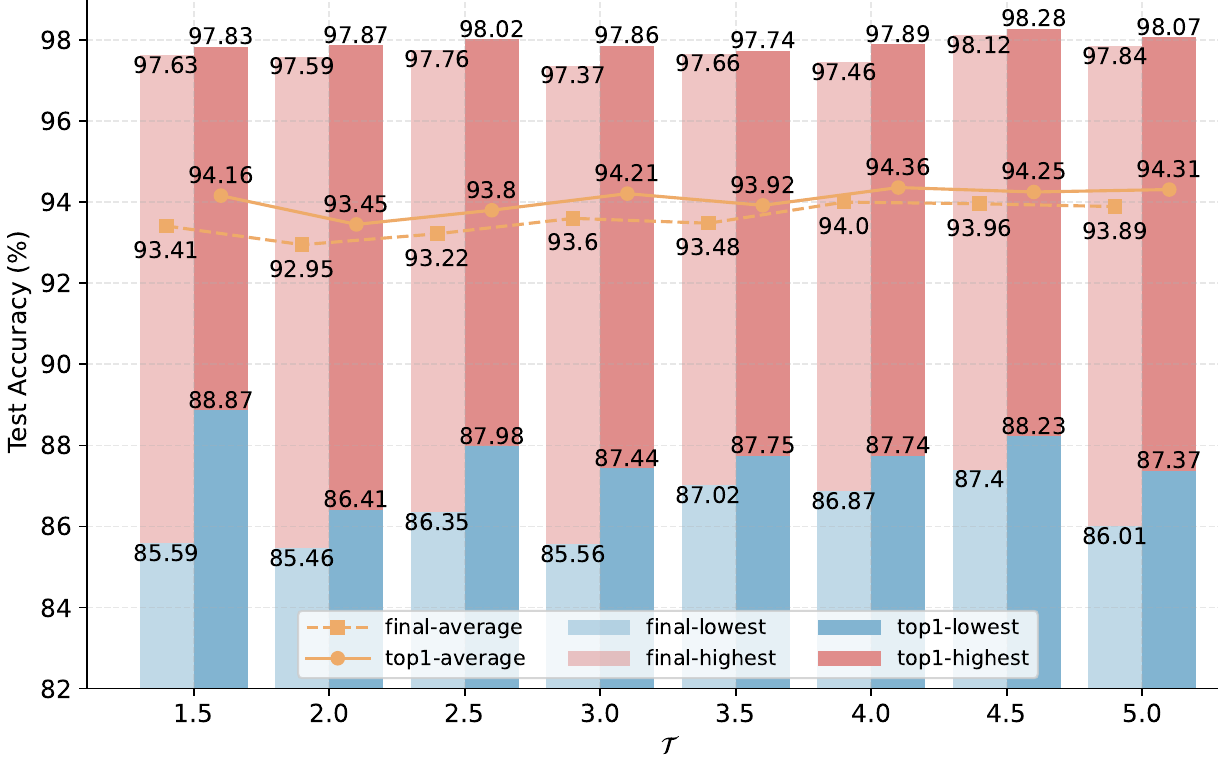}}
    \subfloat[$\mathcal{T}$ in VPT-deep ($\gamma$=1.0)]{\includegraphics[width=.5\linewidth]{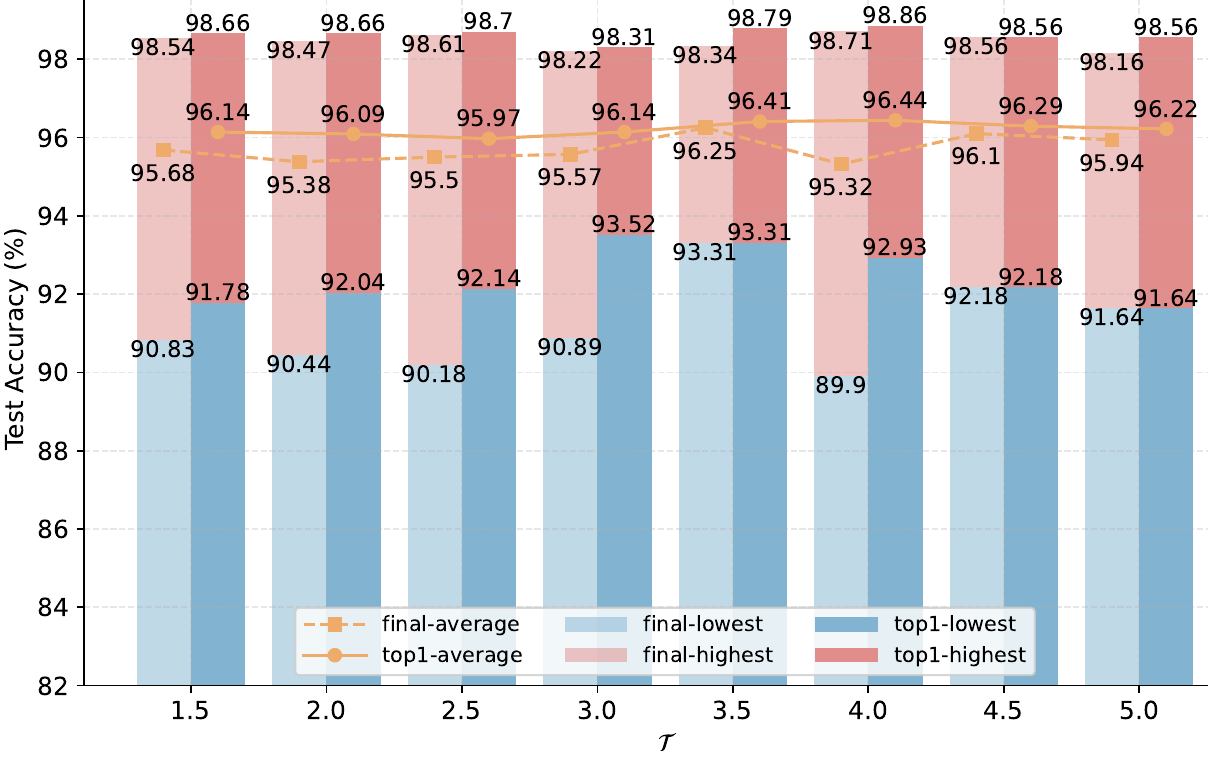}}\\
\caption{Ablation study on different hyper-parameters of the loss function on the CIFAR10 dataset in the Dir data and heterogeneous model (ViT) setting. We compare the test performance of clients on the lowest, average, and highest accuracy. We have explained the specific meaning in Figure~\ref{fig:ablation_prompt_length_insert}.}
\label{fig:hyper_parameter_compare}
\end{figure}

\begin{table}[hb]
\caption{Test accuracy (\%) on the CIFAR10 dataset in the Non-IID ($\alpha=0.1$) data setting. We show the test accuracy of the lowest client and highest client with the average test accuracy among clients over different local epochs $T_c$ with 10 global rounds.}
\label{tab:local_epoch}
\centering
    \begin{tabular}{c|ccc|ccc} 
    \hline
    \multirow{2}{*}{} &\multicolumn{3}{c|}{Homogeneous Model}   & \multicolumn{3}{c}{Heterogeneous Model}  \\
    \cline{2-7}
               & Lowest & Average & Highest  & Lowest & Average & Highest \\ 
    \hline         
    $T_c=1$    & 84.94  & 91.43   & 95.15    & 90.24  & 92.48   & 97.53    \\
    $T_c=2$    & 87.92  & 92.75   & 95.34    & 88.98  & 92.24   & 97.27    \\
    $T_c=3$    & 92.89  & 94.24   & 95.02    & 89.00  & 92.48   & 97.33    \\
    $T_c=4$    & 91.61  & 93.62   & 94.72    & 88.17  & 92.27   & 96.67   \\
    $T_c=5$    & 92.49  & 93.34   & 94.22    & 86.63  & 91.73   & 96.63    \\
    \hline
    \end{tabular}
\end{table}

\end{document}